\documentclass[10pt]{article} 
\usepackage[accepted]{tmlr}

\usepackage[utf8]{inputenc} 
\usepackage[T1]{fontenc}    
\usepackage[pagebackref=true,breaklinks=true,colorlinks,citecolor=blue,bookmarks=false]{hyperref}
\usepackage{url}            
\usepackage{booktabs}       
\usepackage{amsfonts}       
\usepackage{nicefrac}       
\usepackage{microtype}      
\usepackage{floatrow}

\floatsetup[table]{capposition=top}


\usepackage{amsmath,amsfonts,bm}









\def\eqref#1{equation~\ref{#1}}









\def\1{\bm{1}}










\DeclareMathAlphabet{\mathsfit}{\encodingdefault}{\sfdefault}{m}{sl}
\SetMathAlphabet{\mathsfit}{bold}{\encodingdefault}{\sfdefault}{bx}{n}













\DeclareMathOperator*{\argmin}{arg\,min}


%

\usepackage{graphicx}
\usepackage{amsmath,amssymb,amsthm,amsfonts}

\usepackage{cleveref}

\usepackage{xcolor, colortbl}

\usepackage{multirow, makecell}
\usepackage{longtable}
\usepackage{breakcites}
\usepackage{float}
\usepackage{soul}
\usepackage{mathtools,mathrsfs,bm}
\usepackage{multirow}
\usepackage[export]{adjustbox} 
\usepackage[position=top]{subcaption}
\usepackage{enumitem}
\usepackage{algorithm}
\usepackage{algorithmic}
\usepackage{wrapfig}
\usepackage{relsize}
\usepackage{tikz}  
\definecolor{lightlightgray}{RGB}{215, 215, 215}
\definecolor{lightblue}{RGB}{199, 245, 255}

\newcommand{\topthree}[1]{\cellcolor{lightlightgray} $#1$}
\newcommand{\best}[1]{\cellcolor{lightblue}$\underline{\mathbf{#1}}$}
\newcommand{\gray}[1]{\cellcolor{lightlightgray} #1}

\newcommand{\hlblue}[1]{\colorbox{lightblue}{\ul{\textbf{#1}}}}
\newcommand{\hlgray}[1]{\colorbox{lightlightgray}{#1}}

\newcommand{\beginsupplement}{%
        \setcounter{table}{0}
        \renewcommand{\thetable}{S\arabic{table}}%
        \setcounter{figure}{0}
        \renewcommand{\thefigure}{S\arabic{figure}}%
     }

\newcommand{\Wv}{W_v}
\newcommand{\Ww}{W_w}
\newcommand{\Wg}{W_g}
\newcommand{\Wall}{W_\mathrm{all}}
\newcommand{\Wwhat}{\widehat{W}_w}
\newcommand{\Wallhat}{\widehat{W}_\mathrm{all}}

\newcommand{\Wfamily}{\mathcal{W}}

\setlist[itemize]{align=parleft,left=0pt..1em}
\setlength{\textfloatsep}{0.3cm}


\newcommand{\mypara}[1]{\noindent\textbf{#1}}

\newtheorem{theorem}{Theorem}
\newtheorem{definition}{Definition}
\newtheorem{lemma}{Lemma}
\newtheorem{corollary}{Corollary}
\newtheorem{assumption}{Assumption}

\newcommand{\AWARE}{\textsf{AWARE}}



\title{Attentive Walk-Aggregating Graph Neural Networks}

%


\author{\name Mehmet F. Demirel \email demirel@cs.wisc.edu \\
       \addr Department of Computer Sciences\\
       University of Wisconsin-Madison
       \AND
       \name Shengchao Liu \email liusheng@mila.quebec\\
       \addr Quebec AI Institute (Mila)
       \AND
       \name Siddhant Garg \email sidgarg@amazon.com\\
       \addr Amazon Alexa AI
       \AND
       \name Zhenmei Shi 
       \email zhmeishi@cs.wisc.edu \\
       \addr Department of Computer Sciences\\
       University of Wisconsin-Madison
       \AND
       \name Yingyu Liang \email yliang@cs.wisc.edu \\
       \addr Department of Computer Sciences\\
       University of Wisconsin-Madison
}


\begin{document}

\maketitle

\begin{abstract}
    Graph neural networks (GNNs) have been shown to possess strong representation power, which can be exploited for downstream prediction tasks on graph-structured data, such as molecules and social networks. They typically learn representations by aggregating information from the $K$-hop neighborhood of individual vertices or from the enumerated walks in the graph. Prior studies have demonstrated the effectiveness of incorporating weighting schemes into GNNs; however, this has been primarily limited to $K$-hop neighborhood GNNs so far. In this paper, we aim to design an algorithm incorporating weighting schemes into walk-aggregating GNNs and analyze their effect. We propose a novel GNN model, called {\AWARE}, that aggregates information about the walks in the graph using attention schemes. This leads to an end-to-end supervised learning method for graph-level prediction tasks in the standard setting where the input is the adjacency and vertex information of a graph, and the output is a predicted label for the graph. We then perform theoretical, empirical, and interpretability analyses of {\AWARE}. Our theoretical analysis in a simplified setting identifies successful conditions for provable guarantees, demonstrating how the graph information is encoded in the representation, and how the weighting schemes in {\AWARE} affect the representation and learning performance. Our experiments demonstrate the strong performance of {\AWARE} in graph-level prediction tasks in the standard setting in the domains of molecular property prediction and social networks. Lastly, our interpretation study illustrates that {\AWARE} can successfully capture the important substructures of the input graph. The code is available on \href{https://github.com/mehmetfdemirel/aware}{GitHub}.
\end{abstract}

\section{Introduction}
The increasing prominence of machine learning applications for graph-structured data has lead to the popularity of graph neural networks (GNNs) in several domains, such as social networks~\citep{kipf2016semi}, molecular property prediction~\citep{duvenaud2015convolutional}, and recommendation systems~\citep{ying2018graph}. Several empirical and theoretical studies (e.g.,~\citep{duvenaud2015convolutional,kipf2016semi,xu2018how,dehmamy2019understanding}) have shown that GNNs can achieve strong representation power by constructing representations encoding rich information about the graph. 

A popular approach of learning GNNs involves aggregating information from the $K$-hop neighborhood of individual vertices in the graph (e.g.,~\citep{kipf2016semi, gilmer2017neural, xu2018how}). An alternative approach for learning graph representations is via \emph{walk aggregation} (e.g.,~\citep{vishwanathan2010graph,shervashidze2011weisfeiler,perozzi2014deepwalk}) that enumerates and encodes information of the walks in the graph. Existing studies have shown that walk-aggregating GNNs can achieve strong empirical performance with concrete analysis of the encoded graph information~\citep{liu2019n}. The results show that the approach can encode important information about the walks in the graph. This can potentially allow emphasizing and aggregating important walks to improve the quality of the representation for downstream prediction tasks.

Weighting important information has been a popular strategy in recent studies on representation learning. It is important to note that the strong representation power of GNNs may not always translate to learning the best representation amongst all possible ones for the downstream prediction tasks. While the strong representation power allows encoding all kinds of information, a subset of the encoded information that is not relevant for prediction may interfere or even overwhelm the information useful for prediction, leading to sub-optimal performance. A particularly attractive approach to address this challenge is by incorporating weighting schemes into GNNs, which is inspired by the strong empirical performance of attention mechanisms~\citep{bahdanau2014neural,luong2015effective,pmlr-v37-xuc15,vaswani2017attention,shankar-etal-2018-surprisingly, 10.5555/3327546.3327640} for natural language processing (e.g.,~\citep{devlin-etal-2019-bert}) and computer vision tasks (e.g.,~\citep{dosovitskiy2020image}). In the domain of graph representation learning, recent studies~\citep{gilmer2017neural, velivckovic2017graph, yun2019graph, maziarka2020molecule, rong2020grover} have used the attention mechanism to improve the empirical performance of GNNs by learning to select important information and removing the irrelevant ones. These studies, however, have only explored using attention schemes for $K$-hop neighborhood GNNs, and there has been no corresponding work exploring this idea for walk-aggregating GNNs. 

In this paper, we propose to theoretically and empirically examine the effect of incorporating weighting schemes into walk-aggregating GNNs. To this end, we propose a simple, interpretable, and end-to-end supervised GNN model, called {\AWARE} (\textbf{A}ttentive \textbf{W}alk-\textbf{A}ggregating G\textbf{R}aph Neural N\textbf{E}twork), for graph-level prediction in the standard setting where the input is the adjacency and vertex information of a graph, and the output is a predicted label for the graph. {\AWARE} aggregates the walk information by weighting schemes at distinct levels (vertex-, walk-, and graph-level). At the vertex (or graph) level, the model weights different directions in the vertex (graph, respectively) embedding space to emphasize important feature in the embedding space. At the walk level, it weights the embeddings for different walks in the graph according to the embeddings of the vertices along the walk. By virtue of the incorporated weighting schemes at these different levels, {\AWARE} can emphasize the information important for prediction while diminishing the irrelevant ones---leading to representations that can improve learning performance. We perform an extensive three-fold analysis of {\AWARE} as summarized below:

\begin{itemize}
    \item \textbf{Theoretical Analysis:} We analyze {\AWARE} in the simplified setting when the weights depend only on the latent vertex representations, identifying conditions when the weighting schemes improve learning. Prior weighted GNNs (e.g.,~\citep{velivckovic2017graph, maziarka2020molecule}) do not enjoy similar theoretical guarantees, making this the first provable guarantee on the learning performance of weighted GNNs to the best of our knowledge. Furthermore, current understanding of weighted GNNs typically focuses only on the positive effect of weighting on their representation power. In contrast, we also explore the limitation scenarios when the weighting does not translate to stronger learning power. 
    \item \textbf{Empirical Analysis:} We empirically evaluate the performance of {\AWARE} on graph-level prediction tasks from two domains: molecular property prediction (61 tasks from 11 popular benchmarks) and social networks (4 tasks). For both domains, {\AWARE} overall outperforms both traditional graph representation methods as well as recent GNNs (including the ones that use attention mechanisms) in the standard setting.
    \item \textbf{Interpretability Analysis:} We perform an interpretation study to support our design for {\AWARE} as well as the theoretical insights obtained about the weighting schemes. We provide a visual illustration that {\AWARE} can extract the important sub-graphs for the prediction tasks. Furthermore, we show that the weighting scheme in {\AWARE} can align well with the downstream predictors.
\end{itemize}

\section{Related Work}
\mypara{Graph neural networks (GNNs).}
GNNs have been the predominant method for capturing information of graph data~\citep{li2015gated, duvenaud2015convolutional,kipf2016semi, kearnes2016molecular, gilmer2017neural}. A majority of GNN methods build graph representations by aggregating information from the $K$-hop neighborhood of individual vertices~\citep{duvenaud2015convolutional,li2015gated,battaglia2016interaction,kearnes2016molecular,xu2018how,yang2019analyzing}. This is achieved by maintaining a latent representation for every vertex, and iteratively updating it to capture information from neighboring vertices that are $K$-hops away. Another popular approach is enumerating the walks in the graph and using their information~\citep{vishwanathan2010graph, shervashidze2011weisfeiler, perozzi2014deepwalk}. \citet{liu2019n} use the motivation of aggregating information from the walks by proposing a GNN model that can achieve strong empirical performance along with concrete theoretical analysis.

Theoretical studies have shown that GNNs have strong representation power~\citep{xu2018how,dehmamy2019understanding,liu2019n}, and have inspired new disciplines for improving their representations further~\citep{morris2019weisfeiler,azizian2021expressive}. 
To this extent, while the \textit{standard setting} of GNNs has only vertex features and the adjacency information as inputs (see Section~\ref{sec:prelim}), many recent GNNs~\citep{kearnes2016molecular, gilmer2017neural, coors2018spherenet, yang2019analyzing, klicpera2020directional,wang2020multi} exploit extra information, such as edge features and 3D information, in order to gain stronger performance. In this work; however, we focus on analyzing the effect of applying attention schemes for representation learning, and thus want to perform this analysis in the standard setting.

\mypara{GNNs with attention.}
The empirical effectiveness of attention mechanisms has been demonstrated on language~\citep{pmlr-v48-martins16, devlin-etal-2019-bert, JMLR:v21:20-074} and vision tasks~\citep{ramachandran2019stand, dosovitskiy2020image, zhao2020exploring}.
This has also been extended to the $K$-hop GNN research line where the main motivation is to dynamically learn a weighting scheme at various granularities, e.g., vertex-, edge- and graph-level. Graph Attention Network (GAT)~\citep{velivckovic2017graph} and Molecule Attention Transformer (MAT)~\citep{maziarka2020molecule} utilize the attention idea in their message passing functions.
GTransformer~\citep{rong2020grover} applies an attention mechanism at both vertex- and edge-levels to better capture the structural information in molecules.
ENN-S2S~\citep{gilmer2017neural} adopts an attention module~\citep{vinyals2015order} as a readout function. However, all such studies are based on $K$-hop GNNs, and to the best of our knowledge, our work is the \emph{first} to bring attention schemes into walk-aggregation GNNs.

\section{Preliminaries} \label{sec:prelim}
\mypara{Graph data.} 
We assume an input graph $\mathcal{G} {=} (\mathcal{V}, \mathcal{A})$ consisting of vertex attributes $\mathcal{V}$ and an adjacency matrix $\mathcal{A}$.
The vertices are indexed by $[m]{=}\{1, \dots, m\}$. Suppose each vertex has $C$ discrete-valued attributes,%
\footnote{Note that the vertex attributes are discrete-valued in general. If there are numeric attributes, they can simply be padded to the learned embedding for the other attributes.}
and the $j^{th}$ attribute takes values in a set of size $k_j$.
Let $h^j_i {\in} \{0,1\}^{k_j}$ be the one-hot encoding of the $j^{th}$ attribute for vertex $i$. The vertex $i$ is represented as the concatenation of $C$ attributes, i.e., $h_i {=} [h^1_i; \hdots; h^{C}_i] \in \{0,1\}^K$ where $K {=} \sum_{j=1}^{C} k_j$. Then $\mathcal{V}$ is the set $\{h_i\}_{i=1}^m$.
 We denote the adjacency matrix by $\mathcal{A} {\in} \{0, 1\}^{m\times m}$, where $\mathcal{A}_{i,j}{=}1$ indicates that vertices $i$ and $j$ are connected. We denote the set containing the neighbors of vertex $i$ by $\mathcal{N}(i) {=} \{j{\in} [m]: \mathcal{A}_{i,j}{=}1\}$.

Although many GNNs exploit extra input information like edge attributes and 3D information, our primary focus is on the effect of weighting schemes. Hence, we perform our analysis in the \emph{standard setting} that only has the vertex attributes and the adjacency matrix of the graph as the input. 

\mypara{Description of Vertex Attributes.} 
For molecular graphs, vertices and edges correspond to atoms and bonds, respectively.
Each vertex $i \in [m]$ will then possess useful attribute information, such as the atom symbol and whether the atom is acceptor or donor.
Such vertex attributes are folded into a vertex attribute matrix $\mathcal{R} \in \{0, 1\}^{m \times C}$ where $C$ is the number of attributes on each vertex $i \in [m]$. Here is a concrete example:
\begin{align*}
    & \mathcal{R}_{i, \cdot} = [\mathcal{R}_{i,1}, \mathcal{R}_{i,2}, \hdots, \mathcal{R}_{i,7}, \mathcal{R}_{i,8}], 
    \nonumber\\
    & \text{atom symbol } \mathcal{R}_{i,1} \in \{ \text{C}, \text{Cl}, \text{I}, \text{F}, \hdots \},
    \nonumber\\
    & \text{atom degree } 
    \mathcal{R}_{i,2} \in \{
    0, 1, 2, 3, 4, 5, 6 \},
    \nonumber\\
    & \hdots
    \nonumber\\
    & \text{is acceptor } \mathcal{R}_{i,7} \in \{0,1\},
    \nonumber\\
    & \text{is donor } \mathcal{R}_{i,8} \in \{0,1\}.
\end{align*}
The matrix $\mathcal{R}$ can then be translated into the vertex attribute vector set $\mathcal{V}$ using one-hot vectors for the attributes.

For social network graphs, vertices and edges correspond to entities (actors, online posts) and the connections between them, respectively.
For the social network graphs in our experiments, we follow existing work and utilize the vertex degree as the vertex attribute (i.e., $C=1$).

\mypara{Vertex embedding.}
We define an $r$-dimensional embedding of vertex $i$ by:
\begin{align}
    \label{eq:vertex_embedding_exp}
    f_i = W h_i,
\end{align}
where $W {=} [W^1; \hdots; W^{C}] {\in} \mathbb{R}^{r \times K}$ and $W^j {\in} \mathbb{R}^{r \times k_j}$ is the embedding matrix for each attribute $j\in[C]$. We denote the embedding corresponding to $\mathcal{V}$ by $F=[f_1; \ldots; f_m]$.

\mypara{Walk aggregation.}
Unlike the typical approach of aggregating $K$-hop neighborhood information, walk aggregation enumerates the walks in the graph, and uses their information (e.g.,~\citep{vishwanathan2010graph,perozzi2014deepwalk}).
\citet{liu2019n} utilize the walk-aggregation strategy by proposing the N-gram graph GNN, which can achieve strong empirical performance, allow for fine-grained theoretical analysis, and potentially alleviate the over-squashing problem in $K$-hop GNNs. 
The N-gram graph views the graph as a Bag-of-Walks.
It learns the vertex embeddings $F$ in~\Cref{eq:vertex_embedding_exp} in a self-supervised manner.
It then enumerates and embeds walks in the graph.
The embedding of a particular walk $p$, denoted as $f_p$, is the element-wise product of the embeddings of all vertices along this walk.
The embedding for the $n$-gram walk set (walks of length $n$), denoted as $f_{(n)}$, is the sum of the embeddings of all walks of length $n$. 
Formally, given the vertex embedding $f_i$ for vertex $i$, 
\begin{align} \label{eq:n_gram_walk}
    f_p & = \bigodot_{i \in p} f_i,
    \quad
    f_{(n)} = \sum_{p: \text{n-gram}} f_p, 
\end{align}
where $\bigodot_{i \in p}$ is the element-wise product over all the vertices in the walk $p$, and $\sum_{p: \text{n-gram}} $ is the sum over all the walks $p$ in the $n$-gram walk set.
It has been shown that the method is equivalent to a message-passing GNN described as follows: set $F_{(1)} = F =[f_1; \ldots; f_m]$ and $f_{(1)} = F_{(1)}\mathbf{1} = \sum_{i \in [m]} f_i$, and then for $2\le n \le T$:
\begin{align}
    F_{(n)} = (F_{(n-1)} \mathcal{A}) \odot F_{(1)}, \textrm{~and~} f_{(n)} = F_{(n)} \mathbf{1}, \label{eq:n_gram_wset_embed}
\end{align}
where $\odot$ is the element-wise product and  $\mathbf{1}$ denotes a vector of ones in $\mathbb{R}^{m}$.
The final graph embedding is given by the concatenation of all $f_{(n)}$'s, i.e., $f_{[T]}(G) {=} [ f_{(1)}; \hdots; f_{(T)}]$.

Compared to $K$-hop aggregation strategies, this formulation explicitly allows analyzing representations at different granularities of the graph: vertices, walks, and the entire graph. This provides motivation for capitalizing on the N-gram walk-aggregation strategy for incorporating and analyzing the effect of weighting schemes on walk-aggregation GNNs. The principled design facilitates theoretical analysis of conditions under which the weighting schemes can be beneficial. Thus, in this paper, we analyze the effect of incorporating attention weighting schemes on the N-gram walk-aggregation GNN.

\section{{\AWARE}: Attentive Walk-Aggregating Graph Neural Network} \label{sec:method}
We propose {\AWARE}, an end-to-end fully supervised GNN for learning graph embeddings by aggregating information from walks with learned weighting schemes. Intuitively, not all walks in a graph are equally important for downstream prediction tasks. 
{\AWARE} incorporates an attention mechanism to assign different contributions to individual walks as well as assigns feature weightings at the vertex and graph embedding levels. These weights are learned in a \emph{supervised} fashion for prediction. This enables {\AWARE} to mitigate the shortcomings of its unweighted counterpart~\citep{liu2019n}, which computes graph embeddings in an \emph{unsupervised} manner only using the graph topology.

\begin{wrapfigure}{R}{0.5\linewidth}
    \begin{minipage}{\linewidth}
    \vspace{-2.2em}
    \begin{algorithm}[H]
    \centering
    \caption{{\AWARE} \ ($W, \Wv, \Ww, \Wg$)}
    \label{alg:aware}
    \begin{algorithmic}[1]
        \REQUIRE{Graph $G{=}(\mathcal{V}, \mathcal{A})$, max walk length $T$}
        \STATE {Compute vertex embeddings $F$ by Eqn~(\ref{eq:vertex_embedding_exp})}
        \STATE {$F_{(1)} = \sigma(\Wv F)$}
        \FOR{each $n \in [2, T]$}
            \STATE{Compute $S_{n}$ using Eqn~(\ref{eq:attention_matrix})
            }
            \STATE{$F_{(n)} {=} \Big(F_{(n-1)} (\mathcal{A} \odot S_{n})\Big)  \odot F_{(1)}$
            }
        \ENDFOR
		    \STATE{Set $f_{(n)} := \sigma(\Wg F_{(n)}) \mathbf{1}$ for $1 \le n \le T$}
		    \STATE {Set $f_{[T]}(G) := [ f_{(1)}; \hdots; f_{(T)}]$} 
	    \ENSURE{The graph embedding $f_{[T]}(G)$}
    \end{algorithmic}
    \end{algorithm}
    \vspace{-3em}
    \end{minipage}
  \end{wrapfigure}
At a high level, {\AWARE} first computes vertex embeddings $F$, and initializes a latent vertex representation $F_{(1)}$ by incorporating a feature weighting at the vertex level. It then iteratively updates the latent representation $F_{(n)}$ using attention at the walk level, and then performs a weighted summarization at the graph level to obtain embeddings $f_{(n)}$ for walk sets of length $n$. The $f_{(n)}$'s are concatenated to produce the graph embedding $f_{[T]}(G)$ for the downstream task. We now provide more details.

\mypara{Weighted vertex embedding.} 
Intuitively, some directions in the vertex embedding space are likely to be more important for the downstream prediction task than others. In the extreme case, the prediction task may depend only on a subset of the vertex attributes (corresponding to some directions in the embedding space), while the rest may be inconsequential and hence should be ignored when constructing the graph embedding.
{\AWARE} weights different vertex features using $\Wv \in \mathbb{R}^{r' \times r}$ by computing the initial latent vertex representation $F_{(1)}$ as:
\begin{align}
    F_{(1)} = \sigma(\Wv F), \text{~where~} F \text{ is computed using \Cref{eq:vertex_embedding_exp}}
\end{align}
where $\sigma$ is an activation function, and $r'$ is the dimension of the weighted vertex embedding.

\mypara{Walk attention.} {\AWARE} computes embeddings corresponding to walks of length $n$ in an iterative manner, and updates the latent vertex representations in each iteration using such walk embeddings.
When aggregating the embedding of a walk, each vertex in the walk is bound to have a different contribution towards the downstream prediction task. 
For instance, in molecular property prediction, the existence of chemical bonds between certain types of atoms in the molecule may have more impact on the property to be predicted than others. 
To achieve this, in iteration $n$, {\AWARE} updates the latent representations for vertex $i$ from $[F_{(n-1)}]_i$ to $[F_{(n)}]_i$ by taking an element-wise product of $[F_{(n-1)}]_i$ with a \emph{weighted} sum of the latent representation vectors of its neighbors $j \in \mathcal{N}(i)$. 
Such a weighted update of the latent representations implicitly assigns a different importance to each neighbor $j$ for vertex $i$.
Assuming that the importance of vertex $j$ for vertex $i$ depends on their latent representations, we consider a score function corresponding to the update from vertex $j$ to $i$ as:
\begin{align} \label{eq:score}
    S_{ji} := S(f_j, f_i).
\end{align}

While our theoretical analysis is for weighting schemes defined in~\Cref{eq:score},
in practice one can have more flexibility, e.g., one can allow the weights to depend on the neighbors and the iterations.
To allow different weights $[S_{(n)}]_{ji}$ for different iterations $n$ by using the latent representations for vertices from the previous iteration $(n{-}1)$.
In particular, we use the self-attention mechanism: 
\begin{align}
    [Z_{(n)}]_{j\rightarrow i} = {[F_{(n{-}1)}]_j}^\top \Ww [F_{(n{-}1)}]_i
\end{align}
where $[F_{(n{-}1)}]_i$ is the latent vector of vertex $i$ at iteration $n{-}1$, and $\Ww {\in} \mathbb{R}^{r' \times r'}$ is a parameter matrix to be learned. 
We then define the attention weighting matrix used at iteration $n$ as:
\begin{align} \label{eq:attention_matrix}
    [S_{n}]_{ji} = \frac{e^{[Z_{(n)}]_{j\rightarrow i}}}{\sum_{k \in \mathcal{N}(i)}  e^{[Z_{(n)}]_{k\rightarrow i}}} 
\end{align}
Using this attention matrix $S_{n}$, we perform the iterative update to the latent vertex representations via a weighted sum of the latent representation vectors of their neighbors:
\begin{align} 
    F_{(n)} = \Big(F_{(n-1)} (\mathcal{A} \odot S_{n})\Big) \odot F_{(1)}
\end{align}
This update is simple and efficient, and automatically aggregates important information from the vertex neighbors for the downstream prediction task. In particular, it does not have the typical projection operation for aggregating information from neighbors. Instead, it computes the weighted sum and then the element-wise product to aggregate the information.

\mypara{Weighted summarization.} 
Since the downstream task may selectively prefer certain directions in the final graph embedding space, {\AWARE} learns a weighting $\Wg \in \mathbb{R}^{r' \times r'}$ to compute a \textit{weighted} sum of latent vertex representations for obtaining walk set embeddings of length $n$ as follows:
\begin{align} \label{eq:fn_graph_embed}
    f_{(n)} = \sigma(\Wg F_{(n)}) \mathbf{1}
\end{align}
where $\mathbf{1}$ denotes a vector of ones in $\mathbb{R}^{m}$. Walk set embeddings up to length $T$ are then concatenated to produce the graph embedding $f_{[T]}(G) = [f_{(1)}, \dots, f_{(T)}]$.

\mypara{End-to-end supervised training.}
We summarize the different weighting schemes and steps of {\AWARE} as a pseudo-code in Algorithm~\ref{alg:aware}.
The graph embeddings produced by {\AWARE} can be fed into any properly-chosen predictor $h_\theta$ parametrized by $\theta$, so as to be trained end-to-end on labeled data. 
For a given loss function $\mathcal{L}$, and a labeled data set $\mathcal{S} = \{(G_i, y_i)\}_{i=1}^M$ where $G_i$'s are graphs and $y_i$'s are their labels, {\AWARE} can learn the parameters ($W, \Wv, \Ww, \Wg$) and the predictor $\theta$ by optimizing the loss
\begin{equation}
    \ell_{\AWARE} = \underset{i\in[M]}{\sum}\;\mathcal{L}\Big(y_i, h_\theta\big(f_{[T]}(G_i)\big)\Big)
\end{equation}

The N-Gram walk aggregation strategy termed as the N-Gram Graph~\citep{liu2019n} operates in two steps: first to learn a graph embedding using the graph topology without any supervision, and then to use a predictor on the embedding for the downstream task. In contrast, {\AWARE} is end-to-end fully supervised, and simultaneously learns the vertex/graph embeddings for the downstream task along with the weighting schemes to highlight the important information in the graph and suppress the irrelevant and/or harmful ones.
Secondly, the weighting schemes of {\AWARE} allow for the use of simple predictors over the graph embeddings (e.g., logistic regression or shallow fully-connected networks) for performing end-to-end supervised learning. In contrast, N-Gram Graph requires strong predictors such as XGBoost (with thousands of trees) to exploit the encoded information in the graph embedding.

\section{Theoretical Analysis} \label{sec:analysis}
For the design of our walk-aggregation GNN with weighting schemes, we are interested in the following two fundamental questions: (1) \emph{what representation can it obtain and (2) under what conditions can the weighting scheme improve the prediction performance?}
In this section, we provide theoretical analysis of the walk weighting scheme.%
\footnote{For the other weighting schemes $\Wv$ and $\Wg$, we know $\Wv$ weights the vertex embeddings $f_i$, and $\Wg$ weights the final embeddings $F_{(n)}$, emphasizing important directions in the corresponding space. If $\Wv$ has singular vector decomposition $\Wv = U\Sigma V^\top$, then it will relatively emphasize the singular vector directions with large singular values. Similar for $\Wg$. See~\Cref{sec:interpretation} for some visualization.}
We consider the simplified case when the weights depend only on the latent embeddings of the vertices along the walk:
\begin{assumption} \label{ass:weight}
    The weights are ${S}_{ij}$ defined in~\Cref{eq:score}.  
\end{assumption}  

First, in Section~\ref{sec:effect-representation} and~\ref{sec:effect-learning}, we answer the above two questions under the following simplifying assumption:
\begin{assumption} \label{ass:simplified}
    $\Wv = \Wg = I$, the number of attributes is $C=1$, and the activation is linear $\sigma(z) = z$.   
\end{assumption} 
In this simplified case, the only weighting is ${S}_{ij}$ computed by $\Ww$, which allows our analysis to focus on its effect. We further assume that the number of attributes on the vertices is $C=1$ to simplify the notations.
We will show that the weighting scheme can highlight important information, and reduce irrelevant information for the prediction, and thus improve learning. 
To this end, we first analyze what information can be encoded in our graph representation, and how they are weighted (Theorem~\ref{thm:rep}). We then examine when and why the weighting can help learning a predictor with better performance (Theorem~\ref{thm:learn}). 

Next, in Section~\ref{sec:general_analysis}, we provide analysis for the general setting where $\Wv$ and $\Wg$ may not be the identity matrix, $C \ge 1$, and $\sigma$ is the leaky rectified linear unit (ReLU). The analysis leads to  guarantees (Theorem~\ref{thm:rep_gen} and Theorem~\ref{thm:learn-general}) that are similar to those in the simplified setting. 

\subsection{The Effect of Weighting on Representation} \label{sec:effect-representation}

We will show that the representation/embedding $f_{(n)}$ is a linear mapping of a high dimension vector $c_{(n)}$ into the low dimension embedding space, where the vector $c_{(n)}$ records the statistics about the walks in the graph. 

First, we formally define the walk statistics $c_{(n)}$ (a variant of the count statistics defined in~\citep{liu2019n}).
Recall that we assume the number of attributes is $C=1$. $K$ is the number of possible attribute values, and the columns of the vertex embedding parameter matrix $W \in \mathbb{R}^{r\times K}$ are embeddings for different attribute values $u$. Let $W(u)$ denote the column for value $u$, i.e., $W(u) = W h(u)$ where $h(u)$ is the one-hot vector of $u$.

\begin{definition}[Walk Statistics]
A walk type of length $n$ is a sequence of $n$ attribute values $v = (v_1, v_2, \cdots, v_n)$ where each $v_i$ is an attribute value.
The walk statistics vector $c_{(n)}(G) \in \mathbb{R}^{K^n}$ is the histogram of all walk types of length $n$ in the graph $G$, i.e., each entry is indexed by a walk type $v$ and the entry value is the number of walks with sequence of attribute values $v$ in the graph. 
Furthermore, let $c_{[T]}(G)$ be the concatenation of $c_{(1)}(G), \dots, c_{(T)}(G)$. When $G$ is clear from the context, we write $c_{(n)}$ and $c_{[T]}$ for short. 
\end{definition} 

Note that the walk statistics $c_{(n)}$ may not completely distinguish any two different graphs, i.e., there can exist two different graphs with the same walk statistics $c_{(n)}$ for any given $n$. \Cref{fig:two_diff_graphs} shows such an example where the given two graphs are isomorphically different despite having the same walk statistics $c_{(1)}, c_{(2)}$, and $c_{(3)}$. 
On the other hand, such indistinguishable cases are highly unlikely in practice.
We also acknowledge other well-known statistics for distinguishability that have been used for analyzing GNNs, in particular, 
the Weisfeiler-Lehman isomorphism test (e.g.,~\cite{xu2018how}). Nevertheless, it is crucial noting here that the goal of our theoretical analysis is very different. Namely, while the Weisfeiler-Lehman test has been used as an important tool to analyze the representation power of GNNs, the goal of our analysis is the prediction performance. As pointed out in the introduction, strong representation power may not always translate to good prediction performance. In fact, a very strong representation power emphasizing too much on graph distinguishability is harmful rather than beneficial for the prediction. For example, a good representation for prediction should emphasize the effective features related to class labels and remove irrelevant features and/or noise. If two graphs only differ in some features irrelevant to the class label, then it is preferable to get the same representation for them, rather than insisting on graph distinguishability. Weighting schemes can potentially down-weight or remove the irrelevant information and improve the prediction performance.

 \begin{figure}[]
    \begin{subfigure}{\linewidth}
        \centering
        \begin{tikzpicture}[node distance={18mm}, thick, main/.style = {draw, circle}] 
            \node[main] (1) {$B$}; 
            \node[main] (2) [above of=1] {$A$}; 
            \node[main] (3) [right of=1] {$B$}; 
            \node[main] (4) [right of=3] {$B$}; 
            \node[main] (5) [right of=4] {$B$}; 
            \node[main] (6) [left of=1] {$B$}; 
            \node[main] (7) [left of=6] {$B$}; 
            \node[main] (8) [left of=7] {$B$}; 
            \draw (1) -- (2); 
            \draw (1) -- (3); 
            \draw (3) -- (4); 
            \draw (4) -- (5); 
            \draw (1) -- (6); 
            \draw (6) -- (7); 
            \draw (7) -- (8);
        \end{tikzpicture}
        \caption{}
    \end{subfigure}
    \\~\\
    \begin{subfigure}{\linewidth}
        \centering
        \begin{tikzpicture}[node distance={18mm}, thick, main/.style = {draw, circle}] 
            \node[main] (1) {$B$}; 
            \node[main] (2) [above of=6] {$A$}; 
            \node[main] (3) [right of=1] {$B$}; 
            \node[main] (4) [right of=3] {$B$}; 
            \node[main] (5) [right of=4] {$B$}; 
            \node[main] (6) [left of=1] {$B$}; 
            \node[main] (7) [left of=6] {$B$}; 
            \node[main] (8) [left of=7] {$B$}; 
            \draw (6) -- (2); 
            \draw (1) -- (3); 
            \draw (3) -- (4); 
            \draw (4) -- (5); 
            \draw (1) -- (6); 
            \draw (6) -- (7); 
            \draw (7) -- (8);
        \end{tikzpicture}
        \caption{}
    \end{subfigure}
    \caption{Two \textit{different} graphs with the same walk-statistics $c_{(1)}$, $c_{(2)}$, and $c_{(3)}$. Vertices with the same letters have entirely identical attribute values. All edges are undirected and identical.}
    \label{fig:two_diff_graphs}
\end{figure}
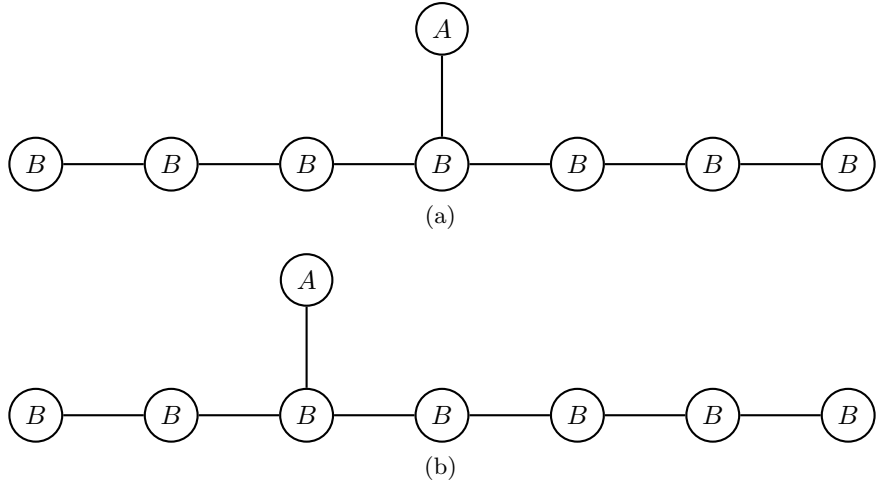

Next, we introduce the following notation for the linear mapping projecting $c_{(n)}$ to the representation $f_{(n)}$. 

\begin{definition}[$\ell$-way Column Product] 
Let $A$ be a $d \times N$ matrix, and let $\ell$ be a natural integer. The $\ell$-way column product of $A$ is a $d \times {N^\ell}$ matrix denoted as $A^{[\ell]}$, whose column indexed by a sequence $(i_1, i_2, \cdots, i_\ell)$ is the element-wise product of the $i_1, i_2, \dots, i_\ell$-th columns of $A$, i.e., $(i_1, i_2, \dots, i_\ell)$-th column in $A^{[\ell]}$ is $A_{i_1} \odot A_{i_2} \odot \cdots \odot A_{i_\ell}$ where $A_j$ for $j \in [N]$ is the $j$-th column in $A$, and $\odot$ is the element-wise product. 
\end{definition}
In particular, $W^{[n]}$ is an $r$ by $K^n$ matrix, whose columns are indexed by walk types $v = (v_1, v_2, \cdots, v_n)$ and equal  $W(v_1) \odot W(v_2) \odot \cdots \odot W(v_{n})$.  

\begin{definition}[Walk Weights] 
The weight of a walk type $v = (v_1,\ldots, v_{n})$ is 
\begin{align}
    \lambda(v) := \prod_{i=1}^{n-1} S(W(v_i), W(v_{i+1})) 
\end{align}
where $S(\cdot, \cdot)$ is the weight function in~\Cref{eq:score}.
\end{definition}

The following theorem then shows that $f_{(n)}$ can be viewed as a compressed version (linear mapping) of the walk statistics, weighted by the attention weights ${S}$.

\begin{theorem} \label{thm:rep}
Assume Assumption~\ref{ass:weight} and~\ref{ass:simplified}.
The embedding $f_{(n)}$ is a linear mapping of the walk statistics $c_{(n)}$: 
\begin{align}
    f_{(n)} = \mathcal{M}_{(n)} \Lambda_{(n)} c_{(n)}
\end{align}
where $\mathcal{M}_{(n)} = W^{[n]}$ is a matrix depending only on $W$, and $\Lambda_{(n)}$ is a $K^n$-dimensional diagonal matrix whose columns are indexed by walk types $v$ and have diagonal entries $\lambda(v)$.
Therefore, 
\begin{align}
    f_{[T]} = \mathcal{M}\Lambda c_{[T]}
\end{align} 
where $\mathcal{M}$ is a block-diagonal matrix with diagonal blocks $\mathcal{M}_{(1)}, \mathcal{M}_{(2)}, \ldots, \mathcal{M}_{(T)}$, and $\Lambda$ is block-diagonal with blocks $\Lambda_{(1)}, \Lambda_{(2)}, \ldots, \Lambda_{(T)}$.
\end{theorem}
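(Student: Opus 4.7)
The plan is to unroll the recursion for $F_{(n)}$ under the simplified setting, show that $[F_{(n)}]_{\cdot,i_n}$ is an explicit weighted sum of element-wise products of vertex embeddings over all walks ending at vertex $i_n$, and then sum over $i_n$ and group walks by their attribute-value type. The block-diagonal claim for $f_{[T]}$ then follows by definition of concatenation.

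\textbf{Step 1: Simplify the recursion.} Under Assumption~\ref{ass:simplified} we have $F_{(1)} = \sigma(\Wv F) = F$ and $f_{(n)} = \sigma(\Wg F_{(n)})\mathbf{1} = F_{(n)} \mathbf{1}$. Under Assumption~\ref{ass:weight}, the attention matrix entries depend only on the base vertex embeddings, i.e.\ $[S_n]_{j,i} = S(f_j, f_i)$, which is independent of the iteration index $n$. Writing this common matrix as $S$, the recursion in \Cref{alg:aware} becomes
\begin{equation*}
    [F_{(n)}]_{\cdot,i} \;=\; \Bigl(\sum_{j \in \mathcal{N}(i)} [F_{(n-1)}]_{\cdot,j}\, S(f_j, f_i)\Bigr) \;\odot\; f_i,
\end{equation*}
with base case $[F_{(1)}]_{\cdot,i} = f_i$.

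\textbf{Step 2: Unroll by induction over walks.} By induction on $n$, I will show that
\begin{equation*}
    [F_{(n)}]_{\cdot,i_n} \;=\; \sum_{(i_1,\ldots,i_{n-1}): i_1\to\cdots\to i_n \text{ is a walk}} \Bigl(\prod_{k=1}^{n-1} S(f_{i_k}, f_{i_{k+1}})\Bigr)\, \bigl(f_{i_1} \odot f_{i_2} \odot \cdots \odot f_{i_n}\bigr).
\end{equation*}
The base case is immediate. The inductive step plugs the hypothesis for $[F_{(n-1)}]_{\cdot,j}$ into the simplified recursion and uses that appending an edge $j \to i_n$ with $j \in \mathcal{N}(i_n)$ to walks ending at $j$ enumerates all walks of length $n$ ending at $i_n$; the factor $S(f_j, f_{i_n}) = S(f_{i_{n-1}}, f_{i_n})$ extends the product, and the outer $\odot f_{i_n}$ extends the element-wise product. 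Summing over $i_n$ yields
\begin{equation*}
    f_{(n)} \;=\; \sum_{p = (i_1,\ldots,i_n) \text{ walk}} \Bigl(\prod_{k=1}^{n-1} S(f_{i_k}, f_{i_{k+1}})\Bigr)\, \bigodot_{k=1}^{n} f_{i_k}.
\end{equation*}

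\textbf{Step 3: Regroup by walk type.} Since $C=1$, the vertex embedding $f_{i} = W h_{i}$ equals $W(v)$ where $v$ is the attribute value of vertex $i$; hence every walk $p$ of length $n$ has an associated walk type $v(p) = (v_1, \ldots, v_n)$, and both $\bigodot_k f_{i_k}$ and the weight product $\prod_k S(f_{i_k}, f_{i_{k+1}})$ depend on $p$ only through $v(p)$. The number of walks of type $v$ is by definition $c_{(n)}[v]$, so regrouping gives
\begin{equation*}
    f_{(n)} \;=\; \sum_{v} c_{(n)}[v]\, \lambda(v)\, \bigl(W(v_1) \odot \cdots \odot W(v_n)\bigr) \;=\; W^{[n]} \Lambda_{(n)} c_{(n)},
\end{equation*}
where the last equality is exactly the definition of the $n$-way column product $W^{[n]}$ and of the diagonal matrix $\Lambda_{(n)}$ with entries $\lambda(v)$. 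Setting $\mathcal{M}_{(n)} = W^{[n]}$ proves the per-level statement, and stacking these identities for $n = 1, \ldots, T$ produces the block-diagonal form $f_{[T]} = \mathcal{M}\Lambda c_{[T]}$.

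\textbf{Main obstacle.} The only subtle step is the induction in Step~2: one must be careful that the element-wise product $\odot f_i$ acts on each summand, and that the edge condition $j \in \mathcal{N}(i_n)$ correctly enumerates walks of length exactly $n$ ending at $i_n$ without double counting. Beyond that, the proof is a bookkeeping exercise once the iteration-independence of $S$ under Assumption~\ref{ass:weight} and the collapse of $\Wv, \Wg, \sigma$ under Assumption~\ref{ass:simplified} are recorded.
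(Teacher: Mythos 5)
Your proposal is correct and follows essentially the same route as the paper's proof: an induction unrolling the recursion so that $[F_{(n)}]_i$ is a weighted sum over walks through vertex $i$ (the paper's Lemma~\ref{lem:latent}, stated for walks starting at $i$ rather than your ending-at-$i$ convention, which is equivalent), followed by summing over vertices and regrouping by walk type via the definitions of $W^{[n]}$, $\Lambda_{(n)}$, and $c_{(n)}$. Your explicit orientation bookkeeping with $S(f_{i_k}, f_{i_{k+1}})$ is, if anything, slightly cleaner than the paper's, but it is the same argument.
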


\begin{proof}
It is sufficient to prove the first statement with $\mathcal{M}_{(n)} = W^{[n]}$, as the second one directly follows. 
To this end, we will first prove the following lemma.
\begin{lemma} \label{lem:latent}
Let $\mathcal{P}_{i,n}$ be the set of walks starting from vertex $i$ and of length $n$. Then the latent vector on vertex $i$ is:
\begin{align}
    [F_{(n)}]_i = \sum_{p \in \mathcal{P}_{i,n}} \lambda(v_p) \left[\bigodot_{k \in p} [F_{(1)}]_k\right]
\end{align}
where $\lambda(v_p)$ is the weight for the sequence of attribute values on $p$, and $\bigodot_{k \in p} [F_{(1)}]_k$ is the element-wise product of all the $[F_{(1)}]_k$'s on $p$. 
\end{lemma}
\begin{proof}
We prove the lemma by induction. For $n=1$, it is trivially true.

Suppose the statement is true for $n-1$. Then recall that $[F_{(n)}]_i$ is constructed by weighted-summing up all the latent vectors $[F_{(n-1)}]_j$ from the neighbors $j$ of $i$, and then element-wise product with $[F_{(1)}]_i = f_i$. That is,
\begin{align}
    [F_{(n)}]_i & = \left(\sum_{j \in \mathcal{N}_i} {S}_{ji} [F_{(n-1)}]_j \right) \odot [F_{(1)}]_i.
\end{align}
So letting $\mathcal{N}_i$ denote the set of neighbors of $i$, we have by induction
\begin{align}
    [F_{(n)}]_i & = \left(\sum_{j \in \mathcal{N}_i} {S}_{ji} [F_{(n-1)}]_j \right) \odot [F_{(1)}]_i 
    \\
    & = \sum_{j \in \mathcal{N}_i} {S}_{ji} \left(  \sum_{p \in \mathcal{P}_{j,n-1} } \lambda(v_p) \left[\bigodot_{k \in p} [F_{(1)}]_k\right] \right) \odot [F_{(1)}]_i 
    \\
    & = \sum_{j \in \mathcal{N}_i} \sum_{p \in \mathcal{P}_{j,n-1} } {S}_{ji}    \lambda(v_p) \left([F_{(1)}]_i \odot \left[\bigodot_{k \in p} [F_{(1)}]_k\right] \right).
\end{align}
By concatenating $i$ to the walks $p \in \mathcal{P}_{j,n-1}$ for all neighbors $j \in \mathcal{N}_i$, we obtain the set of walks starting from $i$ and of length $n$, i.e., $\mathcal{P}_{i,n}$. Furthermore, for a path obtained by concatenating $i$ and $p \in \mathcal{P}_{j,n-1}$, the weight is exactly ${S}_{ji} \cdot \lambda(v_p)$. Therefore,
\begin{align}
    [F_{(n)}]_i 
    & = \sum_{j \in \mathcal{N}_i} \sum_{p \in \mathcal{P}_{j,n-1} } {S}_{ji}    \cdot \lambda(v_p) \left([F_{(1)}]_i \odot \left[\bigodot_{k \in p} [F_{(1)}]_k\right] \right)
    \\
    & =  \sum_{p \in \mathcal{P}_{i,n}} \lambda(v_p) \left[\bigodot_{k \in p} [F_{(1)}]_k\right].
\end{align}
By induction, we complete the proof. 
\end{proof}

We now use Lemma~\ref{lem:latent} to prove the theorem statement.
Recall that $h_k$ is the one-hot vector for the attribute on vertex $k$. Let $e_p \in \{0,1\}^{K^n}$ be the one-hot vector for the walk type of a walk $p$. 
\begin{align}
    f_{(n)} & = F_{(n)} \mathbf{1} 
    \\
    & = \sum_{i=1}^m [F_{(n)}]_i 
    \\
    & = \sum_{i=1}^m \sum_{p \in \mathcal{P}_{i,n}} \lambda(v_p) \left[\bigodot_{k \in p} [F_{(1)}]_k\right]
    \\
    & = \sum_{p: \textrm{walks of length $n$}} \lambda(v_p) \left[\bigodot_{k \in p} [F_{(1)}]_k\right]
    \\
    & = \sum_{p: \textrm{walks of length $n$}} \lambda(v_p) \left[\bigodot_{k \in p} (W h_k)\right]
    \\
    & = \sum_{p: \textrm{walks of length $n$}} \lambda(v_p) W^{[n]} e_p
    \\
    & = W^{[n]} \sum_{p: \textrm{walks of length $n$}} \lambda(v_p)  e_p
    \\
    & = W^{[n]} \Lambda_{(n)} c_{(n)}.
\end{align}
The third line follows from Lemma~\ref{lem:latent}. The forth line follows from that the union of $\mathcal{P}_{i,n}$ for all $i$ is the set of all walks of length $n$. The sixth line follows from the definition of $W^{[n]}$ and $e_p$. The last line follows from the definitions of $\Lambda_{(n)}$ and $c_{(n)}$. 
\end{proof}

\paragraph{Remark.}
\Cref{thm:rep} shows that the embedding $f_{(n)}$ can encode a compressed version of the weighted walk statistics $\Lambda_{(n)} c_{(n)}$. Note that similar to $\Lambda_{(n)}$, $c_{(n)}$ is in high dimension $K^n$. Its entries are indexed by all possible sequences of the attribute values $v = (v_1,\ldots, v_n)$, and the entry value is just the count of the corresponding sequence in the graph. 
$\Lambda_{(n)} c_{(n)}$ is thus an entry-wise weighted version of the counts, i.e., weighting the walks with walk type $v = (v_1,\ldots, v_n)$ by the corresponding weight $\lambda(v)$. 

In words, $c_{(n)}$ is first weighted by our weighting scheme where the count of each walk type $v$ is weighted by the corresponding walk weight $\lambda(v)$, and then compressed from the high dimension $\mathbb{R}^{K^n}$ to the low dimension $\mathbb{R}^{r}$. Ideally, we would like to have relatively larger weights on walk types important for the prediction task and smaller for those not important. This provides the basis for the focus of our analysis: the effect of weighting for the learning performance. 

The N-gram graph method is a special case of our method, by setting the message weights $S(\cdot, \cdot)$ to be always 1 (and thus $\Lambda_{(n)}$ being an identity matrix). 
Then we have $f_{(n)} = W^{[n]} c_{(n)}$.
Our method thus enjoys greater representation power, since it can be viewed as a generalization that allows to weight the features. 
What is more important, and is also the focus of our study, is that this weighting can potentially help learn a predictor with better prediction performance. This is analyzed in the next subsection.

\paragraph{Remark.} 
The weighted walk statistics $\Lambda_{(n)} c_{(n)}$ is compressed from a high dimension to a low dimension by multiplying with $W^{[n]}$. 
For the unweighted case, the analysis in~\citep{liu2019n} shows that there exists a large family of $W$ (e.g., the entries of $W$ are independent Rademacher variables) such that $W^{[n]}$ has the Restricted Isometry Property (RIP) and thus $c_{(n)}$ can be recovered from $f_{(n)}$ by compressive sensing techniques (see the review in Appendix~\ref{app:toolbox}), i.e., $f_{(n)}$ encodes $c_{(n)}$. 

A similar result holds for our weighted case. 
In particular, it is well known in the compressive sensing literature that when $W^{[n]}$ has RIP, and $\Lambda_{(n)} c_{(n)}$ is sparse, then $\Lambda_{(n)} c_{(n)}$ can be recovered from $f_{(n)}$, i.e., $f_{(n)}$ preserves the information of $\Lambda_{(n)} c_{(n)}$. 
However, it is unclear if there exists $W$ such that $W^{[n]}$ can have RIP. We show that a wide family of $W$ satisfy this and thus $\Lambda_{(n)} c_{(n)}$ can be recovered from $f_{(n)}$.

\begin{theorem}\label{thm:compressive}
Assume Assumption~\ref{ass:weight} and~\ref{ass:simplified}.
If $r = \Omega((n s_n^3 \log K)/\epsilon^2)$ where $s_n$ is the sparsity of $c_{(n)}$, then there is a prior distribution
over $W$ such that with probability $1-\exp(-\Omega(r^{1/3}))$, $W^{[n]}$ satisfies $(s_n, \epsilon)$-RIP. Therefore, if $r = \Omega(n s_n^3 \log K)$ and $\Lambda_{(n)} c_{(n)}$ is the sparsest vector satisfying $f_{(n)} = W^{[n]} \Lambda_{(n)} c_{(n)}$, then with probability $1-\exp(-\Omega(r^{1/3}))$, $\Lambda_{(n)} c_{(n)}$ can be recovered from $f_{(n)}$. 
\end{theorem}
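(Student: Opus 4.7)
The plan is to split the statement into two essentially independent pieces: a restricted isometry property (RIP) for the column-product matrix $W^{[n]}$, and the standard $\ell_0$-uniqueness consequence of RIP. The key observation that makes the weighting invisible to both pieces is that $\Lambda_{(n)}$ is diagonal, so $\mathrm{supp}(\Lambda_{(n)} c_{(n)}) \subseteq \mathrm{supp}(c_{(n)})$ and hence $\Lambda_{(n)} c_{(n)}$ inherits the $s_n$-sparsity of $c_{(n)}$. Consequently, any compressive-sensing guarantee for recovering an $s_n$-sparse preimage of $W^{[n]}$ transfers verbatim to $\Lambda_{(n)} c_{(n)}$; the walk weights $\lambda(v)$ never enter the analysis of the sensing matrix, which is precisely why the unweighted N-gram-graph argument can be repurposed here.

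The main work is the RIP claim for $W^{[n]}$, which I expect to be the hardest step. The plan is to take the prior on $W$ to be i.i.d.\ Rademacher entries (suitably scaled by $1/\sqrt{r}$), matching the construction used in \citet{liu2019n}. The columns of $W^{[n]}$ are indexed by walk types $(v_1,\ldots,v_n) \in [K]^n$, and the $j$-th entry of such a column equals the product $\prod_{t=1}^n W_{j,v_t}$: a product of $n$ Rademachers that is independent across rows $j$ but has intricate dependencies across columns sharing common indices $v_t$. For a fixed $s_n$-sparse $x$, a moment-method concentration bound for degree-$n$ Rademacher chaos shows $\|W^{[n]} x\|_2^2 = (1 \pm \epsilon)\|x\|_2^2$ with failure probability exponentially small in $r$. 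An $\epsilon$-net on the unit sphere of each $s_n$-dimensional coordinate subspace together with a union bound over the $\binom{K^n}{s_n}$ possible supports then yields $(s_n,\epsilon)$-RIP once $r = \Omega(n s_n^3 \log K / \epsilon^2)$: the factor $n \log K$ comes from the support union bound, the factor $s_n^3$ from the higher moments needed to control the chaos, and the $\exp(-\Omega(r^{1/3}))$ probability from optimizing the moment exponent. Rather than rederive this end-to-end, I would invoke the corresponding RIP lemma of \citet{liu2019n} directly, noting that our setting differs only in the downstream signal, not in the sensing matrix.

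Given $(2 s_n, \epsilon)$-RIP with $\epsilon < 1$ (achieved by a constant blow-up of the sample-complexity and sparsity constants), the recovery step is a textbook $\ell_0$-uniqueness argument. If $x^\star := \Lambda_{(n)} c_{(n)}$ and some other $s_n$-sparse $x'$ both satisfy $W^{[n]} x = f_{(n)}$, then $x^\star - x'$ is $2 s_n$-sparse and lies in the kernel of $W^{[n]}$, so RIP forces $(1-\epsilon)\|x^\star - x'\|_2 \leq \|W^{[n]}(x^\star - x')\|_2 = 0$, hence $x^\star = x'$. Therefore $\Lambda_{(n)} c_{(n)}$ is the unique sparsest solution to $W^{[n]} x = f_{(n)}$ and can be recovered from $f_{(n)}$. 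Chaining the high-probability RIP conclusion with this deterministic recovery step yields the stated guarantee with probability $1 - \exp(-\Omega(r^{1/3}))$ and sample complexity $r = \Omega(n s_n^3 \log K)$, once the constant $\epsilon$ required for uniqueness is absorbed into the $\Omega$-notation.
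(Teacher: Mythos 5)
Your overall decomposition matches the paper's: because $\Lambda_{(n)}$ is diagonal, $\Lambda_{(n)}c_{(n)}$ inherits the $s_n$-sparsity of $c_{(n)}$, so the weighting is irrelevant to the sensing analysis, and once $W^{[n]}$ has RIP at sparsity level $2s_n$ the second statement is routine. (The paper invokes the $\ell_1$-recovery guarantee of Theorem~\ref{thm:recovery} from \citet{candes2008restricted}, whereas you give an $\ell_0$-uniqueness argument; both yield the stated recovery, and both need the RIP level doubled relative to the first statement, which you correctly flag as a constant adjustment.)

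The gap is in the first, and only substantial, step: establishing RIP for $W^{[n]}$. You propose to ``invoke the corresponding RIP lemma of \citet{liu2019n} directly,'' but no such lemma is available in the form required --- the paper explicitly remarks that it was unclear whether any $W$ makes $W^{[n]}$ satisfy RIP, and it supplies its own result, Theorem~\ref{thm:lwayproduct} in Appendix~\ref{app:tool_lwayproduct}, adapted from the ideas of \citet{kasiviswanathan2019restricted}. That theorem is the real content behind the quantitative form you are asked to prove: applying it with $X=\mathbf{I}_{r\times r}/\sqrt{r}$ (so $\mathrm{sr}(X)=r$) gives the requirement $r=\Omega(n s_n^3\log K/\epsilon^2)$ from the stable-rank condition $\mathrm{sr}(X)\gtrsim \tau^{4n}k^3\epsilon^{-2}\log(K^n/k)$, and the failure probability $\exp(-\Omega(r^{1/3}))$ from the exponent $\epsilon^2\,\mathrm{sr}(X)/(k^2\tau^{4n})$. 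Your fallback sketch (row-wise independent degree-$n$ Rademacher chaos, nets over $s_n$-dimensional coordinate subspaces, union bound over supports) names the right ingredients but leaves the hard part unproven: the chaos terms attached to different columns are strongly correlated (walk types sharing attribute values, and walk types with repeated values where products of Rademacher entries degenerate), and the paper controls this via a $\psi_2$-norm bound of order $\tau^{n}\sqrt{s_n}$ on each row's chaos variable, the Hanson--Wright corollaries (Corollaries~\ref{cor:subg} and~\ref{cor:spectrum}), and a two-stage net argument that also bounds the spectral norm of column submatrices. Without either reproducing such an argument or citing a source that actually contains it, the first half of the theorem --- including the specific $s_n^3$ and $r^{1/3}$ dependence --- is asserted rather than established.
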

\begin{proof}
The first statement follows from Theorem~\ref{thm:lwayproduct} in Appendix~\ref{app:tool_lwayproduct}, and the second follows from Theorem~\ref{thm:recovery} in Appendix~\ref{app:toolbox}.
\end{proof}
 
The distribution of $W$ satisfying the above can be that with (properly scaled) i.i.d.\ Rademacher entries or Gaussian entries. Since this is not the focus of our paper, below we simply assume that $W^{[n]}$ (and thus $\mathcal{M}$) has RIP and focus on analyzing the effect of the weighting on the learning over the representations.

\subsection{The Effect of Weighting on Learning} \label{sec:effect-learning}

Since we have shown that the embedding $f_{(n)}$ can be viewed as a linear mapping of the weighted walk statistics to low dimensional representations, we are now ready to analyze if the weighting can potentially improve the learning.

We now illustrate the intuition for the benefit of appropriate weighting. First, consider the case where we learn over the weighted features $\Lambda c_{[T]}$ (instead of learning over $f_{[T]}(G) = \mathcal{M}\Lambda c_{[T]}$ which has an additional $\mathcal{M}$). Suppose that the label is given by a linear function on $c_{[T]}$ with parameter $\beta^*$, i.e., $y = \langle \beta^*, c_{[T]}\rangle$. 
If $\Lambda$ is invertible, the parameter $\Lambda^{-1}\beta^*$ on $\Lambda c_{[T]}$ has the same loss as $\beta^*$ on $c_{[T]}$. So we only need to learn $\Lambda^{-1}\beta^*$. The sample size needed to learn $\Lambda^{-1}\beta^*$ on $\Lambda c_{[T]}$ will depend on the factor $\|\Lambda^{-1}\beta^*\|_2 \|\Lambda c_{[T]}\|_2$, which is potentially smaller than $\|\beta^*\|_2 \| c_{[T]}\|_2$ for the unweighted case. This means fewer data samples are needed (equivalently, smaller loss for a fixed amount of samples).

Now, consider the case of learning over $f_{[T]}(G) = \mathcal{M}\Lambda c_{[T]}$ that has an extra $\mathcal{M}$. We note that $c_{[T]}$ can be sparse compared to its high dimension (since likely only a very small fraction of all possible walk types will appear in a graph). Well-established results from compressive sensing show that when $\mathcal{M}$ has the Restricted Isometry Property (RIP), learning over $\mathcal{M}\Lambda c_{[T]}$ is comparable to learning over $\Lambda c_{[T]}$. Indeed, Theorem~\ref{thm:compressive} shows when $W$ is random and the embedding dimension $r$ is large enough, there are families of distributions of $W$ such that $\mathcal{M}$ has RIP for $\Lambda c_{[T]}$. Thus, we assume $\mathcal{M}$ has RIP and focus on the analysis of how $\Ww$ affects the weighting and the learning. In practice, our method is more general and the parameters are learned over the data. Still, the analysis in the special case under the assumptions can provide useful insights for understanding our method, in particular, how the weighting can affect the learning of a predictor over the embeddings. 

However, the above intuition is only for learning over a fixed weighting $\Lambda$ induced by a fixed $\Ww$. Our key challenge is to incorporate the learning of $\Ww$ in the analysis, which we now address.  
Formally, we consider learning $\Ww$ from a hypothesis class $\Wfamily$, and let $\Lambda(\Ww)$ and $f_{[T]}(G; \Ww)$ denote the weights and representation given by $\Ww$. 
For prediction, we consider binary classification with the logistic loss $\ell(g, y) = \log(1+\exp(-gy))$ where $g$ is the prediction and $y$ is the true label. 
Let $\ell_\mathcal{D}(\theta, \Ww)$ be the risk of a linear classifier with a parameter $\theta$ on $f_{[T]}(G; \Ww)$ over the data distribution $\mathcal{D}$, and let $\ell_{\mathcal{S}}(\theta, \Ww)$ denote the risk over the training dataset $\mathcal{S}$.
Suppose we have a dataset $\mathcal{S} = \{(G_i, y_i)\}_{i=1}^M$ of $M$ i.i.d.\ sampled from $\mathcal{D}$, and $\hat{\theta}$ and $\Wwhat$ are the parameters learned via $\ell_2$-regularization with regularization coefficient $B_\theta$:
\begin{align}
\hat{\theta}, \Wwhat = \argmin_{\Ww \in \Wfamily, \|\theta\|_2\le B_\theta}  \ \ \ell_{\mathcal{S}}(\theta, \Ww) := \frac{1}{M} \sum_{i=1}^M \ell\Big(\langle \theta,  f_{[T]}(G_i; {\Ww}) \rangle, y_i\Big).
\end{align}

To derive error bounds, suppose $\Wfamily$ is equipped with a norm $\|\cdot\|$ and let $\mathcal{N}(\Wfamily, \epsilon)$ be the $\epsilon$-covering number of $\Wfamily$ w.r.t.\ the norm $\| \cdot\|$ (other complexity measures on $\Wfamily$, such as VC-dimension, can also be used). Suppose $f_{[T]}(G; \Ww)$ is $L_f$-Lipschitz w.r.t.\ the norm $\| \cdot\|$ on $\Wfamily$ and the $\ell_2$ norm on the representation. 
Furthermore, let $\beta^*$ denote the best linear classifier on $ c_{[T]}$, and let $\ell^*_{\mathcal{D}}$ denote its risk.

\begin{theorem} \label{thm:learn}
Assume Assumption~\ref{ass:weight} and~\ref{ass:simplified}.
Assume $c_{[T]}$ is $s$-sparse, $\mathcal{M}$ satisfies $(2s,\epsilon_0)$-RIP, $\Lambda(\Ww)$ is invertible and $f_{[T]}(G; \Ww)$ is $L_f$-Lipschitz over $\Wfamily$. For any $\delta, \epsilon \in (0,1)$, there are regularization coefficient values $B_\theta$ such that with probability $\ge 1-\delta$:
\begin{align}
&  \ell_{\mathcal{D}}(\hat{\theta}, \Wwhat)  \le \ell^*_{\mathcal{D}} + 2 \epsilon  +  O\left( \sqrt{\frac{r T + \mathcal{C}_\epsilon(\mathcal{W})   }{M} } \right)  + \min_{\Ww \in \Wfamily} B(\Ww) \times O\left(  \sqrt{\epsilon_0 + \frac{  \mathcal{C}_\epsilon(\mathcal{W})}{M}  }  \right)  
\end{align}  
where  
\begin{align}
    \mathcal{C}_\epsilon(\mathcal{W}) := \log \mathcal{N}\left(\Wfamily, \frac{\epsilon }{8 B_\theta L_f} \right) + \log\frac{1}{\delta}, \quad
    B(\Ww) :=  \max_{G \sim \mathcal{D}} \|\Lambda(\Ww) c_{[T]}(G)\|_2 \|\Lambda(\Ww)^{-1}\beta^*\|_2.
\end{align} 
\end{theorem}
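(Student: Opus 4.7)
The proof plan is to run an ERM-style excess-risk comparison against a carefully chosen family of comparators $\theta^\star(\Ww)$, one for each $\Ww \in \Wfamily$, and then pay for the joint learning of $\Ww$ via an $\epsilon'$-cover of $\Wfamily$. To build the comparators, for each $\Ww$ I would write $z(G) = \Lambda(\Ww) c_{[T]}(G)$, which is $s$-sparse since $\Lambda(\Ww)$ is diagonal and $c_{[T]}(G)$ is $s$-sparse, and $\bar\beta(\Ww) = \Lambda(\Ww)^{-1}\beta^*$, so that $\langle\beta^*, c_{[T]}(G)\rangle = \langle\bar\beta(\Ww), z(G)\rangle$. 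I then set $\theta^\star(\Ww) = \mathcal{M}\bar\beta(\Ww)$ (or a support-aware variant; see below) and invoke the $(2s,\epsilon_0)$-RIP of $\mathcal{M}$: the standard polarization estimate for sparse vectors yields $|\langle\theta^\star(\Ww), f_{[T]}(G;\Ww)\rangle - \langle\beta^*, c_{[T]}(G)\rangle| \lesssim \sqrt{\epsilon_0}\,\|\bar\beta(\Ww)\|_2\,\|z(G)\|_2 \le \sqrt{\epsilon_0}\,B(\Ww)$ at the loss level. Composing with the $1$-Lipschitz logistic loss gives $\ell_{\mathcal{D}}(\theta^\star(\Ww),\Ww) \le \ell^*_{\mathcal{D}} + O(\sqrt{\epsilon_0})\,B(\Ww)$, and the construction gives $\|\theta^\star(\Ww)\|_2 \le (1+\epsilon_0)\|\bar\beta(\Ww)\|_2$, so picking $B_\theta \ge (1+\epsilon_0)\|\bar\beta(\Ww^\star)\|_2$ for the eventual minimizing $\Ww^\star$ keeps $\theta^\star(\Ww^\star)$ inside the $\ell_2$-ball.

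For the statistical piece I would use uniform convergence in two layers. For a fixed $\Ww$, the RIP bound $\|f_{[T]}(G;\Ww)\|_2 \le \sqrt{1+\epsilon_0}\,\|z(G)\|_2$ combined with a Rademacher/Dudley argument for $\ell_2$-bounded linear predictors composed with the $1$-Lipschitz logistic loss produces a deviation of order $\sqrt{rT/M}$ (from the effective dimension of the embedding on a bounded-loss ball) plus a data-scale term of order $B_\theta \max_G \|z(G)\|_2/\sqrt{M}$ which, after matching $B_\theta$ to the comparator norm, scales as $B(\Ww)/\sqrt{M}$. To lift this from a single $\Ww$ to all of $\Wfamily$, I pick an $\epsilon'$-net $\tilde{\Wfamily}$ of size $\mathcal{N}(\Wfamily,\epsilon')$ with $\epsilon' = \epsilon/(8 B_\theta L_f)$ and union-bound the fixed-$\Ww$ guarantee at level $\delta/|\tilde{\Wfamily}|$, contributing $\log\mathcal{N}(\Wfamily,\epsilon') + \log(1/\delta) = \mathcal{C}_\epsilon(\Wfamily)$ inside the square roots; for an arbitrary $\Ww \in \Wfamily$ with nearest cover point $\Ww'$, the $L_f$-Lipschitzness of $f_{[T]}(\cdot;\Ww)$ and Lipschitzness of the loss bound $|\ell_{\mathcal{S}}(\theta,\Ww) - \ell_{\mathcal{S}}(\theta,\Ww')| \le B_\theta L_f \epsilon' \le \epsilon/8$ and similarly for $\ell_{\mathcal{D}}$, introducing at most $2\epsilon$ of slack into the final inequality.

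I would then combine the pieces through the excess-risk decomposition
\begin{align*}
    \ell_{\mathcal{D}}(\hat\theta,\Wwhat) - \ell^*_{\mathcal{D}}
    &= \underbrace{\bigl[\ell_{\mathcal{D}}(\hat\theta,\Wwhat) - \ell_{\mathcal{S}}(\hat\theta,\Wwhat)\bigr]}_{\text{uniform conv.}}
    + \underbrace{\bigl[\ell_{\mathcal{S}}(\hat\theta,\Wwhat) - \ell_{\mathcal{S}}(\theta^\star(\Ww),\Ww)\bigr]}_{\le\, 0 \text{ by ERM}} \\
    &\quad + \underbrace{\bigl[\ell_{\mathcal{S}}(\theta^\star(\Ww),\Ww) - \ell_{\mathcal{D}}(\theta^\star(\Ww),\Ww)\bigr]}_{\text{uniform conv.}}
    + \underbrace{\bigl[\ell_{\mathcal{D}}(\theta^\star(\Ww),\Ww) - \ell^*_{\mathcal{D}}\bigr]}_{\le\, \sqrt{\epsilon_0}\,B(\Ww)},
\end{align*}
where the first uniform-convergence bracket contributes the dimension-driven piece $\sqrt{(rT + \mathcal{C}_\epsilon(\Wfamily))/M}$, the third contributes the data-scale piece $B(\Ww)\sqrt{\mathcal{C}_\epsilon(\Wfamily)/M}$, the cover-passage adds the $2\epsilon$ slack, and finally I take the minimum over $\Ww \in \Wfamily$ to recover the stated form. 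The main obstacle is the comparator construction in Step 1: obtaining the clean $\sqrt{\epsilon_0}\,B(\Ww)$ approximation when $\bar\beta(\Ww)$ is \emph{not} sparse, since the polarization identity $|\langle\mathcal{M}u,\mathcal{M}v\rangle - \langle u,v\rangle| \le \epsilon_0\|u\|_2\|v\|_2$ ordinarily requires sparsity of both arguments. The cleanest workaround is a support-aware refinement: replace the global $\theta^\star(\Ww) = \mathcal{M}\bar\beta(\Ww)$ by the restricted pseudo-inverse $\mathcal{M}_{S}(\mathcal{M}_S^\top \mathcal{M}_S)^{-1}\bar\beta(\Ww)_S$ on each possible $s$-support $S$ (which gives $\langle\beta^*, c_{[T]}(G)\rangle$ exactly for that support) and combine with a Bernstein-type second-moment analysis that converts a worst-case $\epsilon_0$-inner-product error into the $\sqrt{\epsilon_0}$ factor appearing in the stated bound.
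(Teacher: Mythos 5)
Your outer skeleton is essentially the paper's: the paper also splits the excess risk into a uniform-convergence term over the embedding class (giving the $\sqrt{rT/M}$ piece), an ERM-optimality step $\ell_{\mathcal{S}}(\hat\theta,\Wwhat)\le\ell_{\mathcal{S}}(\hat\theta(\Ww),\Ww)$, a per-$\Ww$ comparison to $\ell^*_{\mathcal{D}}$, and an $\epsilon/(8B_\theta L_f)$-net over $\Wfamily$ with the $L_f$-Lipschitz passage contributing the $\mathcal{C}_\epsilon(\Wfamily)$ and $2\epsilon$ terms (Lemma~\ref{lem:uniform}). Where you diverge is the per-$\Ww$ step: the paper does not build an explicit comparator at all, but invokes the compressed-learning theorem of \citet{arora2018acompressed} (Theorem~\ref{thm:compressedlearning}) as a black box, noting only that the logistic loss is $1$-Lipschitz and convex and that $\Lambda^{-1}(\Ww)\beta^*$ achieves on $\Lambda(\Ww)c_{[T]}$ the same loss as $\beta^*$ on $c_{[T]}$, which is exactly what produces the $B(\Ww)\sqrt{\epsilon_0+\cdots}$ term; the tuning of $B_\theta$ is absorbed into the choice of the regularization coefficient in that theorem.

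The step you flag as the ``main obstacle'' is a genuine gap in your construction, and your proposed fix does not close it. Setting $\theta^\star(\Ww)=\mathcal{M}\Lambda(\Ww)^{-1}\beta^*$ and invoking RIP polarization requires \emph{both} arguments to be sparse, but the theorem makes no sparsity assumption on $\beta^*$, so $\bar\beta(\Ww)=\Lambda(\Ww)^{-1}\beta^*$ is dense in general; for the same reason the claimed bound $\|\theta^\star(\Ww)\|_2\le(1+\epsilon_0)\|\bar\beta(\Ww)\|_2$ is unjustified, since $(2s,\epsilon_0)$-RIP controls $\mathcal{M}$ only on sparse inputs and its operator norm on dense vectors can be far larger. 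The support-aware refinement does not repair this: a classifier must be a single vector working simultaneously for all graphs drawn from $\mathcal{D}$, whereas the restricted pseudo-inverse $\mathcal{M}_S(\mathcal{M}_S^\top\mathcal{M}_S)^{-1}\bar\beta(\Ww)_S$ depends on the support $S$, which varies from graph to graph, and the appeal to a ``Bernstein-type second-moment analysis'' is not specified in a way that could turn these per-support objects into one admissible $\theta$. The standard way to handle a dense comparator---and the route taken inside the result the paper cites, going back to \citet{calderbank2009compressed}---is a representer-type argument: compare against a combination $\sum_i\alpha_i\,\mathcal{M}\Lambda(\Ww) c_{[T]}(G_i)$ of the \emph{projected sparse data points}, so that every inner product arising in the analysis is between two $s$-sparse vectors and pairwise polarization under $(2s,\epsilon_0)$-RIP applies. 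If you either import that theorem as the paper does, or replace your comparator by such a span-of-data construction, the rest of your decomposition goes through and matches the stated bound.
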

\begin{proof}
Since $\hat{\theta} = \hat{\theta}(\Wwhat)$ where $\hat{\theta}(\Wwhat)$ is defined in Lemma~\ref{lem:uniform}, by Lemma~\ref{lem:uniform}.(1), we have
\begin{align}
  \ell_{\mathcal{D}}(\hat{\theta}, \Wwhat) \le  \ell_{\mathcal{S}}(\hat{\theta}, \Wwhat) + O\left( \sqrt{\frac{1}{M}\left ( r T + \log \mathcal{N}\left(\Wfamily, \frac{\epsilon }{8 B_\theta L_f} \right) +  \log\frac{1}{\delta} \right)} \right) + \epsilon.
\end{align}

Furthermore, since $\hat{\theta}, \Wwhat$ are the optimal solution for the regularized regression, then for any $\Ww \in \Wfamily$, 
\begin{align}
    \ell_{\mathcal{S}}(\hat{\theta}, \Wwhat) \le \ell_{\mathcal{S}}(\hat{\theta}(\Ww), \Ww).
\end{align}

Then by Lemma~\ref{lem:uniform}.(2), we have
\begin{align}
    \ell_{\mathcal{S}}(\hat{\theta}(\Ww), \Ww) \le \ell^*_\mathcal{D} + O\left( B(\Ww) \sqrt{\epsilon_0 + \frac{1}{M}\left( \log\frac{1}{\delta}  +  \log \mathcal{N}\left(\Wfamily, \frac{\epsilon }{8 B_\theta L_f} \right) \right) }  \right) + \epsilon.
\end{align}
Combining the above inequalities proves the theorem. 
\end{proof}

\begin{lemma} \label{lem:uniform}
Suppose $f_{[T]}(G; \Ww)$ is $L_f$-Lipschitz w.r.t.\ the norm $\| \cdot\|$ on $\Wfamily$ and the $\ell_2$ norm on the representation. 
Let
\begin{align}
\hat{\theta}(\Ww) = \argmin_{\|\theta\|_2 \le B_\theta}  \frac{1}{M} \sum_{i=1}^M \ell\Big(\langle \theta,  f_{[T]}(G_i; \Ww) \rangle, y_i\Big) 
\end{align}
be the optimal solution for a fixed $\Ww$.\\
(1) For any $\epsilon, \delta \in (0,1)$, with probability at least $1 - \delta$, for any $\Ww \in \Wfamily$, 
\begin{align}
    | \ell_\mathcal{D}(\hat\theta(\Ww), \Ww) - \ell_{\mathcal{S}}(\hat\theta(\Ww), \Ww)| & \le  O\left( \sqrt{\frac{1}{M}\left ( r T + \log \mathcal{N}\left(\Wfamily, \frac{\epsilon }{8 B_\theta L_f} \right) +  \log\frac{1}{\delta} \right)} \right) + \epsilon.
\end{align}
(2) Assume that $\mathcal{M}$ satisfies the $(2s,\epsilon_0)$-RIP, and $c_{[T]}$ is $s$-sparse. Also assume that $\Lambda^{-1}(\Ww)$ is invertible over $\Wfamily$. Then for any $\epsilon,\delta \in (0,1)$, there exists an appropriate choice of regularization coefficient $B_\theta$, such that with probability at least $1 -\delta$, for any $\Ww \in \Wfamily$, 
\begin{align} 
    \ell_\mathcal{D}(\hat\theta(\Ww), \Ww) & \le \ell^*_\mathcal{D} + O\left( B(\Ww) \sqrt{\epsilon_0 + \frac{1}{M}\left( \log\frac{1}{\delta}  +  \log \mathcal{N}\left(\Wfamily, \frac{\epsilon }{8 B_\theta L_f} \right) \right) }  \right) + \epsilon.
\end{align}
\end{lemma}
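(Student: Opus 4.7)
Both statements are uniform convergence bounds for a regularized linear classifier over the compressed representation $f_{[T]}(G;\Ww) = \mathcal{M}\Lambda(\Ww)c_{[T]}(G)$, where by Theorem~\ref{thm:rep} and Assumption~\ref{ass:simplified} the matrix $\mathcal{M}$ does not depend on $\Ww$. The plan is to derive part~(1) by combining a fixed-$\Ww$ generalization bound for linear predictors with an $\epsilon$-net argument over $\Wfamily$, and then to obtain part~(2) by exhibiting an explicit reference classifier $\tilde\theta$ whose population risk is close to $\ell^*_\mathcal{D}$ and chaining with the bound from part~(1).

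For part~(1), I would first fix $\Ww$ and view the predictor as a $B_\theta$-bounded-norm linear functional on $\mathbb{R}^{rT}$ composed with the $1$-Lipschitz logistic loss. A standard Rademacher or covering-number argument over the $rT$-dimensional $\ell_2$-ball of radius $B_\theta$ yields a generalization gap of order $\sqrt{(rT + \log(1/\delta))/M}$, which applies in particular to $\hat\theta(\Ww)$. To upgrade to uniformity in $\Ww$, cover $\Wfamily$ with a net of radius $\epsilon_\Ww := \epsilon/(8 B_\theta L_f)$ and union-bound over the $\mathcal{N}(\Wfamily,\epsilon_\Ww)$ cover points. For any $\Ww$ with nearest cover point $\Ww'$, the $L_f$-Lipschitzness of $f_{[T]}(G;\cdot)$ together with $\|\theta\|_2 \le B_\theta$ and the $1$-Lipschitzness of $\ell$ imply that both $\ell_\mathcal{D}(\hat\theta(\Ww),\Ww)$ and $\ell_\mathcal{S}(\hat\theta(\Ww),\Ww)$ are within $\epsilon/8$ of their counterparts at $\Ww'$, producing the stated bound with the additive $\epsilon$.

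For part~(2), for each $\Ww \in \Wfamily$ I would construct a reference parameter $\tilde\theta = \tilde\theta(\Ww)$ from the oracle coefficient $v := \Lambda(\Ww)^{-1}\beta^*$. Since $z(G) := \Lambda(\Ww)c_{[T]}(G)$ is $s$-sparse and $\mathcal{M}$ satisfies $(2s,\epsilon_0)$-RIP, a pseudo-inverse-style construction (making $\mathcal{M}^\top\tilde\theta$ match $v$ on the union of $s$-sparse supports up to RIP distortion) gives simultaneously $\|\tilde\theta\|_2 = O(\|v\|_2)$ and $|\langle\tilde\theta,\mathcal{M}z(G)\rangle - \langle v,z(G)\rangle| = O(\sqrt{\epsilon_0}\,\|v\|_2\|z(G)\|_2)$ uniformly in $G$. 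Choosing the regularization radius $B_\theta$ at least as large as this norm bound and using the $1$-Lipschitzness of $\ell$ yields $\ell_\mathcal{D}(\tilde\theta,\Ww) \le \ell^*_\mathcal{D} + O(\sqrt{\epsilon_0}\,B(\Ww))$. Chaining $\ell_\mathcal{D}(\hat\theta(\Ww),\Ww) \le \ell_\mathcal{S}(\hat\theta(\Ww),\Ww) + \mathrm{gen} \le \ell_\mathcal{S}(\tilde\theta,\Ww) + \mathrm{gen} \le \ell_\mathcal{D}(\tilde\theta,\Ww) + 2\,\mathrm{gen}$, using empirical optimality of $\hat\theta(\Ww)$ and two applications of the part~(1) argument, delivers the target inequality.

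The main obstacle is the construction of $\tilde\theta$: a single linear parameter must mimic the oracle functional $\langle v,\cdot\rangle$ simultaneously on graphs whose $s$-sparse supports may differ. The key observation that would resolve this is that RIP at sparsity $2s$ makes $\mathcal{M}$ uniformly near-isometric on the union of all $s$-sparse supports (the difference of two such vectors is $2s$-sparse), so the approximation error is uniformly controlled by the product $\|v\|_2\|z(G)\|_2$, which is exactly the structure of $B(\Ww)$ in the statement. A secondary subtlety is that $B_\theta$ must be chosen large enough to contain $\tilde\theta(\Ww)$ for all relevant $\Ww$ without blowing up the net scale $\epsilon_\Ww$; this tradeoff is what ultimately forces the $\min_{\Ww\in\Wfamily}B(\Ww)$ form in the aggregated Theorem~\ref{thm:learn}.
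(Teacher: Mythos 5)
Your part (1) is essentially the paper's proof: a uniform bound over the $B_\theta$-ball in $\mathbb{R}^{rT}$ at each point of an $\epsilon/(8B_\theta L_f)$-net of $\Wfamily$, a union bound over the net, and a Lipschitz transfer (via $\|\theta\|_2\le B_\theta$, $L_f$, and the $1$-Lipschitz logistic loss) costing the additive $\epsilon$. No issues there.

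For part (2), however, you diverge from the paper and the divergence hides a genuine gap. The paper does not construct a reference classifier at all: it observes that $\Lambda(\Ww)c_{[T]}$ is $s$-sparse, hence differences of data points are $2s$-sparse and $\mathcal{M}$ is $(\Delta\mathcal{X},\epsilon_0)$-RIP, and then invokes the compressed-learning theorem of \citet{arora2018acompressed} (restated as Theorem~\ref{thm:compressedlearning}) as a black box with comparator $\theta^*=\Lambda(\Ww)^{-1}\beta^*$ (which has the same risk $\ell^*_\mathcal{D}$ as $\beta^*$ on $c_{[T]}$), the $1$-Lipschitz convex logistic loss, and the Lagrange-multiplier correspondence between $\lambda$ and $B_\theta$; uniformity in $\Ww$ then comes from the same net argument as in part (1). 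You instead try to reprove that theorem via an explicit $\tilde\theta$ with $\|\tilde\theta\|_2=O(\|v\|_2)$ and $|\langle\tilde\theta,\mathcal{M}z\rangle-\langle v,z\rangle|=O(\sqrt{\epsilon_0}\,\|v\|_2\|z\|_2)$ for all $s$-sparse $z$, where $v=\Lambda(\Ww)^{-1}\beta^*$. This is exactly the hard step and your justification does not deliver it. First, ``matching $\mathcal{M}^\top\tilde\theta$ with $v$ on the union of $s$-sparse supports'' cannot be meant literally (that union is the whole coordinate space, while $\mathcal{M}^\top\tilde\theta$ ranges over an $rT$-dimensional subspace); what is actually needed is that the best-$s$-term $\ell_2$ norm of the residual $\mathcal{M}^\top\tilde\theta-v$ is $O(\sqrt{\epsilon_0}\|v\|_2)$. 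Second, RIP of order $2s$ gives this only when $v$ itself is sparse: then $\tilde\theta=\mathcal{M}v$ works, with the norm bound and the inner-product preservation following from RIP and polarization on $2s$-sparse vectors. But $\beta^*$ is \emph{not} assumed sparse, so $v$ is in general dense; then $\|\mathcal{M}v\|_2$ can be as large as $\|\mathcal{M}\|_{\mathrm{op}}\|v\|_2\gg\|v\|_2$, and the standard residual estimates degrade to $\|v\|_1$-type (not $\|v\|_2$-type) quantities, which would not reproduce $B(\Ww)$. The cited compressed-learning result handles an arbitrary comparator by a more careful argument (e.g., replacing the comparator by one in the span/hull of the sparse data points before transferring through $\mathcal{M}$, as in \citet{calderbank2009compressed}), and your sketch would need to supply such a step, or simply invoke Theorem~\ref{thm:compressedlearning} as the paper does. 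Your surrounding scaffolding (empirical optimality of $\hat\theta(\Ww)$, two-sided concentration, choice of $B_\theta$, and the net over $\Wfamily$) is fine once that existence step is repaired.
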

\begin{proof}
(1) 
We apply a net argument on $\Wfamily$. 
Let $\mathcal{X}$ be an $\epsilon/8 B_\theta L_f$-net of $\Wfamily$, so $| \mathcal{X} | \le \mathcal{N}(\Wfamily, \epsilon/8 B_\theta L_f)$. 
Then for the given $M$, any $\Ww \in \mathcal{X}$ and any $\theta$ satisfies:
\begin{align}
    | \ell_\mathcal{D}(\theta, \Ww) - \ell_{\mathcal{S}}(\theta, \Ww)| & \le   O\left( \sqrt{\frac{1}{M}\left ( r T + \log \mathcal{N}\left(\Wfamily, \frac{\epsilon }{8 B_\theta L_f} \right) +  \log\frac{1}{\delta} \right)} \right).
\end{align}
Then for any $\Ww' \in \Wfamily$, there exists a $\Ww \in \mathcal{X}$ such that $\| \Ww - \Ww'\| \le \epsilon/8 B_\theta L_f$. Then letting $\theta$ denote $\hat\theta(\Ww')$, we have
\begin{align}
     | \ell_\mathcal{D}(\theta, \Ww') - \ell_{\mathcal{S}}(\theta, \Ww')| 
     & \le 
      | \ell_\mathcal{D}(\theta, \Ww') - \ell_D(\theta, \Ww)| 
      \\
      & \quad +  | \ell_D(\theta, \Ww) - \ell_{\mathcal{S}}(\theta, \Ww)|
      \\
      & \quad +  | \ell_{\mathcal{S}}(\theta, \Ww) - \ell_{\mathcal{S}}(\theta, \Ww')|.
\end{align}
For any $G$ with label $y$, we have
\begin{align}
    & \quad |\ell(\langle \theta, f_{[T]}(G; \Ww)\rangle, y ) -  \ell(\langle \theta, f_{[T]}(G; \Ww')\rangle, y )|
    \\
    & \le 
    |\langle \theta, f_{[T]}(G; \Ww)\rangle -  \langle \theta, f_{[T]}(G; \Ww')\rangle| 
    \\
    & =   |\langle \theta, f_{[T]}(G; \Ww) - f_{[T]}(G; \Ww')\rangle|
    \\
    & =   \| \theta\|_2 \|f_{[T]}(G; \Ww) - f_{[T]}(G; \Ww')\|_2
    \\
    & \le  B_\theta L_f \|\Ww - \Ww'\|
    \\
    & \le \frac{\epsilon}{8}.
\end{align}
Then 
\begin{align}
     | \ell_\mathcal{D}(\theta, \Ww') - \ell_{\mathcal{S}}(\theta, \Ww')| 
     & \le \frac{\epsilon}{8} + O\left( \sqrt{\frac{1}{M}\left ( r T + \log \mathcal{N}\left(\Wfamily, \frac{\epsilon }{8 B_\theta L_f} \right) +  \log\frac{1}{\delta} \right)} \right) + \frac{\epsilon}{8}.
\end{align}
This proves the first statement. 

(2) Let $\mathcal{X}$ be the set of $\Lambda c_{[T]}$ for $G$ from the data distribution. Since $c_{[T]}$ is $s$-sparse, $\Lambda c_{[T]}$ is also $s$-sparse. Then $\Lambda c_{[T]}(G) - \Lambda c_{[T]}(G')$ is $2s$-sparse for any $G$ and $G'$, so $\mathcal{M}$ satisfies $(\Delta \mathcal{X}, \epsilon)$-RIP. Then we can apply the theorem for learning over compressive sensing data. In particular, for a fixed $\Ww$, we apply  Theorem 4.2 in~\citep{arora2018acompressed}. (The theorem is included as Theorem~\ref{thm:compressedlearning} in Section~\ref{app:toolbox} for completeness. Note that choosing an appropriate $\lambda$ in that theorem is equivalent to choosing an appropriate $B_\theta$ by standard Lagrange multiplier theory.) The statement follows from that the logistic loss function is 1-Lipschitz and convex, and that the optimal solution over $\Lambda(\Ww) c_{[T]}$ is $\Lambda^{-1}(\Ww) \theta^*$ with the same loss as $\theta^*$ over $c_{[T]}$. Combining with a net argument similar as above proves the statement.
\end{proof}

\paragraph{Remark.}
\Cref{thm:learn} shows that the learned model has risk comparable to that of the best linear classifier on the walk statistics, given sufficient data. To see the benefit of weighting schemes, let us now compare to the unweighted case. In the unweighted case, $\Lambda$ is the identity matrix, $\log \mathcal{N}\left(\Wfamily, \frac{\epsilon }{8 B_\theta L_f} \right)$ reduces to 0, and $B(\Ww)$ reduces to   
\begin{align}
    B_0 :=   \max_{G \sim \mathcal{D}} \|c_{[T]}(G)\|_2 \|\beta^*\|_2.
\end{align}  
Therefore, our method needs extra samples to learn $\Ww$, leading to the extra error terms related to $ \log \mathcal{N}\left(\Wfamily, \frac{\epsilon }{8 B_\theta L_f} \right)$. 
On the other hand, the benefit of weighting is replacing the factor $B_0$ above with $\min_{\Ww} B(\Ww)$. If there is $\Ww^*$ with $B(\Ww^*) \ll B_0$, the error is significantly reduced. Therefore, there is a trade-off between the reduction of error for learning classifiers on an appropriate weighted representation and the additional samples needed for learning an appropriate weighting.  

The benefit of weighting can be significant in practice. $\min_{\Ww} B(\Ww)$ can be much smaller than $B_0$, especially when some features (i.e., walk types) in $c_{[T]}$ are important while others are not, which is true for many real-world applications.

For a concrete example, suppose $c_{[T]}(G)$ is $s$-sparse with each non-zero entry being some constant $c$. Suppose only a few of the features are useful for the prediction. In particular, $\beta^*$ is $\rho$-sparse with each non-zero entry being some constant $b$, and $\rho \ll s$. Suppose there is a weighting $\Ww^*$ that leads to weight $\Upsilon$ on the entries corresponding to the $\rho$ important features (i.e., the non-zero entries in $\beta^*$), and weight $\upsilon$ for the other features where $|\upsilon| \ll |\Upsilon|$.
Then it can be shown that
\begin{align}
    B_0 & =  \sqrt{s c^2 } \sqrt{\rho b^2} = bc \sqrt{\rho s},
    \\
    \underset{\Ww}{\min} B(\Ww) & \le  \sqrt{ \rho (\Upsilon c)^2  + (s - \rho) (c \upsilon)^2 } \sqrt{\rho (b/\Upsilon)^2}
\end{align}
and thus
\begin{align}
    \frac{\underset{\Ww}{\min} B(\Ww)}{B_0} \le   \sqrt{\frac{\rho}{s} + \left(1 -\frac{\rho}{s}\right) \left(\frac{\upsilon}{\Upsilon} \right)^2}.
\end{align}
Since $\rho \ll s$ and $|\upsilon| \ll |\Upsilon|$, $\underset{\Ww}{\min}\; B(\Ww)$ is much smaller than $B_0$, so the weighting can significantly reduce the error. 
This demonstrates that with proper weighting highlighting important features and suppressing irrelevant features for prediction, the error can be much smaller than the error for without weighting.

\subsection{Analysis for the General Setting} \label{sec:general_analysis}

Here we analyze the more general case where $\Wv$ and $\Wg$ may not be the identity matrix $I$ and the number of attributes $C \ge 1$. We assume that the activation function $\sigma$ is the leaky rectified linear unit:
\begin{align}
    \sigma(z) = \max\{\alpha z, z\} \textrm{~for some~} \alpha \in [0,1].
\end{align}
This includes the following special cases: (1) the linear activation analyzed above corresponds to $\alpha=1$; (2) the commonly used rectified linear unit (ReLU) corresponds to $\alpha=0$.
We also note that while our analysis is for the leaky rectified linear unit, it can easily be generalized to any piece-wise linear function. 

We will need to generalize the notations. 
Recall that $C$ is the number of attributes, $k_j$ is the number of possible values for the $j$-th attribute. Let $K_C := \prod_{j=1}^C k_j$ denote the number of possible attribute value vector.
Also, $h_i^j \in \{0,1\}^{k_j}$ is the one hot vector for the $j$-th attribute on vertex $i$. $h_i$ denotes the one hot vector for vertex $i$, which is the concatenation $[h_i^1,\ldots, h_i^{C}] \in \{0,1\}^K$. 
The $\ell$-th column of the embedding parameter matrix $W^j \in \mathbb{R}^{r \times k_j}$ is an embedding vector for the $\ell$-th value of the $j$-th attribute, and the parameter matrix $W \in \mathbb{R}^{r \times K}$ is the concatenation $W = [W^1, W^1, \ldots, W^{C}]$ with $K=\sum_{j=0}^{C-1} k_j$. Finally, given an attribute vector $u = [u^1, u^2, \ldots, u^C]$ where $u^j$ is the value for the $j$-th attribute, let Let $W(u)$ denote the embedding for $u$, i.e., $W(u) = W h(u)$ where $h(u) = [h(u^1), h(u^2), \ldots, h(u^C)]$ and $h(u^j)$ is the one-hot vector of $u^j$.  

We define the walk statistics $c^i_{(n)}$ for each vertex $i$ and the walk statistics $c_{(n)}$ for the whole graph as follows.
\begin{definition}[Walk Statistics for the General Case]
A walk type of length $n$ is a sequence of $n$ attribute vectors $v = (v_1, v_2, \cdots, v_n)$ where each $v_i$ is an attribute vector of $C$ attributes.
The walk statistics vector $c^i_{(n)}(G) \in \mathbb{R}^{K_C^n}$ for vertex $i \in [m]$ is the histogram of all walk types of length $n$ beginning from $i$ in the graph $G$, i.e., each entry is indexed by a walk type $v$ and the entry value is the number of walks beginning from vertex $i$ with sequence of attribute value vectors $v$ in the graph. 
Furthermore, let $c^i_{[T]}(G)$ be the concatenation of $c^i_{(1)}(G), \dots, c^i_{(T)}(G)$, and let $c_{(n)}(G) = \sum_{i \in [m]} c^i_{(n)}(G)$ and $c_{[T]}(G) = \sum_{i \in [m]} c^i_{[T]}(G)$. When $G$ is clear from the context, we write $c^i_{(n)}, c^i_{[T]}, c_{(n)}, c_{[T]}$ for short. 
\end{definition}
So the definition is similar to that for the simplified case, except that now the walk statistics for each vertex is also defined, and a walk type considers all $C$ attributes. When $C=1$, the $c_{(n)}$ and $c_{[T]}$ here reduces to those defined in the simplified setting.  Similarly, the definition of the walk weight is the same as that in the simplified setting, except that it is defined over the generalized  walk types. 

\paragraph{The Effect of Weighting on Representation.}
We will first consider the representation power, showing that $\tilde{F}^i_{(n)} := [W_g F_{(n)}]_i$ is a linear mapping of $c^i_{(n)}$. 
This is based on the observation that for the leaky ReLU unit, we have $\sigma(z) = \Gamma(z) z$ where $\Gamma(z) = \alpha \mathbb{I}[z < 0 ] + \mathbb{I}[z \ge 0 ].$
This inspires the following notation.  
\begin{definition}
Given a vector $u \in \mathbb{R}^{r'}$, define a diagonal matrix $\Gamma(u) \in \mathbb{R}^{r' \times r'}$ with diagonal entries
\begin{align}
    [\Gamma(u)]_{ii} = \alpha \mathbb{I}[u_i < 0 ] + \mathbb{I}[u_i \ge 0 ].
\end{align}
Let $(\Wv W)^{\{n\}}$ be a matrix with $K_C^n$ column corresponding to all possible length-$n$ walk types, with the column indexed by a walk type $v = (v_1,\ldots, v_n)$ being $g_1 \odot g_2 \odot \cdots \odot g_n$ with $g_i = \Gamma(\Wv W( v_i)) \cdot \Wv W (v_i)$.
\end{definition}
 
The following theorem then shows that $\tilde{F}^i_{(n)}$ can be a compressed version of the walk statistic for vertex $i$, weighted by the weighting parameter matrix $\Wv, \Wg$ and also by the attention scores ${S}$.

\begin{theorem} \label{thm:rep_gen}
Assume Assumption~\ref{ass:weight}.
The embedding $\tilde{F}^i_{(n)} := [W_g F_{(n)}]_i$ is a linear mapping of the walk statistics $c^i_{(n)}$ for any $i \in [m]$: 
\begin{align}
    \tilde{F}^i_{(n)} = \Wg (\Wv W)^{\{n\}} \Lambda_{(n)} c^i_{(n)}.
\end{align}
where $\Lambda_{(n)}$ is a $K_C^n$-dimensional diagonal matrix, whose columns are indexed by walk types $v$ and have diagonal entries $\lambda(v)$.
Therefore,
\begin{align}
    f_{[T]} = \sum_{i=1}^m \sigma(\mathcal{M}\Lambda c^i_{[T]})
\end{align}
where $\mathcal{M}$ is a block-diagonal matrix with diagonal blocks $\Wg (\Wv W)^{(1)} , \Wg (\Wv W)^{\{2\}}, \ldots, \Wg (\Wv W)^{\{T\}}$, and $\Lambda$ is block-diagonal with blocks $\Lambda_{(1)}, \Lambda_{(2)}, \ldots, \Lambda_{(T)}$.
\end{theorem}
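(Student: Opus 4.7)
The plan is to mirror the proof of Theorem \ref{thm:rep}, generalizing each ingredient to account for the leaky ReLU activation $\sigma$, the weighting matrices $\Wv$ and $\Wg$, and the multi-attribute setting $C \ge 1$. The key algebraic observation is that for leaky ReLU, $\sigma(z) = \Gamma(z)\, z$ for the diagonal matrix $\Gamma(z)$ defined in the theorem statement, so each activation can be absorbed as a (vector-dependent) diagonal scaling; since the $\Gamma$-matrices are diagonal, they commute with the element-wise products that drive the walk aggregation.

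First I would prove a walk-expansion lemma analogous to Lemma \ref{lem:latent}: for every vertex $i \in [m]$ and every $n \ge 1$,
\begin{align}
    [F_{(n)}]_i \;=\; \sum_{p \in \mathcal{P}_{i,n}} \lambda(v_p) \bigodot_{k \in p} [F_{(1)}]_k.
\end{align}
The base case $n=1$ is immediate. For the inductive step, apply the update
\begin{align}
    [F_{(n)}]_i \;=\; \Big(\sum_{j \in \mathcal{N}(i)} [S_n]_{ji}\, [F_{(n-1)}]_j\Big) \odot [F_{(1)}]_i,
\end{align}
substitute the inductive hypothesis, and use Assumption \ref{ass:weight} to identify $[S_n]_{ji} = S(f_j, f_i)$ with the factor of $\lambda$ contributed by the edge $(j,i)$. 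Prepending $i$ to each walk in $\mathcal{P}_{j,n-1}$ for $j \in \mathcal{N}(i)$ bijectively enumerates $\mathcal{P}_{i,n}$, and the product of weights along the extended walk is exactly $\lambda(v_p)$ for the new walk type. This step is identical in spirit to the simplified case and carries over unchanged, since it uses only the structure of the walk-attention update, not the form of $F_{(1)}$.

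Next I would translate the walk expansion into the claimed matrix form. For vertex $k$ with attribute vector $u_k$, by definition
\begin{align}
    [F_{(1)}]_k \;=\; \sigma(\Wv W h_k) \;=\; \Gamma(\Wv W(u_k))\, \Wv W(u_k) \;=\; g_{u_k},
\end{align}
so for any walk $p$ with attribute sequence $v_p = (u_{k_1},\dots,u_{k_n})$,
\begin{align}
    \bigodot_{k \in p} [F_{(1)}]_k \;=\; g_{u_{k_1}} \odot \cdots \odot g_{u_{k_n}} \;=\; (\Wv W)^{\{n\}} e_{v_p},
\end{align}
where $e_{v_p}$ is the one-hot indicator of the walk type of $p$. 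Summing over $p \in \mathcal{P}_{i,n}$, grouping walks by type, and multiplying on the left by $\Wg$ yields
\begin{align}
    \tilde F^{\,i}_{(n)} \;=\; \Wg\,[F_{(n)}]_i \;=\; \Wg (\Wv W)^{\{n\}} \sum_{v} \lambda(v)\, c^i_{(n)}(v)\, e_v \;=\; \Wg (\Wv W)^{\{n\}} \Lambda_{(n)} c^i_{(n)},
\end{align}
which is the first claim.

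For the second claim, apply the graph-level activation and summation: by the definition in Algorithm \ref{alg:aware},
\begin{align}
    f_{(n)} \;=\; \sigma(\Wg F_{(n)}) \mathbf{1} \;=\; \sum_{i=1}^m \sigma(\tilde F^{\,i}_{(n)}) \;=\; \sum_{i=1}^m \sigma\!\big(\Wg (\Wv W)^{\{n\}} \Lambda_{(n)} c^i_{(n)}\big).
\end{align}
Concatenating over $n = 1,\dots,T$ and packaging the blocks $\Wg (\Wv W)^{\{n\}}$ and $\Lambda_{(n)}$ into the block-diagonal matrices $\mathcal{M}$ and $\Lambda$, together with the fact that $c^i_{[T]}$ is the concatenation of the $c^i_{(n)}$, gives $f_{[T]} = \sum_{i=1}^m \sigma(\mathcal{M}\Lambda c^i_{[T]})$. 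The main obstacle is bookkeeping rather than conceptual: one has to keep the per-vertex walk statistics $c^i_{(n)}$ separated through the nonlinearity (since $\sigma$ does not commute with summation over $i$), which is precisely why the theorem states a sum of $\sigma(\cdot)$ over vertices instead of $\sigma$ of a single expression, unlike the linear case in Theorem \ref{thm:rep}.
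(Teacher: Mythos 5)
Your proposal is correct and follows essentially the same route as the paper: it reuses the walk-expansion lemma (Lemma~\ref{lem:latent}), absorbs the leaky ReLU via $\sigma(z)=\Gamma(z)z$ so that $[F_{(1)}]_k$ becomes a column of the generalized product $(\Wv W)^{\{n\}}$, and then groups walks by type to obtain $\tilde F^i_{(n)} = \Wg (\Wv W)^{\{n\}}\Lambda_{(n)} c^i_{(n)}$. The only difference is presentational — you re-derive the lemma and spell out the graph-level summation $f_{(n)}=\sum_i \sigma(\tilde F^i_{(n)})$, which the paper leaves implicit — so there is nothing to correct.
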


\begin{proof}
The proof is similar to that of Theorem~\ref{thm:rep}. 

First, we note that Lemma~\ref{lem:latent} still applies to the general case, so can be used to prove the theorem statement.
Recall that $h_k$ is the one-hot vector for the attributes on vertex $k$. Let $e_p \in \{0,1\}^{K_C^n}$ be the one-hot vector for the walk type of a walk $p$. 
By the definition of $\tilde{F}^i_{(n)}$ and by Lemma~\ref{lem:latent},
\begin{align}
    \tilde{F}^i_{(n)} 
    & = [W_g F_{(n)}]_i
    \\
    & = \Wg  [F_{(n)}]_i 
    \\
    & = \Wg  \sum_{p \in \mathcal{P}_{i,n}} \lambda(v_p) \left[\bigodot_{k \in p} [F_{(1)}]_k\right].
\end{align}
Then by the definition of $F_{(1)}$, 
\begin{align}
    \tilde{F}^i_{(n)}   
    & = \Wg  \sum_{p \in \mathcal{P}_{i,n}} \lambda(v_p) \left[\bigodot_{k \in p} \sigma(\Wv W h_k)\right]
    \\
    & =  \Wg \sum_{p \in \mathcal{P}_{i,n}} \lambda(v_p) \left[\bigodot_{k \in p} (\Gamma(\Wv W h_k) \cdot \Wv W h_k)\right]
    \\
    & = \Wg \sum_{p \in \mathcal{P}_{i,n}} \lambda(v_p) (\Wv W)^{\{n\}} e_p
    \\
    & = \Wg (\Wv W)^{\{n\}} \sum_{p \in \mathcal{P}_{i,n}} \lambda(v_p)  e_p
    \\
    & = \Wg  (\Wv W)^{\{n\}} \Lambda_{(n)} c^i_{(n)}.
\end{align}
The second line follows from the property of $\sigma$ and the definition of $\Gamma$. 
The third line follows from the definitions of $(\Wv W)^{\{n\}}$ and $e_p$. The last line follows from the definition of $\Lambda_{(n)}$ and $c^i_{(n)}$. 
\end{proof}

The theorem shows that in the general case, before applying the last activation, the embedding $\tilde{F}^i_{(n)}$ is a linear mapping of the walk statistics $c^i_{(n)}$, with a more complicated mapping $\Wg (\Wv W)^{\{n\}} \Lambda_{(n)}$. 
On the other hand, the final graph embedding $f_{[T]}$ is no longer a linear mapping of the walk statistics in general, but is a sum of the nonlinear transformation of the linear embedding $\tilde{F}^i_{(n)}$'s. Only when $\sigma$ is the identity function, $f_{[T]} = \sum_{i=1}^m \sigma(\mathcal{M}\Lambda c^i_{[T]}) = \sum_{i=1}^m \mathcal{M}\Lambda c^i_{[T]} = \mathcal{M}\Lambda \sum_{i=1}^m  c^i_{[T]} = \mathcal{M}\Lambda c_{[T]}$ becomes a linear mapping, and recovers the result in the simplified setting. 
Finally, similarly as before, with properly set $\Wg, \Wv, W$, the linear mapping $\mathcal{M}$ can satisfy RIP. We will assume this in the following analysis. 

\paragraph{The Effect of Weighting on Learning.}
Now we are ready to analyze the learning performance. 
Suppose we learn a classifier $h$ on top of $f_{[T]}$ together with the parameters $W, \Wv, \Ww, \Wg$ in the model.
 
Formally, let $\Wall := (W, \Wv, \Ww, \Wg)$. We consider learning $\Wall$ from a hypothesis class $\Wfamily$, and learning the classifier $h$ from a classifier hypothesis class $\mathcal{H} = \{h_\theta\}$ with a parameter $\theta$.
Let $\Lambda(\Wall)$ and $f_{[T]}(G; \Wall)$ denote the weights and representation given by $\Wall$. 
For prediction, we consider binary classification with the logistic loss $\ell(g, y) = \log(1+\exp(-gy))$ where $g$ is the prediction and $y$ is the true label. 
Let $\ell_\mathcal{D}(\theta, \Ww)$ be the risk of a classifier with a parameter $\theta$ on $f_{[T]}(G; \Wall)$ over the data distribution $\mathcal{D}$, and let $\ell_{\mathcal{S}}(\theta, \Wall)$ denote the risk over the training dataset $S$. Recall that $\ell^*_{\mathcal{D}}$ denote the risk of the best linear classifier on $ c_{[T]}$.
Suppose we have a dataset $S = \{(G_i, y_i)\}_{i=1}^M$ of $M$ i.i.d.\ sampled from $\mathcal{D}$, and $\hat{\theta}, \Wallhat$ are the parameters learned via $\ell_2$-regularization:
\begin{align}
\hat{\theta}, \Wallhat = \argmin_{\Wall \in \Wfamily, h_\theta \in \mathcal{H}}  \ \ \ell_{\mathcal{S}}(\theta, \Wall) := \frac{1}{M} \sum_{i=1}^M \ell\Big(h_\theta \left (f_{[T]}(G_i; {\Wall}) \right), y_i\Big).
\end{align}

A technical challenge is that $f_{[T]}$ is no longer a linear mapping, so we cannot directly apply the guarantees about regression on RIP linear mappings. To address this challenge, we compare the power of our nonlinear learning to the linear learning. Formally, let $f^\textrm{lin}_{[T]}(G; \Wall)$ be the linear mapping induced by $\Wall$:
\begin{align}
    f^\textrm{lin}_{[T]}(G; \Wall) := \mathcal{M}\Lambda c_{[T]}(G)
\end{align} 
and let
$\hat{\theta}^\textrm{lin}$ and $\Wallhat^\textrm{lin}$ be the parameters learned via $\ell_2$-regularization with regularization coefficient $B_\theta$:
\begin{align}
\hat{\theta}^\textrm{lin}, \Wallhat^\textrm{lin} = \argmin_{\Wall \in \Wfamily, \|\theta\|_2\le B_\theta}  \ \ \ell^\textrm{lin}_S(\theta, \Wall) := \frac{1}{M} \sum_{i=1}^M \ell\Big(\langle \theta,  f^\textrm{lin}_{[T]}(G_i; {\Wall}) \rangle, y_i\Big).
\end{align}
We introduce the following notation to measure the power of the classifier from $\mathcal{H}$ on $f_{[T]}$ compared to that of a linear classifier on the linear mapping $f^\textrm{lin}_{[T]}$:
\begin{align}
    \delta_S(\Wfamily,\mathcal{H}, B_\theta) := \min_{\Wall \in \Wfamily, h_\theta \in \mathcal{H}}  \ \ \ell_{\mathcal{S}}(\theta, \Wall) - \min_{\Wall \in \Wfamily, \|\theta\|_2\le B_\theta}  \ \ \ell^\textrm{lin}_S(\theta, \Wall).
\end{align}

\begin{theorem} \label{thm:learn-general}  
Assume Assumption~\ref{ass:weight}.
Assume $c_{[T]}$ is $s$-sparse, $\mathcal{M}$ satisfies $(2s,\epsilon_0)$-RIP, $\Lambda(\Wall)$ is invertible and $f_{[T]}(G; \Wall)$ is $L_f$-Lipschitz over $\Wfamily$. For any $\delta, \epsilon \in (0,1)$, there are  regularization coefficient values $B_\theta$ such that with probability $\ge 1-\delta$:
\begin{align}
&  \ell_{\mathcal{D}}(\hat{\theta}, \Wallhat)  \le \ell^*_{\mathcal{D}} + 2 \epsilon  +  O\left( \sqrt{\frac{r T + \mathcal{C}_\epsilon(\mathcal{W})   }{M} } \right)  + \min_{\Ww \in \Wfamily} B(\Wall) \times O\left(  \sqrt{\epsilon_0 + \frac{  \mathcal{C}_\epsilon(\mathcal{W})}{M}  }  \right) + \delta_S(\Wfamily,\mathcal{H}, B_\theta)
\end{align}  
where  
\begin{align}
    \mathcal{C}_\epsilon(\mathcal{W}) & := \log \mathcal{N}\left(\Wfamily, \frac{\epsilon }{8 B_\theta L_f} \right) + \log\frac{1}{\delta}, 
    \\
    B( \Wall ) & :=  \max_{G \sim \mathcal{D}} \|\Lambda(\Wall) c_{[T]}(G)\|_2 \|\Lambda( \Wall )^{-1}\beta^*\|_2.
\end{align} 
\end{theorem}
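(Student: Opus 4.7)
The plan is to recycle the three-step skeleton of Theorem~\ref{thm:learn}, inserting the nonlinearity slack $\delta_S$ exactly where empirical-risk minimization over the true (nonlinear) model is compared with ERM over its linear surrogate. Concretely, I would write
\begin{align*}
\ell_{\mathcal{D}}(\hat\theta,\Wallhat) - \ell^*_{\mathcal{D}}
&= \underbrace{\bigl[\ell_{\mathcal{D}}(\hat\theta,\Wallhat)-\ell_{\mathcal{S}}(\hat\theta,\Wallhat)\bigr]}_{\text{(A)}} \\
&\quad + \underbrace{\bigl[\ell_{\mathcal{S}}(\hat\theta,\Wallhat) - \min_{\Wall,\|\theta\|\le B_\theta}\ell^{\textrm{lin}}_S(\theta,\Wall)\bigr]}_{\text{(B)}\,=\,\delta_S} \\
&\quad + \underbrace{\bigl[\min_{\Wall,\|\theta\|\le B_\theta}\ell^{\textrm{lin}}_S(\theta,\Wall) - \ell^*_{\mathcal{D}}\bigr]}_{\text{(C)}}
\end{align*}
and bound each piece separately. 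Piece (B) is exactly the definition of $\delta_S(\Wfamily,\mathcal{H},B_\theta)$ combined with ERM optimality of $(\hat\theta,\Wallhat)$, so no additional work is required there.

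For piece (A), I would reuse the covering-number argument of Lemma~\ref{lem:uniform}.(1) verbatim. That lemma only used $L_f$-Lipschitzness of $f_{[T]}(G;\cdot)$ on $\Wfamily$ together with the $B_\theta$-ball constraint on $\theta$, both of which are hypothesized here, so the proof lifts to the nonlinear $f_{[T]}$ of Theorem~\ref{thm:rep_gen} unchanged and yields the $O(\sqrt{(rT+\mathcal{C}_\epsilon(\mathcal{W}))/M})+\epsilon$ term. For piece (C), I would apply Lemma~\ref{lem:uniform}.(2) with $\Wall$ playing the role of $\Ww$: under $(2s,\epsilon_0)$-RIP of $\mathcal{M}$ and invertibility of $\Lambda(\Wall)$, the lemma gives, for each fixed $\Wall$, the empirical-risk bound $\ell^*_{\mathcal{D}}+O(B(\Wall)\sqrt{\epsilon_0+\mathcal{C}_\epsilon(\mathcal{W})/M})+\epsilon$ on $\hat\theta^{\textrm{lin}}(\Wall)$; taking the infimum over $\Wall\in\Wfamily$ installs the $\min_{\Wall}B(\Wall)$ prefactor from the theorem. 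Adding (A)+(B)+(C) and merging the two $\epsilon$'s into a single $2\epsilon$ yields the claim.

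The hard part will be the conceptual status of piece (B) rather than its derivation: one must accept that $\delta_S$ may be negative (so a nonlinear head $h_\theta\in\mathcal{H}$ can actually outperform a linear classifier on the compressed features), in which case the extra term is a gain rather than a loss. The only genuinely new technical check is that step (A) still works when $f_{[T]}=\sum_i \sigma(\mathcal{M}\Lambda c^i_{[T]})$ uses leaky-ReLU rather than a linear $\sigma$; but since the hypothesis states Lipschitzness of the whole map $f_{[T]}(G;\Wall)$ in $\Wall$, the nonlinearity is absorbed into the Lipschitz constant and the covering argument proceeds exactly as in the linear case. Everything else is bookkeeping driven by the infimum over $\Wall$ and the $2\epsilon$ budget split between the net-approximation error and the compressed-sensing error.
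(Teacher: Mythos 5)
Your proposal is correct and follows essentially the same route as the paper's proof: a uniform-convergence/covering step (Lemma~\ref{lem:uniform}.(1) with $\Ww$ replaced by $\Wall$), the observation that the gap between the nonlinear ERM value $\ell_{\mathcal{S}}(\hat\theta,\Wallhat)$ and the linear-surrogate ERM value is exactly $\delta_S(\Wfamily,\mathcal{H},B_\theta)$, and then the compressed-learning bound (Lemma~\ref{lem:uniform}.(2) analogue) applied to $\ell^{\textrm{lin}}_S$ for each fixed $\Wall$ followed by minimizing over $\Wall$. Your handling of the three pieces, including the remark that $\delta_S$ may be negative and that the Lipschitz hypothesis on $f_{[T]}(G;\Wall)$ absorbs the leaky-ReLU nonlinearity in the covering argument, matches the paper's argument in all essentials.
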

\begin{proof}
The proof is similar to that in the simplified setting. 
First, following the same proof for  Lemma~\ref{lem:uniform}.(1) with $\Ww$ replaced by $\Wall$, we have
\begin{align}
  \ell_{\mathcal{D}}(\hat{\theta}, \Wallhat) \le  \ell_{\mathcal{S}}(\hat{\theta}, \Wallhat) + O\left( \sqrt{\frac{1}{M}\left ( r T + \log \mathcal{N}\left(\Wfamily, \frac{\epsilon }{8 B_\theta L_f} \right) +  \log\frac{1}{\delta} \right)} \right) + \epsilon.
\end{align}

Furthermore, since $\hat{\theta}, \Wallhat$ are the optimal solution for the regression on $f_{[T]}$ and $\hat{\theta}^\textrm{lin},\Wallhat^\textrm{lin}$ are that for the regression on $f^\textrm{lin}_{[T]}$,
we have 
\begin{align}
    \ell_{\mathcal{S}}(\hat{\theta}, \Wallhat) =  \ell^\textrm{lin}_S(\hat{\theta}^\textrm{lin}, \Wallhat^\textrm{lin}) + \delta_S(\Wfamily,\mathcal{H}, B_\theta),
\end{align}
and for any $\Wall \in \Wfamily$, 
\begin{align}
    \ell^\textrm{lin}_S(\hat{\theta}^\textrm{lin}, \Wallhat^\textrm{lin}) \le \ell^\textrm{lin}_S(\hat{\theta}^\textrm{lin}(\Wall), \Wall).
\end{align}
Finally, following the same proof for Lemma~\ref{lem:uniform}.(2) with $\Ww$ replaced by $\Wall$, we have
\begin{align}
    \ell^\textrm{lin}_S(\hat{\theta}^\textrm{lin}(\Wall), \Wall) \le \ell^*_\mathcal{D} + O\left( B(\Wall) \sqrt{\epsilon_0 + \frac{1}{M}\left( \log\frac{1}{\delta}  +  \log \mathcal{N}\left(\Wfamily, \frac{\epsilon }{8 B_\theta L_f} \right) \right) }  \right) + \epsilon.
\end{align}
Combining the above inequalities proves the theorem. 
\end{proof}

The theorem shows a similar conclusion as that in the simplified setting. In particular, when $\sigma$ is the identity function and $\mathcal{H}$ is the class of linear classifiers with norms bounded by $B_\theta$, we have $\delta(\Wfamily, \mathcal{H}, B_\theta) = 0$, and thus the bound here reduces to the bound in the simplified setting. 
In the general case, when we choose a powerful enough classifier class $\mathcal{H}$ and the nonlinear embedding $f_{[T]}$ preserves enough information, then there exists $\hat{\theta}, \Wallhat$ that achieves better predictions than the linear counterparts $\hat{\theta}^\textrm{lin}, \Wallhat^\textrm{lin}$. This leads to a small (or even negative) $\delta(\Wfamily, \mathcal{H}, B_\theta)$, and thus the theorem for the general case gives a similar or better bound than that in the simplified case.  

\section{Experiments} \label{sec:experiments}

\subsection{Experimental Setup}

\mypara{Datasets.} We perform experiments on graph-level prediction tasks from two domains: molecular property prediction (61 tasks from 11 benchmarks) and social networks (4 benchmarks).\footnote{Our code can be accessed at \url{https://github.com/mehmetfdemirel/aware}}
Specifically, we consider 37 classification (33 molecular + 4 social networks) and 28 regression (on molecular) tasks in total.
\Cref{tab:dataset_details} provides details about the datasets used in our experiments. 

\begin{table*}[h]
    \centering
    \caption{
        Details on the benchmark datasets used in our experiments
        }
    \resizebox{0.9\textwidth}{!}{
    \label{tab:dataset_details}
    \begin{tabular}{l c c c}
        \toprule
        \textbf{Dataset} & \textbf{\# of Tasks} & \textbf{Type} & \textbf{Domain} \\ \toprule
        \textsc{IMDB-BINARY}~\citep{yanardag2015deep} & $1$ & Classification & Social Network \\
        \textsc{IMDB-MULTI}~\citep{yanardag2015deep} & $1$ & Classification & Social Network \\
        \textsc{REDDIT-BINARY}~\citep{yanardag2015deep} & $1$ & Classification & Social Network \\
        \textsc{COLLAB}~\citep{yanardag2015deep} & $1$ & Classification & Social Network \\
        \midrule
        \textsc{Mutagenicity}~\citep{kazius2005derivation} & $1$ & Classification & Chemistry \\
        \textsc{Tox21}~\citep{tox21} & $12$ & Classification & Chemistry \\
        \textsc{ClinTox}~\citep{artemov2016integrated, gayvert2016data} & $2$ & Classification & Chemistry \\
        \textsc{HIV}~\citep{hiv_2017} & $1$ & Classification & Chemistry \\
        \textsc{MUV}~\citep{rohrer2009maximum} & $17$ & Classification & Chemistry \\
        \midrule
        \textsc{Delaney}~\citep{delaney_esol:_2004} & $1$ & Regression & Chemistry \\
        \textsc{Malaria}~\citep{gamo_thousands_2010} & $1$ & Regression & Chemistry \\
        \textsc{CEP}~\citep{hachmann_harvard_2011} & $1$ & Regression & Chemistry \\
        \textsc{QM7}~\citep{blum2009970} & $1$ & Regression & Chemistry \\
        \textsc{QM8}~\citep{ramakrishnan2015electronic} & $12$ & Regression & Chemistry \\
        \textsc{QM9}~\citep{ruddigkeit2012enumeration} & $12$ & Regression & Chemistry \\
        \bottomrule
    \end{tabular}
}
\end{table*}

\mypara{Baseline methods.} We consider WL kernels \citep{shervashidze2011weisfeiler}, Morgan fingerprints \citep{morgan1965generation}, and N-Gram Graph~\citep{liu2019n} as baselines for graph representation learning. For the predictor on top of the representations, we use SVM for WL kernels, and Random Forest and XGBoost \citep{chen2016xgboost} for Morgan fingerprints and N-Gram Graph. We also consider several recent end-to-end trainable GNNs that are commonly used, including GCNN~\citep{duvenaud2015convolutional}, GAT~\citep{velivckovic2017graph}, GIN~\citep{xu2018how}, Attentive FP \citep{xiong2019pushing}, and PNA \citep{corso2020principal}.
Note that we do not consider recent GNN models that use extra edge/3D information or self-supervised pre-training as baselines in order to avoid unfair comparison to AWARE---since our analysis throughout this paper focuses on \emph{the standard setting} (see \Cref{sec:prelim}). Attentive FP and PNA were run without using extra edge information as this is not their main contribution.

\mypara{Evaluation.} 
We perform \textit{single-task learning} for each task in each dataset. Each dataset is randomly split into training, validation, and test sets with a ratio of 8:1:1, respectively. We report the average performance across 5 runs (datasets are split independently for each run). We select optimal hyperparameters using grid search. We present the full hyperparameter details as well as an ablation study on their effects below. For the molecular property prediction tasks, we use evaluation metrics from the benchmark paper~\citep{wu2018moleculenet}, except for the \textsc{MUV} dataset for which we use ROC-AUC following recent studies~\citep{hu2019strategies,rong2020grover}. For the social network tasks, we follow the evaluation metrics from~\citep{xu2018how}.

\mypara{Hyperparameter Tuning.}
For {\AWARE}, we carefully perform a hyperparameter sweeping on the different candidate values listed in~\Cref{tab:hyper_parametric}.
\begin{table}[H]
    \centering
    \caption{Hyperparameter sweeping for {\AWARE}}
    \label{tab:hyper_parametric}
    \resizebox{0.6\textwidth}{!}{
    \begin{tabular}{c c}
        \toprule
    	\textbf{Hyperparameters} & \textbf{Candidate values} \\
    	\midrule
        Learning rate & 1e-3, 1e-4 \\
        \# of linear layers in the predictor: $L$ & 1, 2, 3 \\
        Maximum walk length: $T$& 3, 6, 9, 12 \\
        Vertex embedding dimension: $r$ & 100, 300, 500 \\
        Random dimension: $r'$ & 100, 300, 500 \\
        Optimizer & Adam \\
        \bottomrule
    \end{tabular}
    }
\end{table}
For all the molecular baseline methods other than GAT, Attentive FP, and PNA, the hyperparameter search strategy outlined in \citep{liu2019n} has been adopted. For GAT, we use their reported optimal hyperparameters \citep{velivckovic2017graph, yang2019analyzing}. For Attentive FP and PNA, we performed a hyperparameter tuning that included their reported optimal hyperparameters. For social network experiments, we perform hyperparameter tuning on PNA and Attentive FP, and use the optimal hyperparameters reported for the other baseline methods. In addition, for some of the social network datasets, we remove graphs with vertices more than a certain threshold (\textsc{REDDIT-BINARY}: 200, \textsc{COLLAB}: 100), because they have many vertices with a lot of neighbors and do not fit into memory for methods using one-hot feature encoding.

\mypara{Training Details.} We train {\AWARE} on 9 classification and 6 regression datasets, each of which consisting of multiple tasks, resulting in a total of 37 classification and 28 regression tasks. Each dataset is split into 5 different sets of training, validation, and test sets (i.e., 5 different random seeds) with a respective ratio of 8:1:1. We train the model for 500 epochs and use early stopping on the validation set with a patience of 50 epochs. No learning rate scheduler is used. 

\mypara{GPU Specifications.} 
In general, an NVIDIA GeForce GTX 1080 (8GB) GPU model was used in the training process to obtain the main experimental results. For some of the bigger datasets, we used an NVIDIA A100 (40 GB) GPU model.

\subsection{Results}

\begin{table*}[t]
    \centering
    \caption{
        Overall performance on all 15 datasets (65 tasks). We report (\# tasks with top-1 performance, \# tasks with top-3 performance).
        Models with no top-3 performance on a dataset are left blank.
        Models that are too slow, not well tuned, or not run due to model/dataset incompatibility are marked with ``--''. For full results with error bounds, see Tables \ref{tab:social_networks}, \ref{tab:appx_classification}, and \ref{tab:appx_regression}.
        }
    \resizebox{\textwidth}{!}{
    \label{tab:overall_performance}
    \resizebox{\textwidth}{!}{
    \begin{tabular}{lllccccccccc}
        \toprule
        \textbf{Dataset} & \textbf{\# Tasks} & \textbf{Metric} & \textbf{Morgan FP} & \textbf{WL Kernel} & \textbf{GCNN} & \textbf{GAT} & \textbf{GIN} & \textbf{Attentive FP} & \textbf{PNA} & \textbf{N-Gram Graph} &
        {\textbf{\AWARE}} \\ \toprule
        \textsc{IMDB-BINARY} & $1$ & ACC & -- & & & & & $(0,1)$ & $(0,1)$ & & $(1,1)$ \\
        \textsc{IMDB-MULTI} & $1$ & ACC & -- & & & & & $(0,1)$ & $(0,1)$ & & $(1,1)$ \\
        \textsc{REDDIT-BINARY} & $1$ & ACC & -- & & & & $(0,1)$ & & $(0,1)$ & & $(1,1)$ \\
        \textsc{COLLAB} & $1$ & ACC & --  &  &  &  & $(0,1)$ & & $(0,1)$ &  & $(1,1)$ \\
        \midrule
        \textsc{Mutagenicity} & $1$ & ACC & -- & & $(1,1)$ & & & & $(0,1)$ & & $(0,1)$ \\
        \textsc{Tox21} & $12$ & ROC & $(0,4)$ & $(0,2)$ & & & & $(0,5)$ & $(1,3)$ & $(4,11)$ & $(7,11)$  \\
        \textsc{ClinTox} & $2$ & ROC & & & & $(1,1)$ & $(0,1)$ & $(0,1)$ & $(0,1)$ & & $(1,2)$ \\
        \textsc{HIV} & $1$ & ROC & $(1,1)$ & & & & $(0,1)$ & & & $(0,1)$ & \\
        \textsc{MUV} & $17$ & ROC & $(2,7)$ & $(3,4)$ & $(0,8)$ & $(0,1)$ & $(0,3)$ & $(1,2)$ & $(1,6)$ & $(1,4)$ & $(9,16)$ \\
        \textsc{Delaney} & $1$ & RMSE & & & & & & $(0,1)$ & & $(0,1)$ & $(1,1)$ \\
        \textsc{Malaria} & $1$ & RMSE & $(1,1)$ & & & & & & $(0,1)$ & $(0,1)$ &  \\
        \textsc{CEP} & $1$ & RMSE & & & & & $(1,1)$ & $(0,1)$ & $(0,1)$ & & \\
        \textsc{QM7} & $1$ & MAE & & & & & & $(0,1)$ & & $(0,1)$ & $(1,1)$ \\
        \textsc{QM8} & $12$ & MAE & & & $(5,6)$ & & $(1,7)$ & & $(0,1)$ & $(0,11)$ & $(6,11)$ \\
        \textsc{QM9} & $12$ & MAE & & -- & $(3,12)$ & & $(4,7)$ & & & $(1,11)$ & $(4,6)$ \\
        \midrule
        \gray{Total} & \topthree{65} & \gray{} & \topthree{(4,13)} & \topthree{(3,6)} & \topthree{(9,27)} & \topthree{(1,2)} & \topthree{(6,22)} & \topthree{(1,13)} & \topthree{(2,18)} & \topthree{(6,41)} & \topthree{\bm{(33,53)}}\\
        \bottomrule
    \end{tabular}
    }
}
\end{table*}

\paragraph{Prediction Performance.}
For a quick overview, we present the relative performance of {\AWARE} compared to the baseline methods in \Cref{tab:overall_performance}. We observe that {\AWARE} achieves the best performance in 33 out of the 65 tasks, while being ranked in the top-3 performing methods for 53 tasks. 
In particular, {\AWARE} (even with a simple fully-connected predictor) significantly outperforms N-Gram Graph (which uses a powerful RF or XGB predictor) in 44 tasks, and achieves comparable performance in all other tasks.
This indicates that {\AWARE} can successfully learn a weighting scheme to selectively focus on the graph information that is important for the downstream prediction task. 

We also present complete results for all tasks with error bounds in Tables~\ref{tab:social_networks}, \ref{tab:appx_classification}, and \ref{tab:appx_regression}. These allow for more fine-grained inspection.
For example, we observe in Tables~\ref{tab:appx_classification} and \ref{tab:appx_regression} that both N-Gram Graph and {\AWARE} give overall stronger performance compared to other baselines across the \textsc{Tox21} tasks in Table~\ref{tab:appx_classification} and \textsc{QM8} tasks in Table~\ref{tab:appx_regression}.
This suggests that the tasks from these two datasets rely heavily on walk information, which can be well-exploited by approaches using walk-level aggregation. {\AWARE}, being able to highlight important walk types, can further improve the performance of N-Gram Graph---as observed in Tables~\ref{tab:appx_classification} and \ref{tab:appx_regression}.

\begin{table*}[t]
    \centering
    \caption{In this table, we present the performance of 8 models on 4 classification tasks in the domain of social networks (Morgan FP is excluded as it works only on molecular graphs). Experiments are run on 5 different random seeds, and the average of the 5 reported for each task along with their standard deviation in the subscript. The top-3 models in each task are highlighted in \hlgray{gray} and the best one is highlighted in \hlblue{blue}. Higher is better.}
    \resizebox{\textwidth}{!}{
    \begin{tabular}{ c  c  c c c  c  c  c  c  c  c }
        \toprule
        \textbf{Task} & \textbf{\# of Classes} & \textbf{Metric} & \textbf{WL Kernel} & \textbf{GCNN} & \textbf{GAT} & \textbf{GIN} & \textbf{Attentive FP} & \textbf{PNA} & \textbf{N-Gram Graph} &
        \textbf{{\AWARE}} \\
        \toprule
        \textsc{IMDB-BINARY} & 2 & ACC & $0.680_{\pm 0.022}$ & $0.698_{\pm 0.026}$ & $0.568_{\pm 0.047}$ & $0.696_{\pm 0.037}$ & \topthree{0.716_{\pm 0.022}} & \topthree{0.710_{\pm 0.011}} & $0.522_{\pm 0.036}$ & \best{0.740_{\pm 0.020}} \\
        \textsc{IMDB-MULTI} & 3 & ACC & $0.403_{\pm 0.027}$ & $0.459_{\pm 0.033}$ & $0.366_{\pm 0.025}$ & $0.473_{\pm 0.031}$ & \topthree{0.481_{\pm 0.021}} & \topthree{0.489_{\pm 0.031}} & $0.341_{\pm 0.019}$ & \best{0.499_{\pm 0.026}} \\
        \textsc{REDDIT-BINARY} & 2 & ACC & $0.892_{\pm 0.017}$	& $0.931_{\pm 0.013}$ & $0.900_{\pm 0.036}$ & \topthree{0.933_{\pm 0.009}} & $0.864_{\pm 0.029}$ & \topthree{0.938_{\pm 0.010}} & $0.764_{\pm 0.026}$ & \best{0.949_{\pm 0.014}} \\
        \textsc{COLLAB} & 3 & ACC & $0.567_{\pm 0.011}$ & $0.660_{\pm 0.009}$ & $0.616_{\pm 0.029}$ & \topthree{0.669_{\pm 0.014}} & $0.653_{\pm 0.012}$ & \topthree{0.675_{\pm 0.024}} & $0.376_{\pm 0.119}$ & \best{0.739_{\pm 0.017}} \\
        \bottomrule
    \end{tabular}
    }
    \label{tab:social_networks}
\end{table*}
\begin{table*}[ht!]
    \centering
    \caption{In this table, we present the performance of 9 models on 33 classification tasks from the domain of molecular property prediction.  Experiments are run on 5 different random seeds, and the average of the 5 run results is reported for each task along with their standard deviation in the subscript. The top-3 models in each task are highlighted in \hlgray{gray} and the best one is highlighted in \hlblue{blue} (breaking ties by checking more digits in the average result). We mark incompatible task/model pairs with a ``--''. Higher is better.}
    \resizebox{\textwidth}{!}{
    \begin{tabular}{ c  c  c  c  c  c  c  c  c  c  c }
        \toprule
        \textbf{Dataset/Task} & \textbf{Metric} & \textbf{Morgan FP} & \textbf{WL Kernel} & \textbf{GCNN} & \textbf{GAT} & \textbf{GIN} & \textbf{Attentive FP} & \textbf{PNA} & \textbf{N-Gram Graph} &
        \textbf{{\AWARE}} \\
        \toprule
        \textsc{Mutagenicity} & ACC & -- & $0.684_{\pm 0.083}$ & \best{0.758_{\pm 0.011}} & $0.601_{\pm 0.017}$ & $0.747_{\pm 0.019}$ & $0.657_{\pm 0.029}$  & \topthree{0.753_{\pm 0.013}} & $0.506_{\pm 0.011}$ & \topthree{0.757_{\pm 0.040}} \\
        \midrule
        \cellcolor{black}\textcolor{white}{\textsc{Tox21} tasks $\downarrow$} & \cellcolor{black} & \cellcolor{black}& \cellcolor{black}& \cellcolor{black}& \cellcolor{black}& \cellcolor{black}& \cellcolor{black}& \cellcolor{black}& \cellcolor{black}& \cellcolor{black}\\
        \textsc{NR-AR} & ROC & $0.763_{\pm 0.043}$ & $0.701_{\pm 0.068}$ & $0.762_{\pm 0.035}$ & $0.754_{\pm 0.058}$ & $0.759_{\pm 0.048}$ & \topthree{0.783_{\pm 0.035}} & \topthree{0.786_{\pm 0.039}} & $0.776_{\pm 0.049}$ & \best{0.786_{\pm 0.041}} \\
        \textsc{NR-AR-LBD} & ROC & $0.858_{\pm 0.048}$ & \topthree{0.861_{\pm 0.053}} & $0.844_{\pm 0.046}$ & $0.800_{\pm 0.056}$ & $0.830_{\pm 0.046}$ & $0.839_{\pm 0.065}$ & $0.838_{\pm 0.045}$ & \best{0.873_{\pm 0.039}} & \topthree{0.865_{\pm 0.054}} \\
        \textsc{NR-AhR} & ROC & \topthree{0.890_{\pm 0.010}} & $0.876_{\pm 0.017}$ & $0.886_{\pm 0.017}$ & $0.823_{\pm 0.020}$ & $0.872_{\pm 0.016}$ & $0.878_{\pm 0.011}$ & \best{0.901_{\pm 0.013}} & \topthree{0.897_{\pm 0.008}} & $0.889_{\pm 0.006}$ \\
        \textsc{NR-Aromatase} & ROC & $0.821_{\pm 0.024}$ & $0.818_{\pm 0.027}$ & $0.828_{\pm 0.024}$ & $0.744_{\pm 0.039}$ & $0.760_{\pm 0.053}$ & \topthree{0.844_{\pm 0.019}} & $0.837_{\pm 0.018}$ & \topthree{0.852_{\pm 0.013}} & \best{0.861_{\pm 0.019}} \\
        \textsc{NR-ER} & ROC & $0.726_{\pm 0.036}$ & $0.704_{\pm 0.031}$ & $0.737_{\pm 0.018}$ & $0.706_{\pm 0.042}$ & $0.683_{\pm 0.021}$ & \topthree{0.747_{\pm 0.014}} & $0.738_{\pm 0.030}$ & \topthree{0.754_{\pm 0.020}} & \best{0.765_{\pm 0.028}} \\
        \textsc{NR-ER-LBD} & ROC & \topthree{0.838_{\pm 0.043}} & $0.799_{\pm 0.033}$ & $0.813_{\pm 0.048}$ & $0.764_{\pm 0.023}$ & $0.772_{\pm 0.032}$ & $0.808_{\pm 0.037}$ & $0.815_{\pm 0.039}$ & \topthree{0.834_{\pm 0.030}} & \best{0.853_{\pm 0.059}} \\
        \textsc{NR-PPAR-gamma} & ROC & $0.840_{\pm 0.063}$ & $0.845_{\pm 0.060}$ & $0.816_{\pm 0.036}$ & $0.758_{\pm 0.035}$ & $0.780_{\pm 0.062}$ & \topthree{0.848_{\pm 0.053}} & $0.841_{\pm 0.067}$ & \topthree{0.857_{\pm 0.053}} & \best{0.862_{\pm 0.040}} \\
        \textsc{SR-ARE} & ROC & $0.820_{\pm 0.016}$ & $0.801_{\pm 0.029}$ & $0.809_{\pm 0.014}$ & $0.735_{\pm 0.020}$ & $0.794_{\pm 0.020}$ & $0.809_{\pm 0.028}$ & \topthree{0.821_{\pm 0.019}} & \best{0.851_{\pm 0.014}} & \topthree{0.828_{\pm 0.011}} \\
        \textsc{SR-ATAD5} & ROC & \topthree{0.850_{\pm 0.017}} & $0.814_{\pm 0.020}$ & $0.827_{\pm 0.052}$ & $0.754_{\pm 0.052}$ & $0.803_{\pm 0.050}$ & $0.807_{\pm 0.047}$ & $0.821_{\pm 0.055}$ & \best{0.853_{\pm 0.025}} & \topthree{0.841_{\pm 0.025}} \\
        \textsc{SR-HSE} & ROC & $0.797_{\pm 0.019}$ & \topthree{0.803_{\pm 0.037}} & $0.774_{\pm 0.037}$ & $0.686_{\pm 0.038}$ & $0.740_{\pm 0.062}$ & $0.787_{\pm 0.037}$ & $0.778_{\pm 0.027}$ & \topthree{0.808_{\pm 0.025}} & \best{0.820_{\pm 0.026}} \\
        \textsc{SR-MMP} & ROC & $0.890_{\pm 0.007}$ & $0.875_{\pm 0.017}$ & $0.877_{\pm 0.017}$ & $0.834_{\pm 0.014}$ & $0.872_{\pm 0.025}$ & \topthree{0.895_{\pm 0.018}} & $0.873_{\pm 0.019}$ & \topthree{0.905_{\pm 0.015}} & \best{0.905_{\pm 0.014}} \\
        \textsc{SR-p53} & ROC & \topthree{0.844_{\pm 0.012}} & $0.842_{\pm 0.044}$ & $0.818_{\pm 0.015}$ & $0.733_{\pm 0.036}$ & $0.817_{\pm 0.026}$ & $0.804_{\pm 0.026}$ & $0.843_{\pm 0.024}$ & \best{0.860_{\pm 0.019}} & \topthree{0.852_{\pm 0.030}} \\
        \midrule
        \cellcolor{black}\textcolor{white}{\textsc{ClinTox} tasks $\downarrow$} & \cellcolor{black} & \cellcolor{black}& \cellcolor{black}& \cellcolor{black}& \cellcolor{black}& \cellcolor{black}& \cellcolor{black}& \cellcolor{black}& \cellcolor{black}& \cellcolor{black}\\
        \textsc{CT\_TOX} & ROC & $0.813_{\pm 0.036}$ & $0.830_{\pm 0.057}$ & $0.860_{\pm 0.027}$ & $0.828_{\pm 0.075}$ & $0.859_{\pm 0.063}$ & \topthree{0.873_{\pm 0.053}} & \topthree{0.895_{\pm 0.043}} & $0.849_{\pm 0.024}$ & \best{0.905_{\pm 0.038}} \\
        \textsc{FDA\_APPROVED} & ROC & $0.795_{\pm 0.084}$ & $0.862_{\pm 0.029}$ & $0.866_{\pm 0.028}$ & \best{0.899_{\pm 0.033}} & \topthree{0.883_{\pm 0.025}} & $0.870_{\pm 0.070}$ & $0.879_{\pm 0.022}$ & $0.852_{\pm 0.044}$ & \topthree{0.895_{\pm 0.050}} \\
        \midrule
        \textsc{HIV} & ROC & \best{0.856_{\pm 0.012}} & $0.811_{\pm 0.015}$ & $0.813_{\pm 0.014}$ & $0.783_{\pm 0.015}$ & \topthree{0.829_{\pm 0.014}} & $0.796_{\pm 0.016}$ & $0.822_{\pm 0.013}$ & \topthree{0.843_{\pm 0.017}} & $0.825_{\pm 0.014}$ \\
        \midrule
        \cellcolor{black}\textcolor{white}{\textsc{MUV} tasks $\downarrow$} & \cellcolor{black} & \cellcolor{black}& \cellcolor{black}& \cellcolor{black}& \cellcolor{black}& \cellcolor{black}& \cellcolor{black}& \cellcolor{black}& \cellcolor{black}& \cellcolor{black}\\
        \textsc{MUV-466} & ROC & \topthree{0.765_{\pm 0.142}} & $0.708_{\pm 0.130}$ & $0.736_{\pm 0.061}$ & \topthree{0.749_{\pm 0.109}} & $0.705_{\pm 0.134}$ & $0.574_{\pm 0.161}$ & $0.713_{\pm 0.085}$ & $0.724_{\pm 0.100}$ & \best{0.830_{\pm 0.078}} \\
        \textsc{MUV-548} & ROC & $0.953_{\pm 0.036}$ & $0.917_{\pm 0.061}$ & \topthree{0.960_{\pm 0.022}} & $0.764_{\pm 0.117}$ & $0.793_{\pm 0.113}$ & $0.865_{\pm 0.056}$ & \topthree{0.966_{\pm 0.016}} & $0.925_{\pm 0.061}$ & \best{0.976_{\pm 0.016}} \\
        \textsc{MUV-600} & ROC & $0.536_{\pm 0.098}$ & $0.536_{\pm 0.106}$ & $0.570_{\pm 0.091}$ & $0.437_{\pm 0.095}$ & $0.575_{\pm 0.153}$ & $0.508_{\pm 0.128}$ & \topthree{0.680_{\pm 0.111}} & \topthree{0.675_{\pm 0.108}} & \best{0.687_{\pm 0.062}} \\
        \textsc{MUV-644} & ROC & $0.893_{\pm 0.068}$ & \best{0.944_{\pm 0.028}} & $0.885_{\pm 0.024}$ & $0.762_{\pm 0.161}$ & $0.749_{\pm 0.094}$ & $0.776_{\pm 0.133}$ & \topthree{0.913_{\pm 0.069}} & $0.799_{\pm 0.085}$ & \topthree{0.909_{\pm 0.029}} \\
        \textsc{MUV-652} & ROC & \topthree{0.725_{\pm 0.131}} & $0.653_{\pm 0.139}$ & \topthree{0.694_{\pm 0.177}} & $0.493_{\pm 0.124}$ & $0.645_{\pm 0.071}$ & $0.593_{\pm 0.111}$ & $0.659_{\pm 0.124}$ & $0.688_{\pm 0.117}$ & \best{0.819_{\pm 0.084}} \\
        \textsc{MUV-689} & ROC & $0.676_{\pm 0.277}$ & \topthree{0.735_{\pm 0.217}} & $0.671_{\pm 0.257}$ & $0.553_{\pm 0.247}$ & \topthree{0.775_{\pm 0.088}} & $0.452_{\pm 0.220}$ & $0.666_{\pm 0.172}$ & $0.669_{\pm 0.203}$ & \best{0.833_{\pm 0.077}} \\
        \textsc{MUV-692} & ROC & \best{0.693_{\pm 0.199}} & $0.447_{\pm 0.193}$ & $0.581_{\pm 0.235}$ & $0.626_{\pm 0.170}$ & \topthree{0.629_{\pm 0.118}} & $0.581_{\pm 0.174}$ & $0.618_{\pm 0.209}$ & $0.606_{\pm 0.147}$ & \topthree{0.639_{\pm 0.194}} \\
        \textsc{MUV-712} & ROC & $0.927_{\pm 0.058}$ & $0.889_{\pm 0.072}$ & \topthree{0.936_{\pm 0.038}} & $0.760_{\pm 0.162}$ & $0.773_{\pm 0.195}$ & \best{0.946_{\pm 0.040}} & $0.881_{\pm 0.119}$ & $0.812_{\pm 0.103}$ & \topthree{0.931_{\pm 0.059}} \\
        \textsc{MUV-713} & ROC & $0.554_{\pm 0.206}$ & \best{0.787_{\pm 0.093}} & \topthree{0.731_{\pm 0.109}} & $0.586_{\pm 0.109}$ & $0.567_{\pm 0.183}$ & $0.526_{\pm 0.094}$ & $0.648_{\pm 0.093}$ & $0.715_{\pm 0.089}$ & \topthree{0.781_{\pm 0.151}} \\
        \textsc{MUV-733} & ROC & \topthree{0.709_{\pm 0.101}} & $0.707_{\pm 0.108}$ & \topthree{0.751_{\pm 0.129}} & $0.637_{\pm 0.053}$ & $0.558_{\pm 0.198}$ & $0.664_{\pm 0.136}$ & $0.632_{\pm 0.168}$ & $0.696_{\pm 0.084}$ & \best{0.819_{\pm 0.127}} \\
        \textsc{MUV-737} & ROC & $0.791_{\pm 0.092}$ & $0.773_{\pm 0.071}$ & $0.796_{\pm 0.082}$ & $0.675_{\pm 0.087}$ & $0.723_{\pm 0.093}$ & $0.794_{\pm 0.063}$ & \topthree{0.810_{\pm 0.111}} & \topthree{0.879_{\pm 0.049}} & \best{0.917_{\pm 0.058}} \\
        \textsc{MUV-810} & ROC & \topthree{0.794_{\pm 0.111}} & \best{0.875_{\pm 0.052}} & $0.714_{\pm 0.124}$ & $0.588_{\pm 0.166}$ & $0.682_{\pm 0.188}$ & $0.604_{\pm 0.084}$ & $0.782_{\pm 0.133}$ & $0.680_{\pm 0.094}$ & \topthree{0.820_{\pm 0.103}} \\
        \textsc{MUV-832} & ROC & \best{0.986_{\pm 0.014}} & $0.964_{\pm 0.034}$ & $0.926_{\pm 0.042}$ & $0.923_{\pm 0.036}$ & $0.918_{\pm 0.129}$ & $0.714_{\pm 0.121}$ & $0.960_{\pm 0.037}$ & \topthree{0.969_{\pm 0.030}} & \topthree{0.973_{\pm 0.027}} \\
        \textsc{MUV-846} & ROC & $0.877_{\pm 0.128}$ & $0.884_{\pm 0.066}$ & \topthree{0.911_{\pm 0.067}} & $0.863_{\pm 0.151}$ & $0.764_{\pm 0.112}$ & $0.857_{\pm 0.094}$ & \topthree{0.940_{\pm 0.024}} & $0.781_{\pm 0.100}$ & \best{0.964_{\pm 0.027}} \\
        \textsc{MUV-852} & ROC & \topthree{0.890_{\pm 0.096}} & $0.867_{\pm 0.109}$ & \topthree{0.882_{\pm 0.099}} & $0.743_{\pm 0.133}$ & $0.735_{\pm 0.194}$ & $0.863_{\pm 0.047}$ & $0.850_{\pm 0.086}$ & $0.834_{\pm 0.141}$ & \best{0.917_{\pm 0.090}} \\
        \textsc{MUV-858} & ROC & $0.701_{\pm 0.080}$ & $0.677_{\pm 0.186}$ & \topthree{0.705_{\pm 0.106}} & $0.650_{\pm 0.205}$ & \topthree{0.746_{\pm 0.134}} & $0.553_{\pm 0.147}$ & \best{0.760_{\pm 0.110}} & $0.630_{\pm 0.148}$ & $0.657_{\pm 0.186}$ \\
        \textsc{MUV-859} & ROC & $0.530_{\pm 0.082}$ & $0.533_{\pm 0.094}$ & $0.613_{\pm 0.173}$ & $0.499_{\pm 0.076}$ & $0.607_{\pm 0.126}$ & \topthree{0.681_{\pm 0.095}} & $0.604_{\pm 0.037}$ & \best{0.724_{\pm 0.145}} & \topthree{0.653_{\pm 0.186}} \\
        \bottomrule
    \end{tabular}
    }
    \label{tab:appx_classification}
\end{table*}

\begin{table*}[ht!]
    \centering
    \caption{In this table, we present the performance of 9 models on 28 regression tasks from the domain of molecular property prediction. Experiments are run on 5 different random seeds, and the average of the 5 run results are reported for each task along with their standard deviation in the subscript. The top-3 models in each task are highlighted in \hlgray{gray} and the best one is highlighted in \hlblue{blue} (breaking ties by checking more digits in the average result). Models that are too slow are left blank. Lower is better.}
    \resizebox{\textwidth}{!}{
    \begin{tabular}{ c  c  c  c  c  c  c  c  c  c  c }
        \toprule
        \textbf{Dataset/Task} & \textbf{Metric} & \textbf{Morgan FP} & \textbf{WL Kernel} & \textbf{GCNN} & \textbf{GAT} & \textbf{GIN} & \textbf{Attentive FP} & \textbf{PNA} & \textbf{N-Gram Graph} &
        \textbf{{\AWARE}} \\
        \toprule
        \textsc{Delaney} & RMSE & $1.081_{\pm 0.073}$ & $1.160_{\pm 0.050}$ & $0.762_{\pm 0.151}$ & $0.954_{\pm 0.151}$ & $0.840_{\pm 0.070}$ & \topthree{0.615_{\pm 0.026}} & $0.922_{\pm 0.122}$ & \topthree{0.744_{\pm 0.068}} & \best{0.585_{\pm 0.042}} \\
        \midrule
        \textsc{Malaria} & RMSE & \best{0.995_{\pm 0.028}} & $1.090_{\pm 0.037}$ & $1.141_{\pm 0.057}$ & $1.136_{\pm 0.035}$ & $1.129_{\pm 0.032}$ & $1.080_{\pm 0.028}$ & \topthree{1.048_{\pm 0.022}} & \topthree{1.030_{\pm 0.039}} & $1.056_{\pm 0.036}$ \\
        \midrule
        \textsc{CEP} & RMSE & $1.274_{\pm 0.047}$ & $1.783_{\pm 0.083}$ & $1.457_{\pm 0.112}$ & $1.344_{\pm 0.112}$ & \best{1.064_{\pm 0.057}} & \topthree{1.108_{\pm 0.046}} & \topthree{1.153_{\pm 0.052}} & $1.409_{\pm 0.029}$ & $1.233_{\pm 0.040}$ \\
        \midrule
        \textsc{QM7} & MAE & $118.883_{\pm 2.421}$ & $173.582_{\pm 4.293}$ & $76.000_{\pm 2.743}$ & $213.014_{\pm 10.618}$ & $82.681_{\pm 3.979}$ & \topthree{74.710_{\pm 9.079}} & $108.913_{\pm 25.555}$ & \topthree{49.661_{\pm 4.246}} & \best{39.697_{\pm 3.400}} \\
        \midrule
        \cellcolor{black}\textcolor{white}{\textsc{QM8} tasks $\downarrow$} & \cellcolor{black} & \cellcolor{black}& \cellcolor{black}& \cellcolor{black}& \cellcolor{black}& \cellcolor{black}& \cellcolor{black}& \cellcolor{black}& \cellcolor{black}& \cellcolor{black}\\
        \textsc{E1-CC2} & MAE & $0.009_{\pm 0.000}$ & $0.033_{\pm 0.001}$ & \topthree{0.007_{\pm 0.001}} & $0.012_{\pm 0.002}$ & $0.008_{\pm 0.001}$ & $0.012_{\pm 0.001}$ & $0.008_{\pm 0.001}$ & \topthree{0.007_{\pm 0.000}} & \best{0.007_{\pm 0.000}} \\
        \textsc{E2-CC2} & MAE & $0.011_{\pm 0.000}$ & $0.024_{\pm 0.001}$ & \best{0.007_{\pm 0.000}} & $0.012_{\pm 0.001}$ & \topthree{0.008_{\pm 0.000}} & $0.013_{\pm 0.001}$ & $0.010_{\pm 0.000}$ & \topthree{0.008_{\pm 0.000}} & $0.008_{\pm 0.000}$ \\
        \textsc{f1-CC2} & MAE & $0.016_{\pm 0.001}$ & $0.071_{\pm 0.001}$ & $0.016_{\pm 0.002}$ & $0.020_{\pm 0.003}$ & \topthree{0.014_{\pm 0.001}} & $0.020_{\pm 0.002}$ & $0.015_{\pm 0.001}$ & \topthree{0.015_{\pm 0.000}} & \best{0.013_{\pm 0.000}} \\
        \textsc{f2-CC2} & MAE & $0.035_{\pm 0.001}$ & $0.080_{\pm 0.001}$ & $0.033_{\pm 0.001}$ & $0.038_{\pm 0.001}$ & \topthree{0.031_{\pm 0.001}} & $0.039_{\pm 0.001}$ & $0.032_{\pm 0.001}$ & \topthree{0.030_{\pm 0.001}} & \best{0.030_{\pm 0.002}} \\
        \textsc{E1-PBE0} & MAE & $0.009_{\pm 0.000}$ & $0.034_{\pm 0.001}$ & \best{0.006_{\pm 0.001}} & $0.015_{\pm 0.004}$ & $0.007_{\pm 0.001}$ & $0.012_{\pm 0.000}$ & $0.008_{\pm 0.001}$ & \topthree{0.007_{\pm 0.000}} & \topthree{0.007_{\pm 0.000}} \\
        \textsc{E2-PBE0} & MAE & $0.011_{\pm 0.000}$ & $0.029_{\pm 0.001}$ & \best{0.007_{\pm 0.000}} & $0.012_{\pm 0.002}$ & $0.008_{\pm 0.000}$ & $0.012_{\pm 0.001}$ & $0.009_{\pm 0.001}$ & \topthree{0.007_{\pm 0.000}} & \topthree{0.008_{\pm 0.000}} \\
        \textsc{f1-PBE0} & MAE & $0.014_{\pm 0.000}$ & $0.067_{\pm 0.001}$ & $0.012_{\pm 0.000}$ & $0.016_{\pm 0.001}$ & \best{0.011_{\pm 0.001}} & $0.017_{\pm 0.001}$ & $0.013_{\pm 0.001}$ & \topthree{0.012_{\pm 0.000}} & \topthree{0.011_{\pm 0.001}} \\
        \textsc{f2-PBE0} & MAE & $0.028_{\pm 0.001}$ & $0.078_{\pm 0.000}$ & $0.025_{\pm 0.001}$ & $0.030_{\pm 0.001}$ & \topthree{0.024_{\pm 0.001}} & $0.031_{\pm 0.001}$ & $0.025_{\pm 0.000}$ & \topthree{0.024_{\pm 0.000}} & \best{0.022_{\pm 0.001}} \\
        \textsc{E1-CAM} & MAE & $0.009_{\pm 0.000}$ & $0.033_{\pm 0.001}$ & \best{0.006_{\pm 0.001}} & $0.012_{\pm 0.003}$ & $0.007_{\pm 0.001}$ & $0.012_{\pm 0.001}$ & $0.007_{\pm 0.000}$ & \topthree{0.006_{\pm 0.000}} & \topthree{0.006_{\pm 0.000}} \\
        \textsc{E2-CAM} & MAE & $0.010_{\pm 0.000}$ & $0.026_{\pm 0.001}$ & \best{0.006_{\pm 0.000}} & $0.011_{\pm 0.001}$ & $0.007_{\pm 0.001}$ & $0.013_{\pm 0.001}$ & $0.009_{\pm 0.000}$ & \topthree{0.007_{\pm 0.000}} & \topthree{0.007_{\pm 0.000}} \\
        \textsc{f1-CAM} & MAE & $0.015_{\pm 0.001}$ & $0.072_{\pm 0.001}$ & $0.013_{\pm 0.000}$ & $0.018_{\pm 0.001}$ & \topthree{0.012_{\pm 0.001}} & $0.017_{\pm 0.001}$ & \topthree{0.013_{\pm 0.001}} & $0.013_{\pm 0.001}$ & \best{0.012_{\pm 0.001}} \\
        \textsc{f2-CAM} & MAE & $0.030_{\pm 0.001}$ & $0.080_{\pm 0.001}$ & $0.027_{\pm 0.001}$ & $0.034_{\pm 0.003}$ & \topthree{0.027_{\pm 0.001}} & $0.035_{\pm 0.003}$ & $0.027_{\pm 0.001}$ & \topthree{0.026_{\pm 0.001}} & \best{0.024_{\pm 0.001}} \\
        \midrule
        \cellcolor{black}\textcolor{white}{\textsc{QM9} tasks $\downarrow$} & \cellcolor{black} & \cellcolor{black}& \cellcolor{black}& \cellcolor{black}& \cellcolor{black}& \cellcolor{black}& \cellcolor{black}& \cellcolor{black}& \cellcolor{black}& \cellcolor{black}\\
        \textsc{mu} & MAE & $0.625_{\pm 0.003}$ & -- & \topthree{0.506_{\pm 0.019}} & $0.654_{\pm 0.011}$ & \best{0.476_{\pm 0.008}} & $0.562_{\pm 0.020}$ & $0.575_{\pm 0.012}$ & $0.536_{\pm 0.002}$ & \topthree{0.535_{\pm 0.007}} \\
        \textsc{alpha} & MAE & $3.348_{\pm 0.018}$ & -- & \best{0.533_{\pm 0.083}} & $1.033_{\pm 0.144}$ & \topthree{0.688_{\pm 0.081}} & $1.076_{\pm 0.157}$ & $3.322_{\pm 0.661}$ & \topthree{0.595_{\pm 0.004}} & $0.774_{\pm 0.035}$ \\
        \textsc{homo} & MAE & $0.007_{\pm 0.000}$ & -- & \topthree{0.004_{\pm 0.000}} & $0.008_{\pm 0.001}$ & \best{0.004_{\pm 0.000}} & $0.009_{\pm 0.000}$ & $0.007_{\pm 0.001}$ & \topthree{0.005_{\pm 0.000}} & $0.006_{\pm 0.000}$ \\
        \textsc{lumo} & MAE & $0.009_{\pm 0.000}$ & -- & \topthree{0.004_{\pm 0.000}} & $0.009_{\pm 0.002}$ & \best{0.004_{\pm 0.000}} & $0.009_{\pm 0.000}$ & $0.008_{\pm 0.001}$ & \topthree{0.005_{\pm 0.001}} & $0.005_{\pm 0.000}$ \\
        \textsc{gap} & MAE & $0.010_{\pm 0.000}$ & -- & \topthree{0.006_{\pm 0.000}} & $0.011_{\pm 0.001}$ & \best{0.005_{\pm 0.000}} & $0.012_{\pm 0.000}$ & $0.010_{\pm 0.001}$ & \topthree{0.007_{\pm 0.000}} & $0.007_{\pm 0.000}$ \\
        \textsc{r2} & MAE & $97.768_{\pm 0.405}$ & -- & \best{30.788_{\pm 2.295}} & $100.926_{\pm 8.128}$ & \topthree{36.583_{\pm 1.937}} & $82.265_{\pm 8.864}$ & $97.403_{\pm 18.507}$ & \topthree{56.776_{\pm 0.283}} & $83.000_{\pm 8.780}$ \\
        \textsc{zpve} & MAE & $0.008_{\pm 0.000}$ & -- & \topthree{0.001_{\pm 0.000}} & $0.004_{\pm 0.002}$ & $0.001_{\pm 0.000}$ & $0.002_{\pm 0.000}$ & $0.008_{\pm 0.001}$ & \best{0.000_{\pm 0.000}} & \topthree{0.001_{\pm 0.000}} \\
        \textsc{cv} & MAE & $1.422_{\pm 0.010}$ & -- & \best{0.229_{\pm 0.014}} & $0.541_{\pm 0.220}$ & \topthree{0.248_{\pm 0.013}} & $0.521_{\pm 0.062}$ & $1.318_{\pm 0.256}$ & \topthree{0.334_{\pm 0.004}} & $0.586_{\pm 0.042}$ \\
        \textsc{u0} & MAE & $14.657_{\pm 0.153}$ & -- & \topthree{0.906_{\pm 0.337}} & $1.698_{\pm 1.589}$ & $2.283_{\pm 0.567}$ & $2.715_{\pm 1.299}$ & $22.330_{\pm 3.091}$ & \topthree{0.427_{\pm 0.032}} & \best{0.090_{\pm 0.017}} \\
        \textsc{u298} & MAE & $14.647_{\pm 0.148}$ & -- & \topthree{1.126_{\pm 0.494}} & $5.110_{\pm 5.487}$ & $2.032_{\pm 0.453}$ & $2.683_{\pm 1.263}$ & $21.365_{\pm 2.566}$ & \topthree{0.428_{\pm 0.032}} & \best{0.086_{\pm 0.009}} \\
        \textsc{h298} & MAE & $14.650_{\pm 0.146}$ & -- & \topthree{0.785_{\pm 0.292}} & $2.066_{\pm 1.159}$ & $2.308_{\pm 0.580}$ & $2.930_{\pm 1.093}$ & $20.880_{\pm 5.738}$ & \topthree{0.429_{\pm 0.032}} & \best{0.098_{\pm 0.007}} \\
        \textsc{g298} & MAE & $14.651_{\pm 0.149}$ & -- & \topthree{0.646_{\pm 0.169}} & $2.576_{\pm 1.555}$ & $2.269_{\pm 0.596}$ & $4.014_{\pm 1.422}$ & $19.794_{\pm 3.679}$ & \topthree{0.427_{\pm 0.028}} & \best{0.086_{\pm 0.010}} \\
        \bottomrule
    \end{tabular}
    }
    \label{tab:appx_regression}
\end{table*}

\paragraph{The Effect of Hyperparameters.}
We also analyze the effect of different hyperparameters on the prediction performance. \Cref{fig:effect_T_rprime} demonstrates the effect of the maximum walk length $T$ and the latent dimension $r'$, and \Cref{fig:effect_L_r} shows the impact of the number of layers $L$ in the final predictor and the vertex embedding dimension $r$.
In general, the performance is quite stable across different hyperparameter values. This indicates that our algorithm is friendly towards hyperparameter tuning. 

\begin{figure}[htb!]
    \centering
    \includegraphics[width=0.8\textwidth]{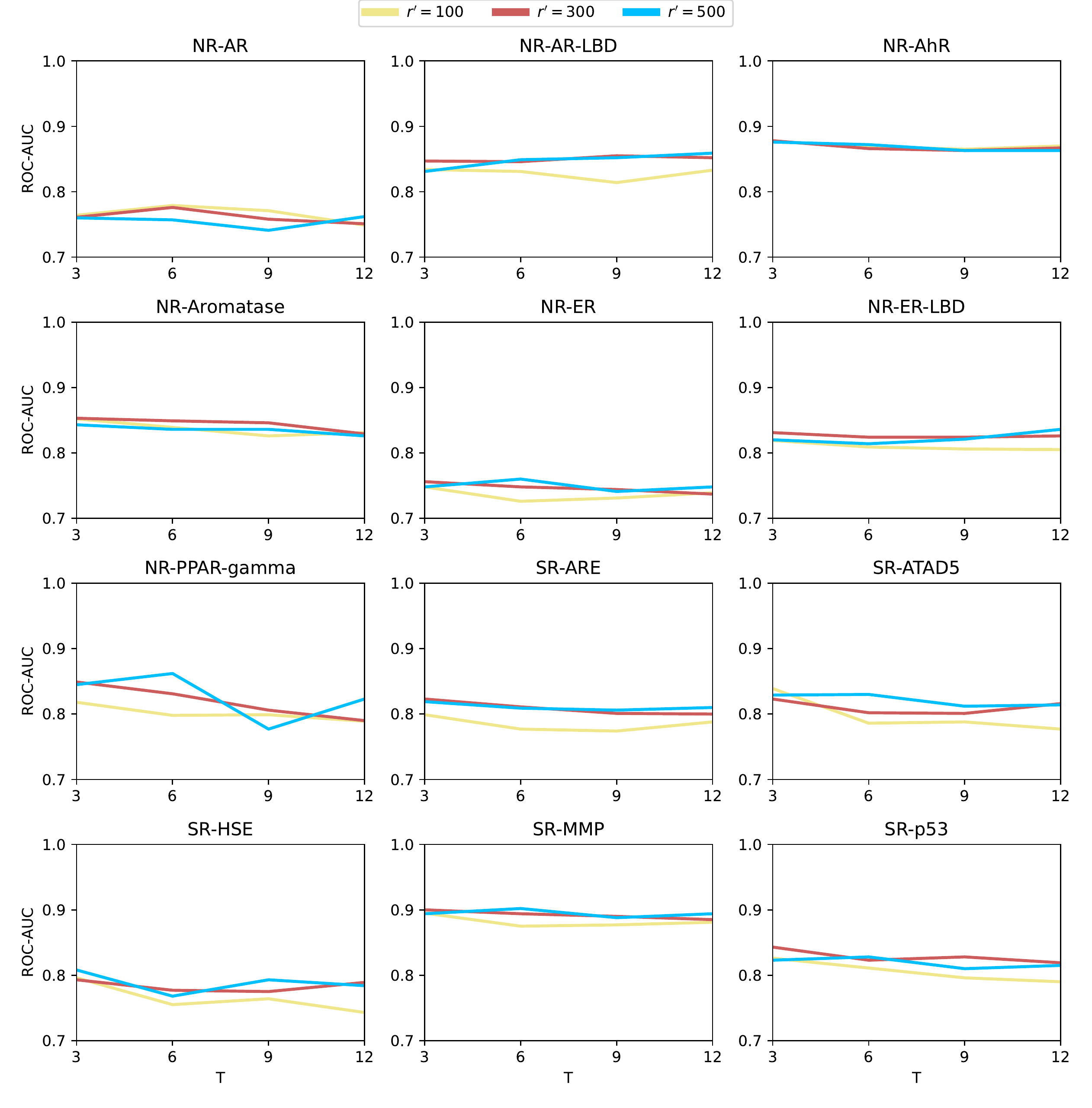}
    \caption{Effect of $T$ and $r'$ on the prediction performance on the 12 tasks in the \textsc{Tox21} dataset. For each pair of $T$ and $r'$ hyperparameter values, the model was run on 5 different seeds of data and the average of the 5 runs is reported. Higher is better.}
    \label{fig:effect_T_rprime}
\end{figure}
\begin{figure}[h!]
    \centering
    \includegraphics[width=0.6\textwidth]{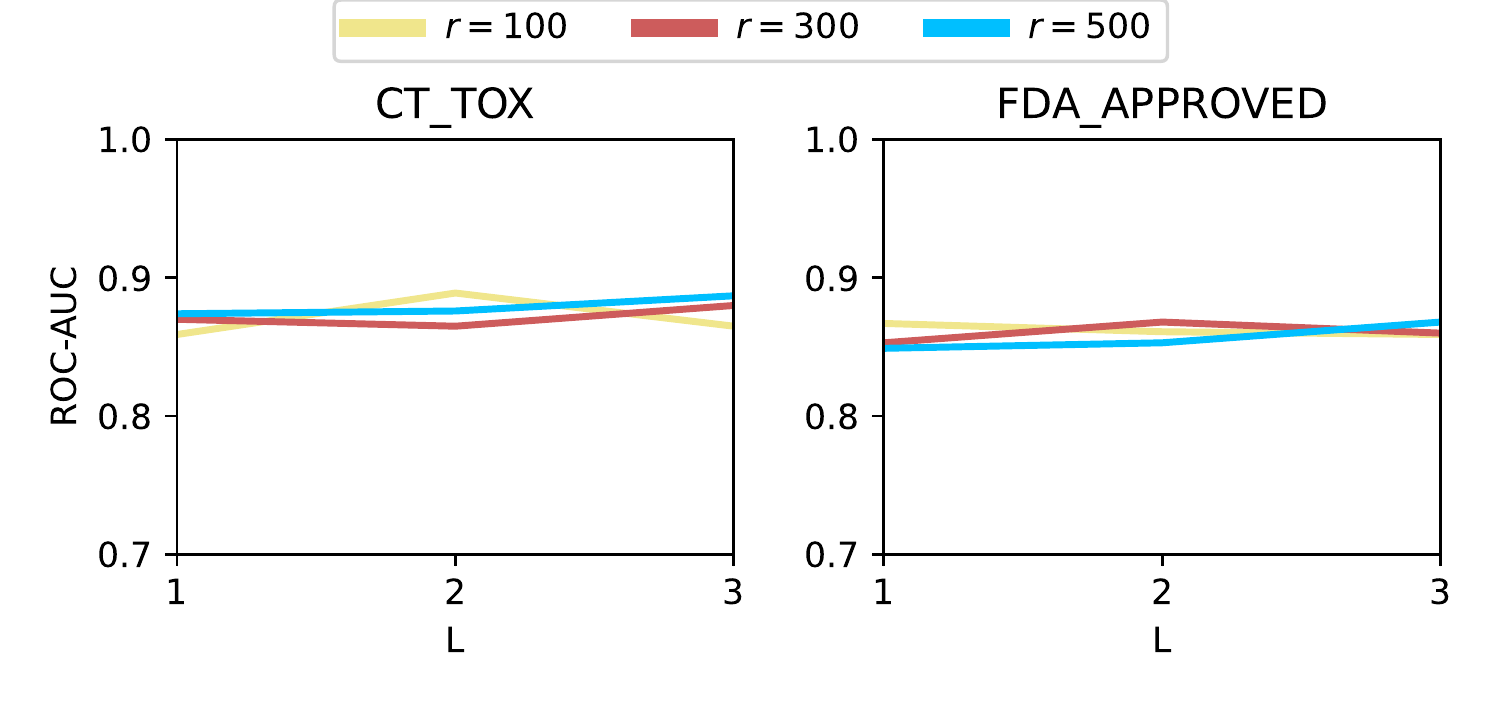}
    \caption{Effect of the number of linear layers $L$ in the fully connected neural network for graph-level prediction and the vertex embedding dimension $r$ on the prediction performance on the 2 tasks in the \textsc{ClinTox} dataset. For each pair of $L$ and $r$ hyperparameter values, the model was run on 5 different seeds of data and the average of the 5 runs is reported. Higher is better.}
    \label{fig:effect_L_r}
\end{figure}

\subsection{Ablation Studies} \label{sec:ablation}

\begin{table*}[h!]
    \centering
    \small
    \caption{Ablation study I: Change in performance on removing/modifying components of {\AWARE}. ``+"\ /\ ``-" indicate relatively better/worse performance respectively.}
     \resizebox{0.9\textwidth}{!}{
    \begin{tabular}{c c c c c c c}
        \toprule
        \textbf{Dataset} & \textbf{Task} & \textbf{No} \bm{$\Wv$} & \textbf{No} \bm{$\Ww$} & \textbf{No} \bm{$\Wg$} & \textbf{No} \bm{$\Wv$}, \bm{$\Ww$} \textbf{or} \bm{$\Wg$} & \textbf{Linear} \bm{$\sigma$}\\
        \toprule
        \textsc{IMDB-BINARY} &  \textsc{IMDB-BINARY} & $-5.03\%$ & $+1.12\%$ & $-1.96\%$ & $-7.54\%$ & $-10.06\%$ \\
        \midrule
        \textsc{Tox21} & \textsc{NR-AR} & $+1.32\%$ & $-0.76\%$ & $+0.67\%$ & $+0.37\%$ & $-1.12\%$ \\
        \textsc{ClinTox} & \textsc{CT\_TOX} & $-9.00\%$ & $-2.09\%$ & $+0.70\%$ & $-3.07\%$ & $-10.35\%$ \\
        \textsc{ClinTox} & \textsc{FDA\_APPROVED} & $-7.83\%$ & $-2.39\%$ & $+1.38\%$ & $-4.16\%$ & $-10.40\%$ \\
        \textsc{MUV} & \textsc{MUV-466} & $-20.08\%$ & $-16.70\%$ & $-7.44\%$ & $+1.80\%$ & $-18.73\%$ \\
        \midrule
        \textsc{Delaney} &\textsc{Delaney} & $-28.45\%$ & $-0.18\%$ & $-4.69\%$ & $-57.80\%$ & $-76.17\%$  \\
        \textsc{Malaria} & \textsc{Malaria} & $-0.83\%$ & $+2.10\%$ & $-1.15\%$ & $-2.32\%$ & $-5.86\%$ \\
        \textsc{QM7} & \textsc{QM7} & $-11.59\%$ & $+3.74\%$ & $-18.45\%$ & $-71.39\%$ & $-85.21\%$ \\
        \bottomrule
    \end{tabular}
    }
    \label{tab:ablation1_full}
\end{table*}


\begin{table*}[h!]
    \centering
    \small
    \caption{Ablation study II: Change in {\AWARE}'s performance when the vertex embedding matrix $W$ is randomly initialized and non-trainable, with linear $\sigma$. \underline{Underline} indicates better performance.}
     \resizebox{0.8\textwidth}{!}{
     \begin{tabular}{c c c c c}
        \toprule
        \textbf{Dataset} & \textbf{Task} & \textbf{Metric} & \textbf{Trainable} \bm{$W$} & \textbf{Fixed Random} \bm{$W$} \\
        \toprule
        \textsc{IMDB-BINARY} &  \textsc{IMDB-BINARY} & ACC & $\underline{0.716}$ & $0.660$ \\
        \midrule
        \textsc{Tox21} & \textsc{NR-AR} & ROC-AUC & $\underline{0.776}$ & $0.774$ \\
        \textsc{ClinTox} & \textsc{CT\_TOX} & ROC-AUC & $\underline{0.889}$ & $0.764$ \\
        \textsc{ClinTox} & \textsc{FDA\_APPROVED} & ROC-AUC & $\underline{0.869}$ & $0.774$ \\
        \midrule
        \textsc{Delaney} &\textsc{Delaney} & RMSE & $\underline{0.612}$ & $1.162$  \\
        \textsc{Malaria} & \textsc{Malaria} & RMSE & $\underline{1.062}$ & $1.126$ \\
        \textsc{QM7} & \textsc{QM7} & MAE & $\underline{41.280}$ & $96.675$ \\
        \bottomrule
    \end{tabular}
    }
    \label{tab:ablation2_fixedW_and_linear_sigma}
\end{table*}

\begin{table*}[h!]
    \centering
    \small
    \caption{Ablation study III:  Change in {\AWARE}'s performance when the final predictor is changed from a multiple layer NN to a linear predictor. \underline{Underline} indicates better performance.}
    \resizebox{0.8\textwidth}{!}{
     \begin{tabular}{c c c c c}
        \toprule
        \textbf{Dataset} & \textbf{Task} & \textbf{Metric} & \textbf{Multiple layers} & \textbf{Linear predictor} \\
        \toprule
        \textsc{IMDB-BINARY} &  \textsc{IMDB-BINARY} & ACC & $\underline{0.716}$ & $0.678$\\
        \midrule
        \textsc{Tox21} & \textsc{NR-AR} & ROC-AUC & $\underline{0.776}$ & $0.759$ \\
        \textsc{ClinTox} & \textsc{CT\_TOX} & ROC-AUC & $\underline{0.889}$ & $0.880$ \\
        \textsc{ClinTox} & \textsc{FDA\_APPROVED} & ROC-AUC & $0.869$ & $\underline{0.870}$ \\
        \midrule
        \textsc{Delaney} &\textsc{Delaney} & RMSE & $\underline{0.612}$ & $0.640$  \\
        \textsc{Malaria} & \textsc{Malaria} & RMSE & $\underline{1.062}$ & $1.070$ \\
        \textsc{QM7} & \textsc{QM7} & MAE & $\underline{41.280}$ & $415.155$ \\
        \bottomrule
    \end{tabular}
    }
    \label{tab:ablation3_linear_classifier}
\end{table*}


In this section, we perform three different ablation studies to further explore our method. First, we perform a study to examine the impact of each weighting component $\Wv, \Ww$ and $\Wg$ in {\AWARE}. We individually remove one component from the model and compare its performance to the full model. We also compare our full model to the version with linear $\sigma$, i.e., $\sigma(z) = z$. \Cref{tab:ablation1_full} shows that the weighting components mostly lead to better performance even though there are cases in which they may not. 
We see that all three weighting components contribute to improved performance for most tasks. Notably, there exist tasks for which specific weights lead to a drop in performance. Aligning with Theorems~\ref{thm:rep} and \ref{thm:learn} in Section~\ref{sec:analysis}, this indicates that weighting schemes are \emph{successful} in learning important artifacts for the downstream task only under specific conditions.
Furthermore, we can also observe the advantage of using a non-linear activation function $\sigma$ over a linear one.

Second, we analyze the change in performance when a non-trainable vertex embedding matrix $W$ and a linear $\sigma$ are used. \Cref{tab:ablation2_fixedW_and_linear_sigma} demonstrates using a trainable random vertex embedding matrix $W$ and a non-linear $\sigma$ gives overall better performance. It also shows that even with random $W$ and a linear $\sigma$, our method can still get decent performance---providing justification for the simplification assumptions in our theoretical analysis.

Third, we examine the advantage of using a fully-connected neural network with multiple linear layers as a predictor over using a simple linear predictor. \Cref{tab:ablation3_linear_classifier} suggests that using multiple layers in the final predictor leads to better performance in general.

\section{Interpretation and Visualization} \label{sec:interpretation}
{\AWARE} uses an attention mechanism at the walk level ($W_w$) to aggregate crucial information from the neighbors of each vertex (\Cref{sec:method}). 
While we have demonstrated the empirical effectiveness of this in \Cref{sec:experiments}, we now focus on validating our analysis that {\AWARE} can highlight important substructures of the input graph for the prediction task. 

For this analysis, we use the \textsc{Mutagenicity} dataset~\citep{kazius2005derivation}, which comes with the ground-truth information that molecules that contain specific chemical groups ($\text{-NO}_2, \text{-NH}_2$) are much more likely to be assigned a `mutagen' label~\citep{debnath1991structure}. This dataset has been introduced for the purpose of increasing accuracy and reliability in mutagenicity predictions for molecular compounds. Mutagenicity of a molecular compound, among many other attributes, is known to impede its ability to become a usable drug. A mutagen is a physical or chemical factor that has the potential to alter the DNA of an organism, which in turn increases the possibility of mutations. 
The dataset contains 4337 molecular structures with 2401 labeled as ``mutagen''. Molecular structures in this dataset contain around 30 atoms on average.

\begin{figure*}[t]
    \centering
    \resizebox{\textwidth}{!}{
    \begin{tabular}{c|c|c|c|c}
        \begin{subfigure}[b]{0.17\textwidth}
            \centering
            \includegraphics[width=\textwidth]{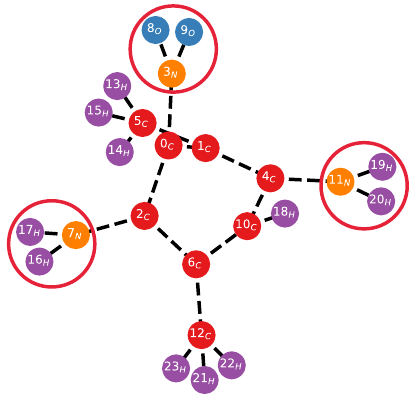}

        \end{subfigure}
         & 
        \begin{subfigure}[b]{0.17\textwidth}
            \centering
            \includegraphics[width=\textwidth]{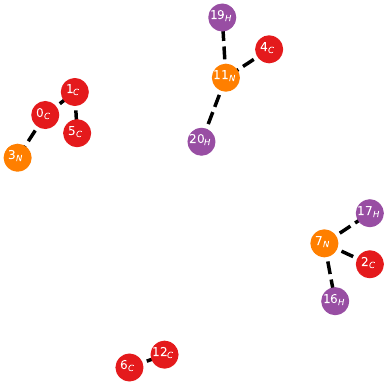}

        \end{subfigure}
        &
        \begin{subfigure}[b]{0.17\textwidth}
            \centering
            \includegraphics[width=\textwidth]{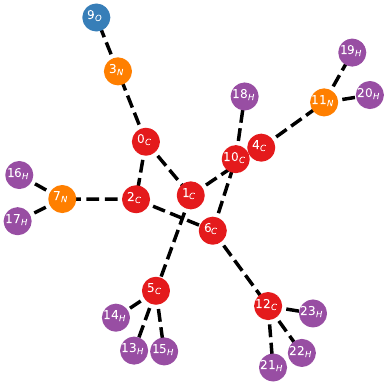}
          
        \end{subfigure}
        &
        \begin{subfigure}[b]{0.17\textwidth}
            \centering
            \includegraphics[width=\textwidth]{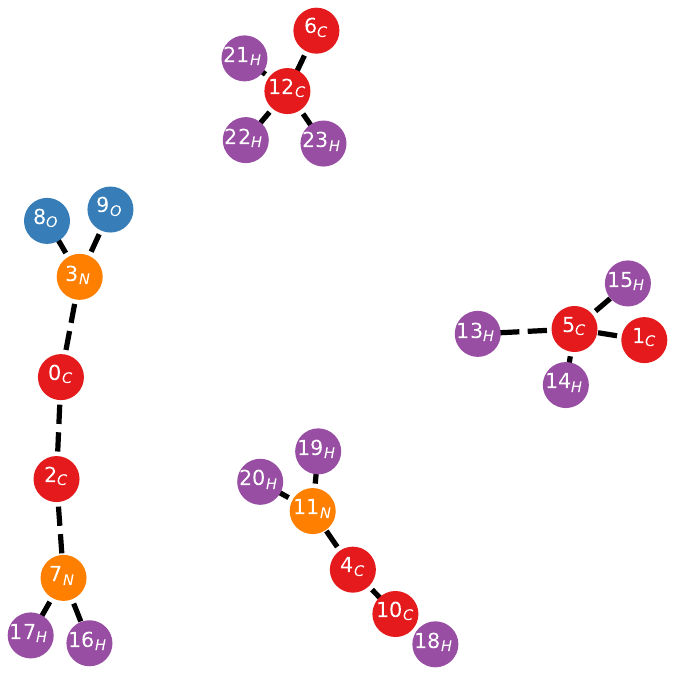}
         
        \end{subfigure}
        &
        \begin{subfigure}[b]{0.17\textwidth}
            \centering
            \includegraphics[width=\textwidth]{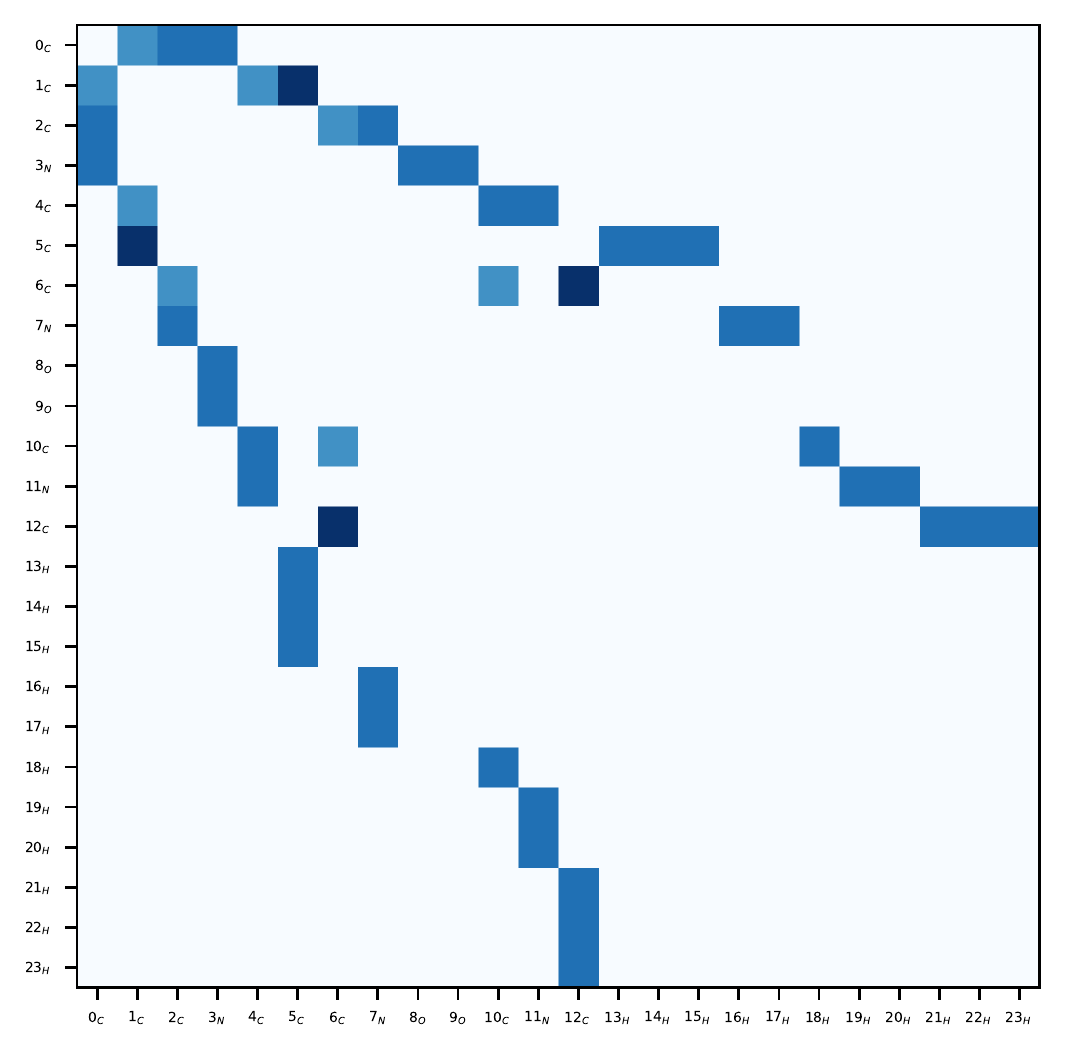}
          
        \end{subfigure} \\
        \midrule

        \begin{subfigure}[b]{0.18\textwidth}
            \centering
            \includegraphics[width=\textwidth]{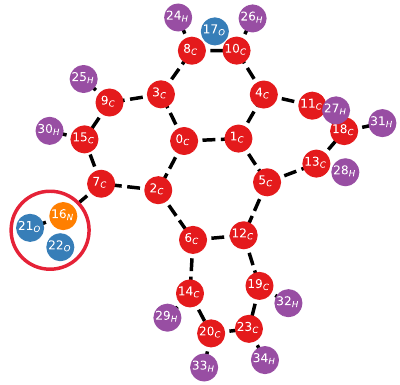}
            \caption{{\fontsize{8.5}{3}\selectfont Molecule}}
            \label{fig:original_mol_600}
        \end{subfigure}
         &
         \begin{subfigure}[b]{0.18\textwidth}
            \centering
            \includegraphics[width=\textwidth]{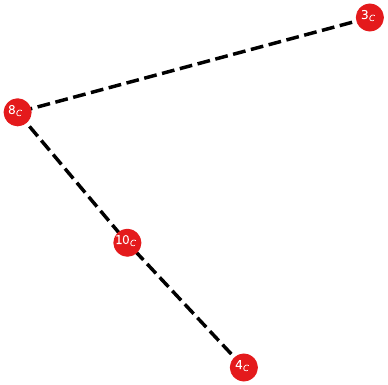}
            \caption{{\fontsize{8.5}{3}\selectfont Grad }}
                \label{fig:grad_600}
        \end{subfigure}
        &
        \begin{subfigure}[b]{0.18\textwidth}
            \centering
            \includegraphics[width=\textwidth]{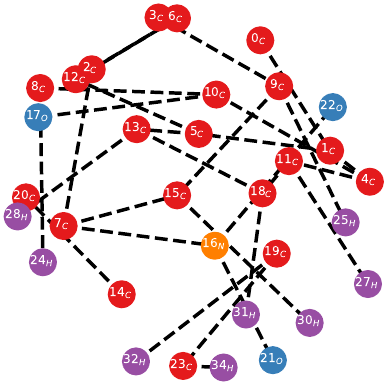}
            \caption{{\fontsize{8.5}{3}\selectfont GNNExplainer}}
              \label{fig:gnnexplainer_600}
        \end{subfigure}
        &
        \begin{subfigure}[b]{0.18\textwidth}
            \centering
            \includegraphics[width=\textwidth]{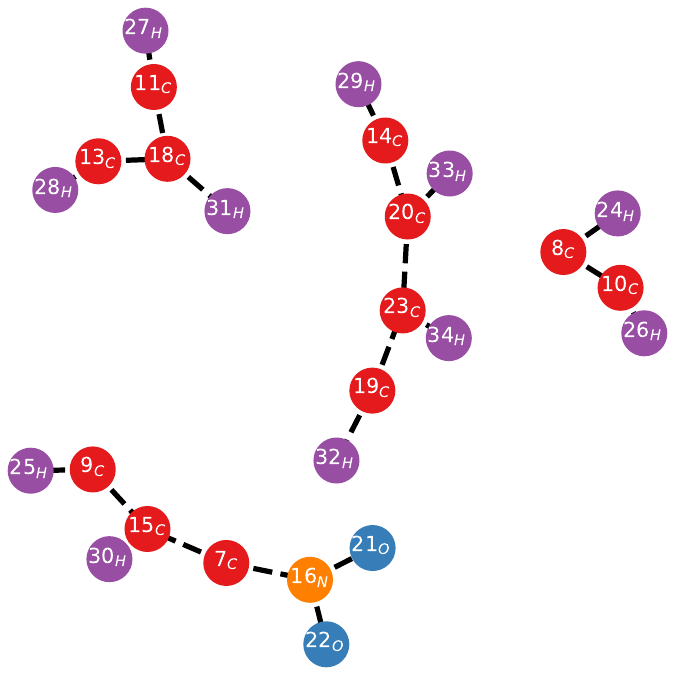}
            \caption{{\fontsize{8.5}{3}\selectfont {\AWARE}}}
               \label{fig:aware_600}
        \end{subfigure}
        &
        \begin{subfigure}[b]{0.18\textwidth}
            \centering
            \includegraphics[width=\textwidth]{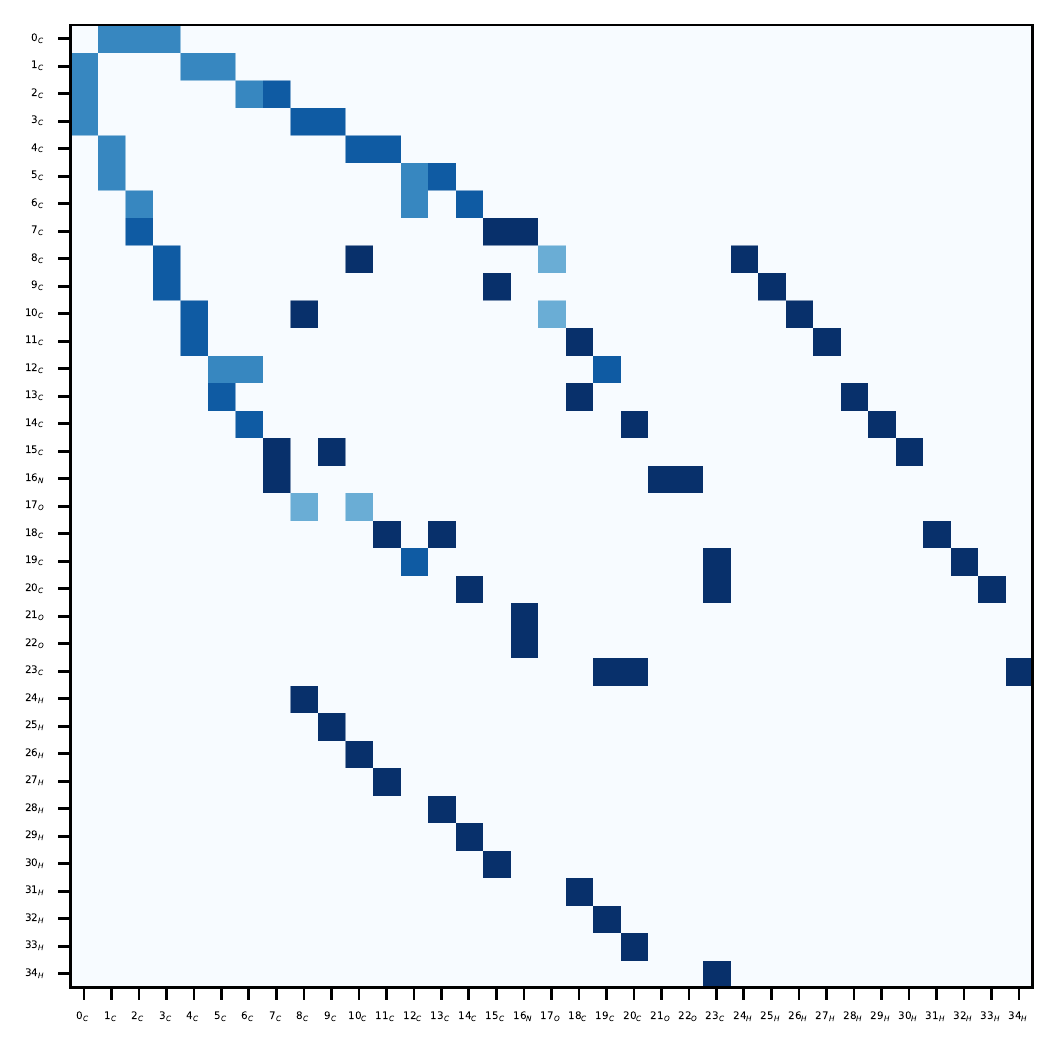}
            \caption{{\fontsize{8.5}{3}\selectfont edge importance}}
              \label{fig:edge_importance}
        \end{subfigure}
    \end{tabular}
    }
    \vskip -0.1in
    \caption{Visualization of two random mutagen molecules from \textsc{Mutagenicity} and their important substructures for accurate prediction captured by different interpretation techniques. Different node colors indicate different atom types. (a) depicts the original molecules with important mutagenic atom groups circled in red, such as $\text{NO}_2$ and $\text{NH}_2$. (b), (c), and (d) demonstrate important substructures detected by different methods. (e) is a heatmap for the edge importance scores computed by {\AWARE}.
    \label{fig:interprete_600}
    }
\end{figure*}

\begin{figure}[th]
    \centering
    \includegraphics[width=0.4\textwidth]{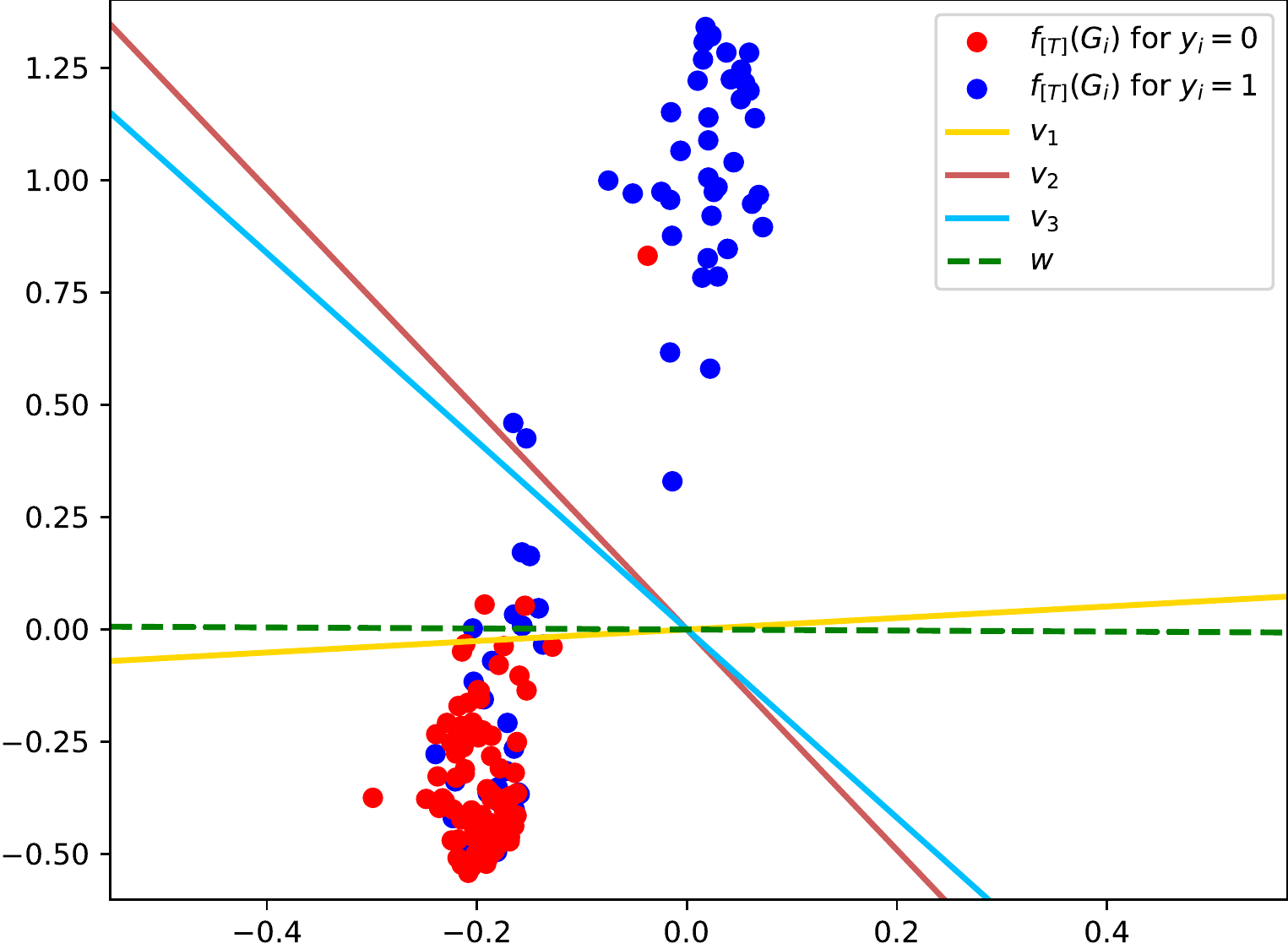}
    \caption{Interpretation of graph-level attention $W_g$ for the \textsc{NR-AR} classification task.}%
    \label{fig:interpret_Wg}
\end{figure}

To find substructures that {\AWARE} uses for its prediction, we compute the importance score for each bond (edge) of the molecule by using the attention scores computed via Equation~(\ref{eq:attention_matrix}) (Specifically for an edge $i{-}j$, we use $[\mathbf{S}_{(T)}]_{ij} + [\mathbf{S}_{(T)}]_{ji}$).
Accordingly, we visualize two randomly chosen `mutagenic' molecules in~\Cref{fig:interprete_600} and the important substructures as attributed by different interpretation techniques. Figures~\ref{fig:grad_600} and \ref{fig:gnnexplainer_600} depict the interpretation of the GIN model~\citep{xu2018how} using Grad and GNNExplainer techniques~\citep{ying2019gnnexplainer}. The former computes gradients with respect to the adjacency matrix and vertex features, while the latter extracts substructures with the closest property prediction to the complete graph. In the first molecule, although both of these techniques are able to highlight the two $\text{NH}_2$ groups as important for the final prediction, they fail to highlight the $\text{NO}_2$ group. In the second molecule, while Grad fails to identify the $\text{NO}_2$ atom group, GNNExplainer marks majority of the bonds (edges) in the molecule as important, which should not be the case. 

In contrast, {\AWARE} can successfully highlight both the $\text{NH}_2$ and $\text{NO}_2$ groups as important in the first molecule as well as the $\text{NO}_2$ group in the second one, as can be seen in~\Cref{fig:aware_600}. This provides further evidence that {\AWARE} is able to identify substructures in the graph that are significant (or insignificant) for a given downstream prediction task (In the examples given in~\Cref{fig:interprete_600}, we set a threshold ($\ge$1.0) on the importance scores computed by {\AWARE} to highlight important substructures in the molecules). 

\mypara{Interpretation for \bm{$W_g$}.}
{\AWARE} uses $W_g$ to selectively weight the embeddings at the graph level for the prediction task (\Cref{sec:method}). Towards interpreting $W_g$, we want to analyze how well it aligns with the predictor for the downstream task. Specifically, we train {\AWARE} for the binary classification \textsc{NR-AR} task (\textsc{Tox21} dataset) using a linear predictor with parameter $w$ (without a non-linear activation function). We randomly sample 200 data points, and compute their graph embeddings $f_{[T]}(G)$ from {\AWARE}. We denote the top three left singular vectors of $W_g$ by $\{u_1, u_2, u_3\}$. For a particular $u_i$, we define $v_i = [u_i, u_i, \dots \text{(T times)}]$ to bring $u_i$ to the same-dimensional space as $f_{[T]}(G)$. Finally, we plot $\{v_1,v_2,v_3\}$, $w$, and the embeddings $f_{[T]}(G)$ for all 200 samples in~\Cref{fig:interpret_Wg} using PCA with $n = 2$ components. 

We observe that $W_g$'s largest singular vector direction aligns very well with the parameter $w$ of the downstream predictor that the model has learned. This suggests that this weight can successfully emphasize the directions in the graph embedding space that are important for the prediction.

\section{Conclusion} \label{sec:conclusion}
In this work, we present and analyze a novel attentive walk-aggregating GNN, {\AWARE}, providing the first provable guarantees on the learning performance of weighted GNNs by identifying the specific conditions under which weighting can improve the learning performance in the standard setting. Our experiments on 65 graph-level prediction tasks from the domains of molecular property prediction and social networks demonstrate that {\AWARE} overall outperforms traditional and recent baselines in the standard setting where only adjacency and vertex attribute information are used. Our interpretability study lends support to our algorithm design and theoretical insights by providing concrete evidence that the attention mechanism works in favor of emphasizing the important walks in the graph while diminishing the others. Lastly, our ablation studies show the importance of the different components and design choices of our model.
Supported with the strong representation power by AWARE, we believe that it can be further explored for a wide range of tasks, including but not limited to multi-task learning (e.g.,~\cite{liu2019loss,liu2022structured}), self-supervised pretraining (e.g.,~\cite{liu2022molecular,liu2022pretraining}), few-shot learning (e.g.,~\cite{altae2017low}), etc.

We would also like to briefly touch on the ethical impacts and weaknesses of our method here. Though {\AWARE} can be used for graph-structured data from distinct data domains, we will be highlighting the ethical implications of our method for the important domain of molecular property prediction. Having strong empirical performance for the molecular property prediction domain, {\AWARE} can potentially be used for efficient drug development process. Physical experiments for this task can be expensive and slow, which can be alleviated by using {\AWARE} for an initial virtual screening (selecting high-confident candidates from a large pool before physical screening). A strong empirically performing model like {\AWARE} can help speed up the process, and provide tremendous cost savings for this important task. However, deploying an automatic ML prediction model for such a highly critical task must be done extremely carefully. As evidenced by \Cref{sec:experiments}, {\AWARE} does not achieve the best performance for \emph{all} molecular property prediction tasks. Thus, {\AWARE} may fail to identify promising chemicals for drug development, and/or make erroneous selections. While the former may increase the time and cost of the process, the latter might lead to failures in developing the drug. Nevertheless, with sufficient physical experimentation performed by human experts, such unwanted events can be minimized while still enjoying the benefits of using {\AWARE}.

\section{Acknowledgement} \label{sec:acknowledgement}
The work is partially supported by Air Force Grant FA9550-18-1-0166, the National Science Foundation (NSF) Grants 2008559-IIS, CCF-2046710, and 2023239-DMS.

\bibliographystyle{tmlr}
\bibliography{ref, reference}

\begin{thebibliography}{69}
\providecommand{\natexlab}[1]{#1}
\providecommand{\url}[1]{\texttt{#1}}
\expandafter\ifx\csname urlstyle\endcsname\relax
  \providecommand{\doi}[1]{doi: #1}\else
  \providecommand{\doi}{doi: \begingroup \urlstyle{rm}\Url}\fi

\bibitem[{AIDS Antiviral Screen Data}(2017)]{hiv_2017}
{AIDS Antiviral Screen Data}.
\newblock Aids antiviral screen data.
\newblock
  \url{https://wiki.nci.nih.gov/display/NCIDTPdata/AIDS+Antiviral+Screen+Data},
  2017.
\newblock Accessed: 2020-12-20.

\bibitem[Altae-Tran et~al.(2017)Altae-Tran, Ramsundar, Pappu, and
  Pande]{altae2017low}
Han Altae-Tran, Bharath Ramsundar, Aneesh~S Pappu, and Vijay Pande.
\newblock Low data drug discovery with one-shot learning.
\newblock \emph{ACS Central Science}, 3\penalty0 (4):\penalty0 283--293, 2017.

\bibitem[Arora et~al.(2018)Arora, Khodak, Saunshi, and
  Vodrahalli]{arora2018acompressed}
Sanjeev Arora, Mikhail Khodak, Nikunj Saunshi, and Kiran Vodrahalli.
\newblock A compressed sensing view of unsupervised text embeddings,
  bag-of-n-grams, and lstm.
\newblock \emph{International Conference on Learning Representations}, 2018.

\bibitem[Artemov et~al.(2016)Artemov, Putin, Vanhaelen, Aliper, Ozerov, and
  Zhavoronkov]{artemov2016integrated}
Artem~V Artemov, Evgeny Putin, Quentin Vanhaelen, Alexander Aliper, Ivan~V
  Ozerov, and Alex Zhavoronkov.
\newblock Integrated deep learned transcriptomic and structure-based predictor
  of clinical trials outcomes.
\newblock \emph{bioRxiv}, pp.\  095653, 2016.

\bibitem[Azizian \& marc lelarge(2021)Azizian and marc
  lelarge]{azizian2021expressive}
Waiss Azizian and marc lelarge.
\newblock Expressive power of invariant and equivariant graph neural networks.
\newblock In \emph{International Conference on Learning Representations}, 2021.
\newblock URL \url{https://openreview.net/forum?id=lxHgXYN4bwl}.

\bibitem[Bahdanau et~al.(2014)Bahdanau, Cho, and Bengio]{bahdanau2014neural}
Dzmitry Bahdanau, Kyunghyun Cho, and Yoshua Bengio.
\newblock Neural machine translation by jointly learning to align and
  translate, 2014.
\newblock URL \url{http://arxiv.org/abs/1409.0473}.
\newblock cite arxiv:1409.0473Comment: Accepted at ICLR 2015 as oral
  presentation.

\bibitem[Battaglia et~al.(2016)Battaglia, Pascanu, Lai, Rezende,
  et~al.]{battaglia2016interaction}
Peter Battaglia, Razvan Pascanu, Matthew Lai, Danilo~Jimenez Rezende, et~al.
\newblock Interaction networks for learning about objects, relations and
  physics.
\newblock In \emph{Advances in neural information processing systems}, pp.\
  4502--4510, 2016.

\bibitem[Blum \& Reymond(2009)Blum and Reymond]{blum2009970}
Lorenz~C Blum and Jean-Louis Reymond.
\newblock 970 million druglike small molecules for virtual screening in the
  chemical universe database gdb-13.
\newblock \emph{Journal of the American Chemical Society}, 131\penalty0
  (25):\penalty0 8732--8733, 2009.

\bibitem[Calderbank et~al.(2009)Calderbank, Jafarpour, and
  Schapire]{calderbank2009compressed}
Robert Calderbank, Sina Jafarpour, and Robert Schapire.
\newblock Compressed learning: Universal sparse dimensionality reduction and
  learning in the measurement domain.
\newblock \emph{Techical Report}, 2009.

\bibitem[Candes(2008)]{candes2008restricted}
Emmanuel~J Candes.
\newblock The restricted isometry property and its implications for compressed
  sensing.
\newblock \emph{Comptes rendus mathematique}, 346\penalty0 (9-10):\penalty0
  589--592, 2008.

\bibitem[Candes \& Tao(2005)Candes and Tao]{candes2005decoding}
Emmanuel~J Candes and Terence Tao.
\newblock Decoding by linear programming.
\newblock \emph{IEEE transactions on information theory}, 51\penalty0
  (12):\penalty0 4203--4215, 2005.

\bibitem[Chen \& Guestrin(2016)Chen and Guestrin]{chen2016xgboost}
Tianqi Chen and Carlos Guestrin.
\newblock Xgboost: A scalable tree boosting system.
\newblock In \emph{Proceedings of the 22Nd ACM SIGKDD International Conference
  on Knowledge Discovery and Data Mining}, pp.\  785--794. ACM, 2016.

\bibitem[Coors et~al.(2018)Coors, Condurache, and Geiger]{coors2018spherenet}
Benjamin Coors, Alexandru~Paul Condurache, and Andreas Geiger.
\newblock Spherenet: Learning spherical representations for detection and
  classification in omnidirectional images.
\newblock In \emph{Proceedings of the European Conference on Computer Vision
  (ECCV)}, pp.\  518--533, 2018.

\bibitem[Corso et~al.(2020)Corso, Cavalleri, Beaini, Li{\`o}, and
  Veli{\v{c}}kovi{\'c}]{corso2020principal}
Gabriele Corso, Luca Cavalleri, Dominique Beaini, Pietro Li{\`o}, and Petar
  Veli{\v{c}}kovi{\'c}.
\newblock Principal neighbourhood aggregation for graph nets.
\newblock \emph{Advances in Neural Information Processing Systems}, 33, 2020.

\bibitem[Debnath et~al.(1991)Debnath, Lopez~de Compadre, Debnath, Shusterman,
  and Hansch]{debnath1991structure}
Asim~Kumar Debnath, Rosa~L Lopez~de Compadre, Gargi Debnath, Alan~J Shusterman,
  and Corwin Hansch.
\newblock Structure-activity relationship of mutagenic aromatic and
  heteroaromatic nitro compounds. correlation with molecular orbital energies
  and hydrophobicity.
\newblock \emph{Journal of medicinal chemistry}, 34\penalty0 (2):\penalty0
  786--797, 1991.

\bibitem[Dehmamy et~al.(2019)Dehmamy, Barabasi, and
  Yu]{dehmamy2019understanding}
Nima Dehmamy, Albert-Laszlo Barabasi, and Rose Yu.
\newblock Understanding the representation power of graph neural networks in
  learning graph topology.
\newblock \emph{Advances in Neural Information Processing Systems 32 (NIPS
  2019)}, 2019.

\bibitem[Delaney(2004)]{delaney_esol:_2004}
John~S. Delaney.
\newblock {ESOL}: {Estimating} {Aqueous} {Solubility} {Directly} from
  {Molecular} {Structure}.
\newblock \emph{Journal of Chemical Information and Computer Sciences},
  44\penalty0 (3):\penalty0 1000--1005, May 2004.
\newblock ISSN 0095-2338.
\newblock \doi{10.1021/ci034243x}.

\bibitem[Deng et~al.(2018)Deng, Kim, Chiu, Guo, and
  Rush]{10.5555/3327546.3327640}
Yuntian Deng, Yoon Kim, Justin Chiu, Demi Guo, and Alexander~M. Rush.
\newblock Latent alignment and variational attention.
\newblock In \emph{Proceedings of the 32nd International Conference on Neural
  Information Processing Systems}, NIPS'18, pp.\  9735–9747, Red Hook, NY,
  USA, 2018. Curran Associates Inc.

\bibitem[Devlin et~al.(2019)Devlin, Chang, Lee, and
  Toutanova]{devlin-etal-2019-bert}
Jacob Devlin, Ming-Wei Chang, Kenton Lee, and Kristina Toutanova.
\newblock {BERT}: Pre-training of deep bidirectional transformers for language
  understanding.
\newblock In \emph{Proceedings of the 2019 Conference of the North {A}merican
  Chapter of the Association for Computational Linguistics: Human Language
  Technologies, Volume 1 (Long and Short Papers)}, pp.\  4171--4186,
  Minneapolis, Minnesota, June 2019. Association for Computational Linguistics.
\newblock \doi{10.18653/v1/N19-1423}.
\newblock URL \url{https://www.aclweb.org/anthology/N19-1423}.

\bibitem[Dosovitskiy et~al.(2020)Dosovitskiy, Beyer, Kolesnikov, Weissenborn,
  Zhai, Unterthiner, Dehghani, Minderer, Heigold, Gelly,
  et~al.]{dosovitskiy2020image}
Alexey Dosovitskiy, Lucas Beyer, Alexander Kolesnikov, Dirk Weissenborn,
  Xiaohua Zhai, Thomas Unterthiner, Mostafa Dehghani, Matthias Minderer, Georg
  Heigold, Sylvain Gelly, et~al.
\newblock An image is worth 16x16 words: Transformers for image recognition at
  scale.
\newblock \emph{arXiv preprint arXiv:2010.11929}, 2020.

\bibitem[Duvenaud et~al.(2015)Duvenaud, Maclaurin, Iparraguirre, Bombarell,
  Hirzel, Aspuru-Guzik, and Adams]{duvenaud2015convolutional}
David~K Duvenaud, Dougal Maclaurin, Jorge Iparraguirre, Rafael Bombarell,
  Timothy Hirzel, Al{\'a}n Aspuru-Guzik, and Ryan~P Adams.
\newblock Convolutional networks on graphs for learning molecular fingerprints.
\newblock In \emph{Advances in neural information processing systems}, pp.\
  2224--2232, 2015.

\bibitem[Foucart \& Rauhut(2017)Foucart and Rauhut]{foucart2017mathematical}
Simon Foucart and Holger Rauhut.
\newblock A mathematical introduction to compressive sensing.
\newblock \emph{Bull. Am. Math}, 54:\penalty0 151--165, 2017.

\bibitem[Gamo et~al.(2010)Gamo, Sanz, Vidal, Cozar, Alvarez, Lavandera,
  Vanderwall, Green, Kumar, Hasan, Brown, Peishoff, Cardon, and
  Garcia-Bustos]{gamo_thousands_2010}
Francisco-Javier Gamo, Laura~M. Sanz, Jaume Vidal, Cristina~de Cozar, Emilio
  Alvarez, Jose-Luis Lavandera, Dana~E. Vanderwall, Darren V.~S. Green, Vinod
  Kumar, Samiul Hasan, James~R. Brown, Catherine~E. Peishoff, Lon~R. Cardon,
  and Jose~F. Garcia-Bustos.
\newblock Thousands of chemical starting points for antimalarial lead
  identification.
\newblock \emph{Nature}, 465\penalty0 (7296):\penalty0 305--310, May 2010.
\newblock ISSN 1476-4687.
\newblock \doi{10.1038/nature09107}.

\bibitem[Gayvert et~al.(2016)Gayvert, Madhukar, and Elemento]{gayvert2016data}
Kaitlyn~M Gayvert, Neel~S Madhukar, and Olivier Elemento.
\newblock A data-driven approach to predicting successes and failures of
  clinical trials.
\newblock \emph{Cell chemical biology}, 23\penalty0 (10):\penalty0 1294--1301,
  2016.

\bibitem[Gilmer et~al.(2017)Gilmer, Schoenholz, Riley, Vinyals, and
  Dahl]{gilmer2017neural}
Justin Gilmer, Samuel~S Schoenholz, Patrick~F Riley, Oriol Vinyals, and
  George~E Dahl.
\newblock Neural message passing for quantum chemistry.
\newblock \emph{arXiv preprint arXiv:1704.01212}, 2017.

\bibitem[Hachmann et~al.(2011)Hachmann, Olivares-Amaya, Atahan-Evrenk,
  Amador-Bedolla, Sánchez-Carrera, Gold-Parker, Vogt, Brockway, and
  Aspuru-Guzik]{hachmann_harvard_2011}
Johannes Hachmann, Roberto Olivares-Amaya, Sule Atahan-Evrenk, Carlos
  Amador-Bedolla, Roel~S. Sánchez-Carrera, Aryeh Gold-Parker, Leslie Vogt,
  Anna~M. Brockway, and Alán Aspuru-Guzik.
\newblock The {Harvard} {Clean} {Energy} {Project}: {Large}-{Scale}
  {Computational} {Screening} and {Design} of {Organic} {Photovoltaics} on the
  {World} {Community} {Grid}.
\newblock \emph{The Journal of Physical Chemistry Letters}, 2\penalty0
  (17):\penalty0 2241--2251, September 2011.
\newblock ISSN 1948-7185.
\newblock \doi{10.1021/jz200866s}.

\bibitem[Hu et~al.(2019)Hu, Liu, Gomes, Zitnik, Liang, Pande, and
  Leskovec]{hu2019strategies}
Weihua Hu, Bowen Liu, Joseph Gomes, Marinka Zitnik, Percy Liang, Vijay Pande,
  and Jure Leskovec.
\newblock Strategies for pre-training graph neural networks.
\newblock \emph{arXiv preprint arXiv:1905.12265}, 2019.

\bibitem[Kasiviswanathan \& Rudelson(2019)Kasiviswanathan and
  Rudelson]{kasiviswanathan2019restricted}
Shiva~Prasad Kasiviswanathan and Mark Rudelson.
\newblock Restricted isometry property under high correlations.
\newblock \emph{arXiv preprint arXiv:1904.05510}, 2019.

\bibitem[Kazius et~al.(2005)Kazius, McGuire, and Bursi]{kazius2005derivation}
Jeroen Kazius, Ross McGuire, and Roberta Bursi.
\newblock Derivation and validation of toxicophores for mutagenicity
  prediction.
\newblock \emph{Journal of medicinal chemistry}, 48\penalty0 (1):\penalty0
  312--320, 2005.

\bibitem[Kearnes et~al.(2016)Kearnes, McCloskey, Berndl, Pande, and
  Riley]{kearnes2016molecular}
Steven Kearnes, Kevin McCloskey, Marc Berndl, Vijay Pande, and Patrick Riley.
\newblock Molecular graph convolutions: moving beyond fingerprints.
\newblock \emph{Journal of computer-aided molecular design}, 30\penalty0
  (8):\penalty0 595--608, 2016.

\bibitem[Kipf \& Welling(2016)Kipf and Welling]{kipf2016semi}
Thomas~N Kipf and Max Welling.
\newblock Semi-supervised classification with graph convolutional networks.
\newblock \emph{arXiv preprint arXiv:1609.02907}, 2016.

\bibitem[Klicpera et~al.(2020)Klicpera, Gro{\ss}, and
  G{\"u}nnemann]{klicpera2020directional}
Johannes Klicpera, Janek Gro{\ss}, and Stephan G{\"u}nnemann.
\newblock Directional message passing for molecular graphs.
\newblock \emph{arXiv preprint arXiv:2003.03123}, 2020.

\bibitem[Li et~al.(2015)Li, Tarlow, Brockschmidt, and Zemel]{li2015gated}
Yujia Li, Daniel Tarlow, Marc Brockschmidt, and Richard Zemel.
\newblock Gated graph sequence neural networks.
\newblock \emph{arXiv preprint arXiv:1511.05493}, 2015.

\bibitem[Liu et~al.(2019{\natexlab{a}})Liu, Demirel, and Liang]{liu2019n}
Shengchao Liu, Mehmet~F Demirel, and Yingyu Liang.
\newblock N-gram graph: Simple unsupervised representation for graphs, with
  applications to molecules.
\newblock In \emph{Advances in Neural Information Processing Systems}, pp.\
  8466--8478, 2019{\natexlab{a}}.

\bibitem[Liu et~al.(2019{\natexlab{b}})Liu, Liang, and Gitter]{liu2019loss}
Shengchao Liu, Yingyu Liang, and Anthony Gitter.
\newblock Loss-balanced task weighting to reduce negative transfer in
  multi-task learning.
\newblock In \emph{Proceedings of the AAAI conference on artificial
  intelligence}, volume~33, pp.\  9977--9978, 2019{\natexlab{b}}.

\bibitem[Liu et~al.(2022{\natexlab{a}})Liu, Guo, and Tang]{liu2022molecular}
Shengchao Liu, Hongyu Guo, and Jian Tang.
\newblock Molecular geometry pretraining with se (3)-invariant denoising
  distance matching.
\newblock \emph{arXiv preprint arXiv:2206.13602}, 2022{\natexlab{a}}.

\bibitem[Liu et~al.(2022{\natexlab{b}})Liu, Qu, Zhang, Cai, and
  Tang]{liu2022structured}
Shengchao Liu, Meng Qu, Zuobai Zhang, Huiyu Cai, and Jian Tang.
\newblock Structured multi-task learning for molecular property prediction.
\newblock In \emph{International Conference on Artificial Intelligence and
  Statistics}, pp.\  8906--8920. PMLR, 2022{\natexlab{b}}.

\bibitem[Liu et~al.(2022{\natexlab{c}})Liu, Wang, Liu, Lasenby, Guo, and
  Tang]{liu2022pretraining}
Shengchao Liu, Hanchen Wang, Weiyang Liu, Joan Lasenby, Hongyu Guo, and Jian
  Tang.
\newblock Pre-training molecular graph representation with 3d geometry.
\newblock In \emph{International Conference on Learning Representations},
  2022{\natexlab{c}}.
\newblock URL \url{https://openreview.net/forum?id=xQUe1pOKPam}.

\bibitem[Luong et~al.(2015)Luong, Pham, and Manning]{luong2015effective}
Minh-Thang Luong, Hieu Pham, and Christopher~D Manning.
\newblock Effective approaches to attention-based neural machine translation.
\newblock \emph{arXiv preprint arXiv:1508.04025}, 2015.

\bibitem[Martins \& Astudillo(2016)Martins and Astudillo]{pmlr-v48-martins16}
Andre Martins and Ramon Astudillo.
\newblock From softmax to sparsemax: A sparse model of attention and
  multi-label classification.
\newblock In Maria~Florina Balcan and Kilian~Q. Weinberger (eds.),
  \emph{Proceedings of The 33rd International Conference on Machine Learning},
  volume~48 of \emph{Proceedings of Machine Learning Research}, pp.\
  1614--1623, New York, New York, USA, 20--22 Jun 2016. PMLR.
\newblock URL \url{http://proceedings.mlr.press/v48/martins16.html}.

\bibitem[Maziarka et~al.(2020)Maziarka, Danel, Mucha, Rataj, Tabor, and
  Jastrzkebski]{maziarka2020molecule}
Lukasz Maziarka, Tomasz Danel, Slawomir Mucha, Krzysztof Rataj, Jacek Tabor,
  and Stanislaw Jastrzkebski.
\newblock Molecule attention transformer.
\newblock \emph{arXiv preprint arXiv:2002.08264}, 2020.

\bibitem[Morgan(1965)]{morgan1965generation}
HL~Morgan.
\newblock The generation of a unique machine description for chemical
  structures-a technique developed at chemical abstracts service.
\newblock \emph{Journal of Chemical Documentation}, 5\penalty0 (2):\penalty0
  107--113, 1965.

\bibitem[Morris et~al.(2019)Morris, Ritzert, Fey, Hamilton, Lenssen, Rattan,
  and Grohe]{morris2019weisfeiler}
Christopher Morris, Martin Ritzert, Matthias Fey, William~L Hamilton, Jan~Eric
  Lenssen, Gaurav Rattan, and Martin Grohe.
\newblock Weisfeiler and leman go neural: Higher-order graph neural networks.
\newblock In \emph{Proceedings of the AAAI Conference on Artificial
  Intelligence}, volume~33, pp.\  4602--4609, 2019.

\bibitem[Perozzi et~al.(2014)Perozzi, Al-Rfou, and Skiena]{perozzi2014deepwalk}
Bryan Perozzi, Rami Al-Rfou, and Steven Skiena.
\newblock Deepwalk: Online learning of social representations.
\newblock In \emph{Proceedings of the 20th ACM SIGKDD international conference
  on Knowledge discovery and data mining}, pp.\  701--710, 2014.

\bibitem[Raffel et~al.(2020)Raffel, Shazeer, Roberts, Lee, Narang, Matena,
  Zhou, Li, and Liu]{JMLR:v21:20-074}
Colin Raffel, Noam Shazeer, Adam Roberts, Katherine Lee, Sharan Narang, Michael
  Matena, Yanqi Zhou, Wei Li, and Peter~J. Liu.
\newblock Exploring the limits of transfer learning with a unified text-to-text
  transformer.
\newblock \emph{Journal of Machine Learning Research}, 21\penalty0
  (140):\penalty0 1--67, 2020.
\newblock URL \url{http://jmlr.org/papers/v21/20-074.html}.

\bibitem[Ramachandran et~al.(2019)Ramachandran, Parmar, Vaswani, Bello,
  Levskaya, and Shlens]{ramachandran2019stand}
Prajit Ramachandran, Niki Parmar, Ashish Vaswani, Irwan Bello, Anselm Levskaya,
  and Jonathon Shlens.
\newblock Stand-alone self-attention in vision models.
\newblock \emph{arXiv preprint arXiv:1906.05909}, 2019.

\bibitem[Ramakrishnan et~al.(2015)Ramakrishnan, Hartmann, Tapavicza, and
  Von~Lilienfeld]{ramakrishnan2015electronic}
Raghunathan Ramakrishnan, Mia Hartmann, Enrico Tapavicza, and O~Anatole
  Von~Lilienfeld.
\newblock Electronic spectra from tddft and machine learning in chemical space.
\newblock \emph{The Journal of chemical physics}, 143\penalty0 (8):\penalty0
  084111, 2015.

\bibitem[Rohrer \& Baumann(2009)Rohrer and Baumann]{rohrer2009maximum}
Sebastian~G Rohrer and Knut Baumann.
\newblock Maximum unbiased validation (muv) data sets for virtual screening
  based on pubchem bioactivity data.
\newblock \emph{Journal of chemical information and modeling}, 49\penalty0
  (2):\penalty0 169--184, 2009.

\bibitem[Rong et~al.(2020)Rong, Bian, Xu, Xie, Wei, Huang, and
  Huang]{rong2020grover}
Yu~Rong, Yatao Bian, Tingyang Xu, Weiyang Xie, Ying Wei, Wenbing Huang, and
  Junzhou Huang.
\newblock Grover: Self-supervised message passing transformer on large-scale
  molecular data.
\newblock \emph{arXiv preprint arXiv:2007.02835}, 2020.

\bibitem[Ruddigkeit et~al.(2012)Ruddigkeit, Van~Deursen, Blum, and
  Reymond]{ruddigkeit2012enumeration}
Lars Ruddigkeit, Ruud Van~Deursen, Lorenz~C Blum, and Jean-Louis Reymond.
\newblock Enumeration of 166 billion organic small molecules in the chemical
  universe database gdb-17.
\newblock \emph{Journal of chemical information and modeling}, 52\penalty0
  (11):\penalty0 2864--2875, 2012.

\bibitem[Rudelson et~al.(2013)Rudelson, Vershynin, et~al.]{rudelson2013hanson}
Mark Rudelson, Roman Vershynin, et~al.
\newblock Hanson-wright inequality and sub-gaussian concentration.
\newblock \emph{Electronic Communications in Probability}, 18, 2013.

\bibitem[Shankar et~al.(2018)Shankar, Garg, and
  Sarawagi]{shankar-etal-2018-surprisingly}
Shiv Shankar, Siddhant Garg, and Sunita Sarawagi.
\newblock Surprisingly easy hard-attention for sequence to sequence learning.
\newblock In \emph{Proceedings of the 2018 Conference on Empirical Methods in
  Natural Language Processing}, pp.\  640--645, Brussels, Belgium,
  October-November 2018. Association for Computational Linguistics.
\newblock \doi{10.18653/v1/D18-1065}.
\newblock URL \url{https://www.aclweb.org/anthology/D18-1065}.

\bibitem[Shervashidze et~al.(2011)Shervashidze, Schweitzer, Leeuwen, Mehlhorn,
  and Borgwardt]{shervashidze2011weisfeiler}
Nino Shervashidze, Pascal Schweitzer, Erik Jan~van Leeuwen, Kurt Mehlhorn, and
  Karsten~M Borgwardt.
\newblock Weisfeiler-lehman graph kernels.
\newblock \emph{Journal of Machine Learning Research}, 12\penalty0
  (Sep):\penalty0 2539--2561, 2011.

\bibitem[{Tox21 Data Challenge}(2014)]{tox21}
{Tox21 Data Challenge}.
\newblock Tox21 data challenge 2014.
\newblock \emph{https://tripod.nih.gov/tox21/challenge/}, 2014.

\bibitem[Vaswani et~al.(2017)Vaswani, Shazeer, Parmar, Uszkoreit, Jones, Gomez,
  Kaiser, and Polosukhin]{vaswani2017attention}
Ashish Vaswani, Noam Shazeer, Niki Parmar, Jakob Uszkoreit, Llion Jones,
  Aidan~N Gomez, {\L}ukasz Kaiser, and Illia Polosukhin.
\newblock Attention is all you need.
\newblock In \emph{Proceedings of the 31st International Conference on Neural
  Information Processing Systems}, pp.\  6000--6010, 2017.

\bibitem[Veli{\v{c}}kovi{\'c} et~al.(2017)Veli{\v{c}}kovi{\'c}, Cucurull,
  Casanova, Romero, Lio, and Bengio]{velivckovic2017graph}
Petar Veli{\v{c}}kovi{\'c}, Guillem Cucurull, Arantxa Casanova, Adriana Romero,
  Pietro Lio, and Yoshua Bengio.
\newblock Graph attention networks.
\newblock \emph{arXiv preprint arXiv:1710.10903}, 2017.

\bibitem[Vinyals et~al.(2015)Vinyals, Bengio, and Kudlur]{vinyals2015order}
Oriol Vinyals, Samy Bengio, and Manjunath Kudlur.
\newblock Order matters: Sequence to sequence for sets.
\newblock \emph{arXiv preprint arXiv:1511.06391}, 2015.

\bibitem[Vishwanathan et~al.(2010)Vishwanathan, Schraudolph, Kondor, and
  Borgwardt]{vishwanathan2010graph}
S~Vichy~N Vishwanathan, Nicol~N Schraudolph, Risi Kondor, and Karsten~M
  Borgwardt.
\newblock Graph kernels.
\newblock \emph{The Journal of Machine Learning Research}, 11:\penalty0
  1201--1242, 2010.

\bibitem[Wang et~al.(2021)Wang, Ying, Huang, and Leskovec]{wang2020multi}
Guangtao Wang, Rex Ying, Jing Huang, and Jure Leskovec.
\newblock Multi-hop attention graph neural networks.
\newblock In Zhi-Hua Zhou (ed.), \emph{Proceedings of the Thirtieth
  International Joint Conference on Artificial Intelligence, {IJCAI-21}}, pp.\
  3089--3096. International Joint Conferences on Artificial Intelligence
  Organization, 8 2021.
\newblock \doi{10.24963/ijcai.2021/425}.
\newblock URL \url{https://doi.org/10.24963/ijcai.2021/425}.
\newblock Main Track.

\bibitem[Wu et~al.(2018)Wu, Ramsundar, Feinberg, Gomes, Geniesse, Pappu,
  Leswing, and Pande]{wu2018moleculenet}
Zhenqin Wu, Bharath Ramsundar, Evan~N Feinberg, Joseph Gomes, Caleb Geniesse,
  Aneesh~S Pappu, Karl Leswing, and Vijay Pande.
\newblock Moleculenet: a benchmark for molecular machine learning.
\newblock \emph{Chemical science}, 9\penalty0 (2):\penalty0 513--530, 2018.

\bibitem[Xiong et~al.(2019)Xiong, Wang, Liu, Zhong, Wan, Li, Li, Luo, Chen,
  Jiang, et~al.]{xiong2019pushing}
Zhaoping Xiong, Dingyan Wang, Xiaohong Liu, Feisheng Zhong, Xiaozhe Wan, Xutong
  Li, Zhaojun Li, Xiaomin Luo, Kaixian Chen, Hualiang Jiang, et~al.
\newblock Pushing the boundaries of molecular representation for drug discovery
  with the graph attention mechanism.
\newblock \emph{Journal of medicinal chemistry}, 63\penalty0 (16):\penalty0
  8749--8760, 2019.

\bibitem[Xu et~al.(2015)Xu, Ba, Kiros, Cho, Courville, Salakhudinov, Zemel, and
  Bengio]{pmlr-v37-xuc15}
Kelvin Xu, Jimmy Ba, Ryan Kiros, Kyunghyun Cho, Aaron Courville, Ruslan
  Salakhudinov, Rich Zemel, and Yoshua Bengio.
\newblock Show, attend and tell: Neural image caption generation with visual
  attention.
\newblock In Francis Bach and David Blei (eds.), \emph{Proceedings of the 32nd
  International Conference on Machine Learning}, volume~37 of \emph{Proceedings
  of Machine Learning Research}, pp.\  2048--2057, Lille, France, 07--09 Jul
  2015. PMLR.
\newblock URL \url{http://proceedings.mlr.press/v37/xuc15.html}.

\bibitem[Xu et~al.(2019)Xu, Hu, Leskovec, and Jegelka]{xu2018how}
Keyulu Xu, Weihua Hu, Jure Leskovec, and Stefanie Jegelka.
\newblock How powerful are graph neural networks?
\newblock In \emph{International Conference on Learning Representations}, 2019.
\newblock URL \url{https://openreview.net/forum?id=ryGs6iA5Km}.

\bibitem[Yanardag \& Vishwanathan(2015)Yanardag and
  Vishwanathan]{yanardag2015deep}
Pinar Yanardag and SVN Vishwanathan.
\newblock Deep graph kernels.
\newblock In \emph{Proceedings of the 21th ACM SIGKDD International Conference
  on Knowledge Discovery and Data Mining}, pp.\  1365--1374. ACM, 2015.

\bibitem[Yang et~al.(2019)Yang, Swanson, Jin, Coley, Eiden, Gao, Guzman-Perez,
  Hopper, Kelley, Mathea, et~al.]{yang2019analyzing}
Kevin Yang, Kyle Swanson, Wengong Jin, Connor Coley, Philipp Eiden, Hua Gao,
  Angel Guzman-Perez, Timothy Hopper, Brian Kelley, Miriam Mathea, et~al.
\newblock Analyzing learned molecular representations for property prediction.
\newblock \emph{Journal of chemical information and modeling}, 59\penalty0
  (8):\penalty0 3370--3388, 2019.

\bibitem[Ying et~al.(2018)Ying, He, Chen, Eksombatchai, Hamilton, and
  Leskovec]{ying2018graph}
Rex Ying, Ruining He, Kaifeng Chen, Pong Eksombatchai, William~L Hamilton, and
  Jure Leskovec.
\newblock Graph convolutional neural networks for web-scale recommender
  systems.
\newblock In \emph{Proceedings of the 24th ACM SIGKDD International Conference
  on Knowledge Discovery \&amp; Data Mining}, pp.\  974--983, 2018.

\bibitem[Ying et~al.(2019)Ying, Bourgeois, You, Zitnik, and
  Leskovec]{ying2019gnnexplainer}
Zhitao Ying, Dylan Bourgeois, Jiaxuan You, Marinka Zitnik, and Jure Leskovec.
\newblock Gnnexplainer: Generating explanations for graph neural networks.
\newblock In \emph{Advances in neural information processing systems}, pp.\
  9244--9255, 2019.

\bibitem[Yun et~al.(2019)Yun, Jeong, Kim, Kang, and Kim]{yun2019graph}
Seongjun Yun, Minbyul Jeong, Raehyun Kim, Jaewoo Kang, and Hyunwoo~J Kim.
\newblock Graph transformer networks.
\newblock In \emph{Advances in Neural Information Processing Systems}, pp.\
  11983--11993, 2019.

\bibitem[Zhao et~al.(2020)Zhao, Jia, and Koltun]{zhao2020exploring}
Hengshuang Zhao, Jiaya Jia, and Vladlen Koltun.
\newblock Exploring self-attention for image recognition.
\newblock In \emph{Proceedings of the IEEE/CVF Conference on Computer Vision
  and Pattern Recognition}, pp.\  10076--10085, 2020.

\end{thebibliography}




\beginsupplement
\clearpage
\appendix

\begin{center}
	\textbf{\LARGE Appendix}
\end{center}

\begin{center}
	\textbf{\Large Attentive Walk-Aggregating Graph Neural Networks}
\end{center}
\noindent\makebox[\linewidth]{\rule{\textwidth}{1pt}}

\section{Toolbox for Theoretical Analysis}
\subsection{Toolbox from Compressive Sensing} \label{app:toolbox}
For completeness, here we include the review from~\citep{liu2019n} about related concepts in the field of compressed sensing that are important for our analysis. Please refer to~\citep{foucart2017mathematical} for more details. 

The primary goal of compressed sensing is to recover a high-dimensional $k$-sparse signal $x \in \mathbb{R}^N$ from a few linear measurements. Here, being $k$-sparse means that $x$ has at most $k$ non-zero entries, i.e., $|x|_0 \le k$. In the noiseless case, we have a design matrix $A \in \mathbb{R}^{d \times N}$ and the measurement vector is $z = Ax$. The optimization formulation is then
\begin{align} \label{eqn:l0}
    \text{minimize}_{x'}  \|x'\|_0 \quad \text{subject to} \quad Ax' = z 
\end{align}
where $\|x'\|_0$ is $\ell_0$ norm of $x'$, i.e., the number of non-zero entries in $x'$. The assumption that $x$ is the sparsest vector satisfying $Ax = z$ is equivalent to that $x$ is the optimal solution for (\ref{eqn:l0}). 

Unfortunately, the $\ell_0$-minimization in (\ref{eqn:l0}) is NP-hard. The typical approach in compressed sensing is to consider its convex surrogate using $\ell_1$-minimization:
\begin{align} \label{eqn:l1}
    \text{minimize}_{x'}  \|x'\|_1 \quad \text{subject to} \quad Ax' = z 
\end{align}
where $\|x'\|_1 = \sum_i |x'_i|$ is the $\ell_1$ norm of $x'$. The fundamental question is when the optimal solution of (\ref{eqn:l0}) is equivalent to that of (\ref{eqn:l1}), i.e., when exact recovery is guaranteed. 

\subsubsection{The Restricted Isometry Property} 
One common condition for recovery is the Restricted Isometry Property (RIP):
\begin{definition}
$A \in \mathbb{R}^{d \times N}$ is $(\mathcal{X}, \epsilon)$-RIP for some subset $\mathcal{X} \subseteq \mathbb{R}^N$ if for any $x \in \mathcal{X}$,
\begin{align}
    (1-\epsilon) \|x\|_2 \le \| A x\|_2 \le (1+\epsilon) \|x\|_2. 
\end{align}
We will abuse notation and say $(k, \epsilon)$-RIP if $\mathcal{X}$ is the set of all $k$-sparse $x \in \mathbb{R}^N$.
\end{definition}
Introduced by~\citep{candes2005decoding}, RIP has been used to show to guarantee exact recovery. 
\begin{theorem}[Restatement of Theorem 1.1 in \citep{candes2008restricted}]
\label{thm:recovery}
Suppose $A$ is $(2k, \epsilon)$-RIP for an $\epsilon < \sqrt{2} - 1$. Let $\hat{x}$ denote the solution to (\ref{eqn:l1}), and let $x_k$ denote the vector $x$ with all but the k-largest entries set to zero. Then
\begin{align}
    \| \hat{x} - x \|_1 \le C_0 \|x_k - x\|_1
\end{align}
and
\begin{align}
    \| \hat{x} - x \|_2 \le C_0 k^{-1/2}  \|x_k - x\|_1.
\end{align}
In particular, if $x$ is $k$-sparse, the recovery is exact. 
\end{theorem}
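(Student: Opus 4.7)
The plan is to follow the classical tube-and-cone argument used for proving exact/stable recovery guarantees under RIP. Let $h := \hat{x} - x$ denote the error vector. Since both $\hat{x}$ and $x$ are feasible in \eqref{eqn:l1}, we have $Ah = A\hat{x} - Ax = 0$, so the entire proof reduces to bounding $\|h\|_2$ and $\|h\|_1$ in terms of $\|x - x_k\|_1$, which controls how far $x$ is from being $k$-sparse.

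The main construction is a support-based decomposition of $h$. Let $T_0 \subseteq [N]$ be the index set of the $k$ largest-magnitude entries of $x$. Partition the complement $T_0^c$ into disjoint blocks $T_1, T_2, \ldots$ of size $k$ such that $T_1$ indexes the $k$ largest-magnitude entries of $h$ restricted to $T_0^c$, $T_2$ the next $k$ largest, and so on. I would then prove two key inequalities. First, a \emph{cone/tube inequality} coming from the $\ell_1$-optimality of $\hat{x}$: since $\|\hat{x}\|_1 \le \|x\|_1$, expanding each side over $T_0$ and $T_0^c$ and applying the reverse triangle inequality yields $\|h_{T_0^c}\|_1 \le \|h_{T_0}\|_1 + 2\|x_{T_0^c}\|_1$, where $\|x_{T_0^c}\|_1 = \|x - x_k\|_1$. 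Second, a \emph{tail bound}: by construction the average magnitude on block $T_j$ ($j \ge 2$) is at most that on $T_{j-1}$, so $\|h_{T_j}\|_2 \le k^{-1/2}\|h_{T_{j-1}}\|_1$, and summing gives $\sum_{j \ge 2} \|h_{T_j}\|_2 \le k^{-1/2}\|h_{T_0^c}\|_1$.

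Next I would invoke the RIP hypothesis on the $2k$-sparse vector $h_{T_0 \cup T_1}$: since $Ah = 0$, we can write $Ah_{T_0 \cup T_1} = -\sum_{j \ge 2} A h_{T_j}$, and taking inner products with $Ah_{T_0 \cup T_1}$ together with the $(2k,\epsilon)$-RIP bound (which implies a near-orthogonality / restricted isometry constant estimate on inner products of disjoint-support sparse vectors of size at most $2k$) gives an inequality of the form
\begin{align}
(1-\epsilon)\|h_{T_0 \cup T_1}\|_2^2 \le \|A h_{T_0 \cup T_1}\|_2^2 \le \sqrt{2}\,\epsilon\,\|h_{T_0 \cup T_1}\|_2 \sum_{j\ge 2}\|h_{T_j}\|_2.
\end{align}
Combining this with the tail bound and the cone inequality yields $\|h_{T_0\cup T_1}\|_2 \le \rho\,\|h_{T_0}\|_2 + \tau\,k^{-1/2}\|x - x_k\|_1$ for constants $\rho, \tau$ depending on $\epsilon$; the threshold $\epsilon < \sqrt{2}-1$ is precisely what forces $\rho < 1$, closing the recursion.

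From there the two advertised bounds follow by simple algebra: $\|h\|_2 \le \|h_{T_0 \cup T_1}\|_2 + \sum_{j \ge 2}\|h_{T_j}\|_2$ gives the $\ell_2$ estimate $\|h\|_2 \le C_0 k^{-1/2}\|x - x_k\|_1$, and feeding this back into the cone inequality yields $\|h\|_1 \le C_0 \|x - x_k\|_1$. Exact recovery when $x$ is $k$-sparse is then an immediate corollary since $x_{T_0^c} = 0$. The main obstacle (and the step most sensitive to the constants) is the near-orthogonality estimate that produces the $\sqrt{2}\,\epsilon$ factor above; this is exactly where the threshold $\epsilon < \sqrt{2}-1$ enters, and careful bookkeeping of which pairs of blocks contribute is essential. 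Since this is a verbatim restatement of Candès's theorem, I would cite \citep{candes2008restricted} rather than reprove every inequality from scratch, and only sketch the cone/tube decomposition to keep the presentation self-contained.
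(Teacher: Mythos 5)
Your sketch is the standard Candès argument and is correct in outline; the paper itself offers no proof of this statement, stating it verbatim as Theorem 1.1 of \citep{candes2008restricted} in its compressive-sensing toolbox, which is exactly the route you also propose (cite rather than reprove). So you are in agreement with the paper's treatment, and your block decomposition, cone inequality, tail bound, and the role of $\epsilon < \sqrt{2}-1$ match the cited proof.
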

Furthermore, it has been shown that $A$ is $(k,\epsilon)$-RIP with overwhelming probability when $d = \Omega(k \log \frac{N}{k})$ and $\sqrt{d} A_{ij} \sim \mathcal{N}(0,1) (\forall i,j)$ or $\sqrt{d} A_{ij} \sim \mathcal{U}\{-1, 1\} (\forall i,j)$. There are also many others types of $A$ with RIP; see~\citep{foucart2017mathematical}.

\subsubsection{Compressed Learning}
Given that $Ax$ preserves the information of sparse $x$ when $A$ is RIP, it is then natural to study the performance of a linear classifier learned on $Ax$ compared to that of the best linear classifier on $x$. Our analysis will use a theorem from~\citep{arora2018acompressed} that generalizes that of~\citep{calderbank2009compressed}. 

Let $\mathcal{X} \subseteq \mathbb{R}^N$ denote 
\begin{align}
    \mathcal{X} = \{x: x \in \mathbb{R}^N, \|x\|_0 \le k, \|x\|_2 \le B\}. 
\end{align}
Let $\{(x_i, y_i)\}_{i=1}^M$ be a set of $M$ samples i.i.d. from some distribution over $\mathcal{X} \times \{-1, 1\}$. Let $\ell$ denote a $\lambda_\ell$-Lipschitz convex loss function. Let $\ell_\mathcal{D}(\theta)$ denote the risk of a linear classifier with weight $\theta \in \mathbb{R}^N$, i.e., $\ell_\mathcal{D}(\theta) = \mathbb{E} [\ell( \langle \theta, x \rangle, y)]$, and let $\theta^*$ denote a minimizer of  $\ell_\mathcal{D}(\theta)$. Let $\ell^A_\mathcal{D}(\theta)$ denote the risk of a linear classifier with weight $\theta \in \mathbb{R}^d$ over $Ax$, 
i.e., $\ell^A_\mathcal{D}(\theta_A) = \mathbb{E} [\ell( \langle \theta_A, A x \rangle, y)]$, and let $\hat{\theta}_A$ denote the weight learned with $\ell_2$-regularization over $\{(Ax_i, y_i)\}_i$:
\begin{align}
\hat{\theta}_A = \arg\min_{\theta}  \frac{1}{M} \sum_{i=1}^M \ell(\langle \theta, Ax_i \rangle, y_i)  + \lambda \| \theta \|_2  
\end{align}
where $\lambda$ is the regularization coefficient. 

\begin{theorem}
[Restatement of Theorem 4.2 in~\citep{arora2018acompressed}]
\label{thm:compressedlearning}
Suppose $A$ is $(\Delta \mathcal{X}, \epsilon)$-RIP.
Then with probability at least $1-\delta$, 
\begin{align}
    \ell^A_\mathcal{D}(\hat{\theta}_A) 
    \le 
    \ell_\mathcal{D}(\theta^*) + O\left( \lambda_\ell B \|\theta^*\| \sqrt{\epsilon + \frac{1}{M} \log \frac{1}{\delta} } \right)
\end{align}
for appropriate choice of $\lambda$. Here, $\Delta \mathcal{X} = \{x  - x': x, x' \in \mathcal{X}\}$ for any $\mathcal{X} \subseteq \mathbb{R}^N$. 
\end{theorem}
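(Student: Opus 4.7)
The plan is to decompose the excess risk of $\hat{\theta}_A$ into an approximation term, controlled by the RIP property of $A$, and an estimation term, controlled by uniform convergence for the $\ell_2$-regularized ERM. The two summands under the square root in the bound, $\epsilon$ and $\frac{1}{M}\log\frac{1}{\delta}$, will come from these two sources respectively.

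For the approximation step, I would exhibit an explicit ``lifted'' candidate $\tilde{\theta} \in \mathbb{R}^d$ whose population risk in the compressed space is close to $\ell_\mathcal{D}(\theta^*)$. The idea is that only the component of $\theta^*$ in the span of $\mathcal{X}$ ever interacts with the data: write $\theta^* = \theta_\parallel + \theta_\perp$ with $\theta_\parallel \in \mathrm{span}(\mathcal{X})$, so that $\langle \theta^*, x\rangle = \langle \theta_\parallel, x\rangle$ for all $x \in \mathcal{X}$, and set $\tilde{\theta} = (A^\dagger)^\top \theta_\parallel$ where $A^\dagger$ is a left-inverse of $A$ on $\mathrm{span}(\mathcal{X})$. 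The $(\Delta\mathcal{X},\epsilon)$-RIP condition, together with a polarization identity applied to $x\pm x'$, gives $A^\top A \approx I$ (up to $O(\epsilon)$) on this subspace, which yields both $\|\tilde{\theta}\| \le (1+O(\epsilon))\|\theta^*\|$ and $|\langle \tilde{\theta}, Ax\rangle - \langle \theta^*, x\rangle| \le O(\sqrt{\epsilon})\,\|\theta^*\|\,\|x\|_2$ for all $x \in \mathcal{X}$. Combined with the $\lambda_\ell$-Lipschitzness of $\ell$ and the bound $\|x\|_2 \le B$, this gives $\ell^A_\mathcal{D}(\tilde{\theta}) \le \ell_\mathcal{D}(\theta^*) + O(\lambda_\ell B \|\theta^*\| \sqrt{\epsilon})$.

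For the estimation step, I would pick $\lambda$ of order $\lambda_\ell B / \sqrt{M}$ and use the optimality inequality
\begin{equation*}
\tfrac{1}{M}\textstyle\sum_i \ell(\langle \hat{\theta}_A, Ax_i\rangle, y_i) + \lambda\|\hat{\theta}_A\| \;\le\; \tfrac{1}{M}\textstyle\sum_i \ell(\langle \tilde{\theta}, Ax_i\rangle, y_i) + \lambda\|\tilde{\theta}\|
\end{equation*}
to conclude that $\|\hat{\theta}_A\|$ stays within $O(\|\theta^*\|)$ with high probability. A standard Rademacher-complexity (or covering) argument for linear predictors with bounded norm, applied to samples with $\|Ax_i\|_2 \le (1+\epsilon)B$ and composed with the $\lambda_\ell$-Lipschitz convex loss, produces the uniform deviation bound $|\ell^A_\mathcal{D}(\theta) - \ell^A_{\mathcal{S}}(\theta)| \le O(\lambda_\ell B \|\theta^*\|\sqrt{\log(1/\delta)/M})$ with probability $1-\delta$ for all such $\theta$. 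Applying this simultaneously to $\hat{\theta}_A$ and $\tilde{\theta}$, chaining with the approximation bound, and using $\sqrt{a}+\sqrt{b}\le\sqrt{2(a+b)}$, produces the claimed $O(\lambda_\ell B \|\theta^*\| \sqrt{\epsilon + \log(1/\delta)/M})$ excess risk.

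The main obstacle I expect is the construction of $\tilde{\theta}$ in the approximation step. RIP only controls $A$ on sparse differences $x - x' \in \Delta\mathcal{X}$, whereas $\theta^*$ can be an arbitrary vector in $\mathbb{R}^N$; the projection-then-invert trick is what converts RIP into a statement about arbitrary linear functionals evaluated on $\mathcal{X}$. The $\sqrt{\epsilon}$ dependence (rather than $\epsilon$) arises because RIP bounds squared norms $\|Ax\|_2^2 = (1\pm\epsilon)\|x\|_2^2$, and recovering a linear-level inner-product error from this requires polarization followed by Cauchy--Schwarz, which contributes the square root. Once the candidate is in hand, the tuning of $\lambda$ and the generalization step are routine.
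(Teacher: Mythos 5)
You correctly identified the crux (RIP only speaks about sparse vectors, while $\theta^*$ is arbitrary), but the device you propose to resolve it does not work, and this is a genuine gap rather than a fixable detail. You decompose $\theta^* = \theta_\parallel + \theta_\perp$ with $\theta_\parallel \in \mathrm{span}(\mathcal{X})$ and set $\tilde{\theta} = (A^\dagger)^\top \theta_\parallel$ for a left-inverse of $A$ on $\mathrm{span}(\mathcal{X})$, arguing that $(\Delta\mathcal{X},\epsilon)$-RIP gives $A^\top A \approx I$ on that subspace. But $\mathcal{X} = \{x : \|x\|_0 \le k, \|x\|_2 \le B\}$ contains $B e_j$ for every coordinate $j$, so $\mathrm{span}(\mathcal{X}) = \mathbb{R}^N$ (in particular $\theta_\perp = 0$ and the projection buys nothing); since $A \in \mathbb{R}^{d \times N}$ with $d \ll N$, no left-inverse exists on that span, and $A^\top A$ has rank at most $d$, so it cannot be $O(\epsilon)$-close to the identity there. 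The point is that RIP is a property of the quadratic form restricted to the \emph{set} of sparse vectors (and, via polarization, of inner products of pairs of sparse vectors); this set is not a subspace, and the near-isometry does not extend to its linear span. So the approximation step, which is the heart of the theorem, fails as written, and your otherwise routine estimation step has no legitimate compressed-domain candidate to anchor to. (For calibration: the paper itself does not prove this statement; it imports it verbatim as Theorem~\ref{thm:compressedlearning} from \citet{arora2018acompressed}, so the relevant comparison is with that external proof.)

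The proof in the cited work (following the compressed-learning argument of \citet{calderbank2009compressed}) avoids lifting $\theta^*$ altogether. It exploits the representer theorem / duality for $\ell_2$-regularized convex ERM: the regularized minimizer over the \emph{original} samples has the form $\theta = \sum_i \alpha_i x_i$, a combination of the sparse data points with controlled coefficients. For such $\theta$, the natural compressed-domain candidate is $A\theta$, and $\langle A\theta, Ax_j\rangle = \sum_i \alpha_i \langle Ax_i, Ax_j\rangle \approx \langle \theta, x_j\rangle$, because each pairwise inner product of elements of $\mathcal{X}$ is preserved up to $O(\epsilon B^2)$ by $(\Delta\mathcal{X},\epsilon)$-RIP together with polarization (using $x_i \pm x_j \in \Delta\mathcal{X}$, since $\mathcal{X}$ is symmetric under negation). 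One then compares $\hat{\theta}_A$ against this candidate via the optimality of the regularized compressed-domain ERM, and relates the original-domain regularized ERM to $\theta^*$ by standard stability/Rademacher bounds for regularized ERM, which is where $\sqrt{\log(1/\delta)/M}$ and the tuning of $\lambda$ enter. Your polarization-and-Cauchy--Schwarz intuition for the $\sqrt{\epsilon}$ dependence is sound, but it must be applied only to pairs of sparse vectors through the representer-theorem structure, not promoted to an operator identity on a subspace.
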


\subsection{Tools for the Proof of Theorem~\ref{thm:compressive}} \label{app:tool_lwayproduct}
For the proof, we concern about whether the $\ell$-way column product of $W$ has RIP. Existing results in the literature do not directly apply in our case.
But following the ideas in Theorem 4.3 in~\citep{kasiviswanathan2019restricted}, we are able to prove the following theorem for our purpose.

\begin{theorem} \label{thm:lwayproduct}
Let $X$ be an $n \times d$ matrix, and let $R$ be a $d \times N$ random matrix with independent entries $R_{ij}$ such that $\mathbb{E}[R_{ij}] = 0, \mathbb{E}[R^2_{ij}] = 1$, and $|R_{ij}| \le \tau$ almost surely. Let $t \ge 2$ be a constant. 
Let $\epsilon  \in (0,1)$, and let $k$ be an integer satisfying $\text{sr}(X) \ge \frac{C \tau^{4 t} k^3}{\epsilon^2}  \log \frac{N^\ell}{k}$  for some universal constant $C>0$. Then with probability at least $1 - \exp(-c \epsilon^2 \text{sr}(X) / (k^2 \tau^{4 t}))$  for some universal constant $c>0$, the matrix $X R^{[t]} / \| X \|_F $ is $(k, \epsilon)$-RIP. 
\end{theorem}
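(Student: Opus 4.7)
The plan is to establish the RIP by a standard pointwise-concentration plus $\epsilon$-net argument, adapted to the polynomial-chaos structure of $R^{[t]}$.

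\emph{Step 1 (fixed vector mean).} Fix a $k$-sparse unit vector $u\in\mathbb{R}^{N^t}$ and set $M := X^\top X$, so that
\begin{align}
    \|XR^{[t]}u\|_2^2 = \sum_{v,w\in\mathrm{supp}(u)} u_v u_w \sum_{i,j} M_{ij}\prod_{\ell=1}^t R_{i,v_\ell}R_{j,w_\ell},
\end{align}
a polynomial of degree $2t$ in the independent bounded variables $\{R_{ia}\}$. Using the mean-zero, unit-variance, row-independent structure of $R$, the cross terms with $i\neq j$ factor into products of expectations each of which vanishes unless every coordinate in the tuple appears with even multiplicity; the surviving $i=j$ contributions are dominated by the ``fully matched'' terms $v=w$ (with distinct coordinates), which sum to $\mathrm{tr}(M)\|u\|_2^2=\|X\|_F^2$. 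The residual ``partially matched'' contributions are bounded by $\tau^{2t}$ times combinatorial sums that can be absorbed into the RIP tolerance.

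\emph{Step 2 (tail bound).} I would then invoke a polynomial-chaos tail inequality---either a Lata\l{}a-type moment bound, or a decoupling argument followed by iterated Hanson--Wright applied $t$ times as in Kasiviswanathan--Rudelson---to control
\begin{align}
    \Pr\!\Bigl[\bigl|\|XR^{[t]}u\|_2^2 - \|X\|_F^2\bigr| \ge \epsilon\|X\|_F^2\Bigr].
\end{align}
The deviation is governed by the chaos coefficients, which involve $\|M\|_F$ and $\|M\|$. Since $\|M\|_F\le \|X\|\,\|X\|_F$ and $\|M\|=\|X\|^2$, the ratio $\|M\|/\|M\|_F \le \|X\|/\|X\|_F = 1/\sqrt{\mathrm{sr}(X)}$ enters the bound, yielding a tail of the form $\exp\!\bigl(-c\epsilon^2\,\mathrm{sr}(X)/(k^2\tau^{4t})\bigr)$; the $k^2$ factor comes from estimates like $\|u\|_1^2\le k\|u\|_2^2$ that naturally appear in the variance proxies of a chaos supported on a $k$-element set.

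\emph{Step 3 ($\epsilon$-net union bound).} A standard volumetric construction yields an $\epsilon_0$-net $\mathcal{N}$ of the $k$-sparse unit sphere in $\mathbb{R}^{N^t}$ with $\log|\mathcal{N}|\lesssim k\log(N^t/k)+k\log(1/\epsilon_0)$. Choosing $\epsilon_0=\Theta(\epsilon)$, applying Step 2 at each net point, and union-bounding gives total failure probability at most $\exp\!\bigl(-c\epsilon^2\mathrm{sr}(X)/(k^2\tau^{4t})+Ck\log(N^t/k)\bigr)$. Demanding this be small yields exactly the hypothesis $\mathrm{sr}(X)\ge C\tau^{4t}k^3\log(N^t/k)/\epsilon^2$, where the extra factor of $k$ comes from bootstrapping pointwise RIP on $\mathcal{N}$ to uniform RIP over the full $k$-sparse unit sphere via the triangle inequality. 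Normalizing by $\|X\|_F$ then gives the stated conclusion with the stated failure probability.

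\emph{Main obstacle.} The crux is Step~2: obtaining a tail inequality for the degree-$2t$ chaos $\|XR^{[t]}u\|_2^2$ that is simultaneously tight in $\mathrm{sr}(X)$, $\tau$, and $k$. The difficulty is that the entries of $R^{[t]}$ are themselves degree-$t$ monomials in $R$, so the chaos coefficients couple several combinatorial patterns---repeated coordinates within a single tuple, and coincidences between $v$ and $w$---and a naive application of Hanson--Wright would give the wrong powers of $\tau$ and $k$. One must decouple these patterns, essentially inductively in $t$ as in Kasiviswanathan--Rudelson's Theorem~4.3, in order for the polynomial $\mathrm{sr}$-threshold to come out cubic in $k$ rather than higher.
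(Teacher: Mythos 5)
Your overall architecture (pointwise concentration at a fixed $k$-sparse $u$, then an $\epsilon$-net union bound, with $\mathrm{sr}(X)$, $\tau^{4t}$, and $k^2$ entering the exponent and the extra factor of $k$ coming from the net) matches the paper's, but the step you yourself flag as the crux --- the tail bound for a fixed $u$ --- is exactly the step you have not supplied, and the route you propose for it (Lata\l{}a moment bounds, or decoupling plus iterated Hanson--Wright applied $t$ times to the degree-$2t$ chaos $\|XR^{[t]}u\|_2^2$) is not what is needed and is where the difficulty you describe would indeed bite: generic degree-$2t$ chaos inequalities produce mixed tail branches of the form $\exp(-c(\cdot)^{2/j})$, $j\le 2t$, and recovering the clean exponent $\epsilon^2\,\mathrm{sr}(X)/(k^2\tau^{4t})$ from them, with the correct powers of $k$ and $\tau$, is precisely the unresolved combinatorial bookkeeping you acknowledge. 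So as written the proposal has a genuine gap at its central step.

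The missing idea, which makes the whole chaos machinery unnecessary, is to exploit the structure of $R^{[t]}u$ coordinatewise rather than expanding $\|XR^{[t]}u\|_2^2$ as one big polynomial in all entries of $R$. Since the rows of $R$ are independent, the coordinates $y_\ell=\sum_{(i_1,\dots,i_t)}u_{(i_1,\dots,i_t)}\prod_{j=1}^t R_{\ell i_j}$ of $y=R^{[t]}u$ are independent across $\ell$; and since $|R_{\ell i}|\le\tau$ almost surely, each monomial is bounded by $\tau^t|u_{(i_1,\dots,i_t)}|$, so by the triangle inequality for the $\psi_2$ norm, $\|y_\ell\|_{\psi_2}=O(\tau^t\|u\|_1)=O(\tau^t\sqrt{k}\,\|u\|_2)$ using $k$-sparsity. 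One is then looking at $\|Xy\|_2$ for a fixed matrix $X$ and a random vector $y$ with independent, centered, unit-variance, subgaussian coordinates, and a single application of the subgaussian-concentration corollary of Hanson--Wright (Corollary~\ref{cor:subg}) gives $\Pr\bigl[\,|\|Xy\|_2-\|X\|_F|>\epsilon\|X\|_F\bigr]\le 2\exp\bigl(-c\epsilon^2\|X\|_F^2/(K^4\|X\|_2^2)\bigr)$ with $K^4\asymp\tau^{4t}k^2$, which is exactly the claimed pointwise tail in terms of $\mathrm{sr}(X)$; no decoupling or induction in $t$ is needed (this is also where your reversed inequality $\|M\|/\|M\|_F\le\|X\|/\|X\|_F$ becomes moot --- the relevant bound is $\|M\|_F\le\|X\|_2\|X\|_F$, used inside Hanson--Wright). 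Two further points you gloss: the passage from the net to all $k$-sparse unit vectors is not just a triangle-inequality bootstrap on net points; it additionally requires a high-probability bound on $\|XR^{[t]}_I\|_2$ uniformly over all $k$-column submatrices $I$ (the paper gets this from Corollary~\ref{cor:spectrum}, another Hanson--Wright consequence), and the extra factor of $k$ turning $k^2$ into $k^3$ in the $\mathrm{sr}(X)$ hypothesis comes from making the pointwise exponent beat the net cardinality $k\log(N^t/k)$ in the union bound, not from the bootstrap itself.
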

Here, $\text{sr}(X)= \| X\|_F^2 / \|X\|^2$ is the stable rank of $X$. In our case, we will apply the theorem with $X$ being $\mathbf{I}_{d \times d}/\sqrt{d}$ where $\mathbf{I}_{d \times d} \in \mathbb{R}^{d \times d}$ is the identity matrix. 

\begin{proof}[Proof of Theorem~\ref{thm:lwayproduct}]
The proof follows the idea in Theorem 4.3 in~\citep{kasiviswanathan2019restricted}. However, their analysis is for a different type of matrices ($\ell$-way Column Hadamard Product). We thus include a proof for our case for completeness.

Let $u \in \mathbb{R}^{d^t}$ be a vector with sparsity $k$, and its entries indexed by sequences ${(i_1, i_2, \ldots, i_t)} \in [d]^{\otimes t}$. Let $\ell \in [p]$, and define
\begin{align}
    y_\ell := \sum_{(i_1, i_2, \ldots, i_t) \in [d]^{\otimes t}  } u_{(i_1, i_2, \ldots, i_t)} \prod_{j=1}^t R_{\ell i_j}.
\end{align}
Note that the random variables $y_\ell (\ell \in [p])$ are independent. We will now estimate the $\psi_2$-norm of $y_\ell$ and then use the Hanson-Wright inequality (and its corollaries) with a net argument to establish the concentration for the norm of $X R^{[\ell]}u = X y$ where $y = (y_1, \ldots, y_p)$. 

Let $\mathrm{supp}(u)$ be the support of $u$. By the triangle inequality,
\begin{align}
    \left\|  y_\ell \right\|_{\psi_2} & = \left\|  \sum_{(i_1, i_2, \ldots, i_t) \in [d]^{\otimes t}  } u_{(i_1, i_2, \ldots, i_t)} \prod_{j=1}^t R_{\ell i_j} \right\|_{\psi_2} 
    \\
    & =  \sum_{(i_1, i_2, \ldots, i_t) \in \mathrm{supp}(u) }  \left\|  u_{(i_1, i_2, \ldots, i_t)} \prod_{j=1}^t R_{\ell i_j} \right\|_{\psi_2}
    \\
    & = O\left( \tau^t \|u\|_1 \right)
    \\
    & = O\left( \tau^t \sqrt{k}\|u\| \right).
\end{align}

Next, we choose an $(1/2C_2)$-net $\mathcal{N}$ in the set of all $k$-sparse vectors in $C^{d^t - 1}$ such that 
\begin{align}
    |\mathcal{N}| \le {\binom{d^t}{k}} (6C_2)^k \le \exp\left( k \log\left( \frac{C_0 d^t}{k} \right) \right).
\end{align}
Note that for any $k$-sparse vector $u \in C^{d^t -1}$, $y = R^{[t]}u = (y_1, \ldots, y_p)$ is a random vector with independent coordinates such that for any $\ell \in [p]$,
\begin{align}
    \mathbb{E}[y_\ell] = 0, \mathbb{E}[y_\ell^2] = \|u\|^2_2, \textrm{~and~} \|y_\ell\|_{\psi_2} \le C \tau^t \sqrt{k} \|u\|_2.
\end{align}
Then by Corollary~\ref{cor:subg}, for any fixed $u \in C^{d^t -1}$ with $|\mathrm{supp}(u)| \le k$ (and $y = R^{[t]}u$),
\begin{align}
    \Pr\left[ | \|Xy\|_2 - \|X\|_F| > \epsilon \|X\|_F \right] \le 2 \exp\left( -\frac{C\epsilon^2}{\max_{\ell} \|y_\ell\|^4_{\psi_2}} \text{sr}(X) \right) \le 2 \exp\left( - \frac{C_1 \epsilon^2}{\tau^{4t} k^2 } \text{sr}(X) \right).
\end{align}
Together with the union bound over $u \in \mathcal{N}$ and using the assumption on $\text{sr}(X)$, we have
\begin{align} \label{eq:net}
    \Pr\left[  \exists u \in \mathcal{N}, | \|X R^{[t]} u\|_2 - \|X\|_F| > \epsilon \|X\|_F  \right] \le \exp\left( k \log\left( \frac{C_0 d^t}{k} \right) \right) \cdot  2 \exp\left( - \frac{C_1 \epsilon^2}{\tau^{4t} k^2} \text{sr}(X)\right).
\end{align}

Finally, we extend the above argument from the net to all $k$-sparse vectors. 
From Corollary~\ref{cor:spectrum}, we have
\begin{align} \label{eq:dev}
    \Pr\left[ \exists I \subseteq [d]^{\otimes t}, |I| = k, \|X R^{[t]}_I\|_2 > C_1 \epsilon \|X\|_F \right] \le \exp\left( - \frac{c_1 \epsilon^2}{\tau^{4t} k^2} \text{sr}(X) \right).
\end{align}
First assume that the events in \eqref{eq:net} and \eqref{eq:dev} happen. Any $k$-sparse vector $u$ can be written as $u = a + b$, where $a \in \mathcal{N}$, and $b$ satisfies $|\mathrm{supp}(b)| \le k$ and $\|b\|_2 \le 1/(2C_1)$. Let $I_b = \mathrm{supp}(b) \subseteq [d]^{\otimes t}$ and let $\Tilde{b}$ be $b$ restricted to $I_b$. Let $R^{[t]}_{I_b}$ be the submatrix of $R^{[t]}$ with columns indexed by $I_b$. Then
\begin{align}
    \|X R^{[t]} u\|_2 & = \|X R^{[t]} a + X R^{[t]} b\|_2 
    \\
    & \le \|X R^{[t]} a\|_2 + \|X R^{[t]} b\|_2 
    \\
    & = \|X R^{[t]} a\|_2 + \|X R^{[t]}_{I_b} \Tilde{b}\|_2
    \\
    & \le \|X R^{[t]} a\|_2 + \|X R^{[t]}_{I_b}\|_2 \|\Tilde{b}\|_2
    \\
    & \le (1+\epsilon) \|X\|_F + \frac{1}{2C_2} \|X R^{[t]}_{I_b}\|_2 
    \\
    & \le (1+\epsilon_1) \|X\|_F
\end{align}
where the bound on $\|X R^{[t]} a\|_2 $ is from \eqref{eq:net} and the spectrum norm bound for $\|X R^{[t]}_{I_b}\|_2 $ is from \eqref{eq:dev}. Similarly, 
\begin{align}
    \|X R^{[t]} u\|_2 
    & \ge (1-\epsilon_2) \|X\|_F.
\end{align}
Adjusting the constants and removing the conditioning completes the proof. 
\end{proof}

For proving the above Theorem~\ref{thm:lwayproduct}, the Hanson-Wright Inequality and its corollaries are useful. We thus include them here for completeness.

\begin{theorem}[Hanson-Wright Inequality~\citep{rudelson2013hanson}] \label{thm:hanson-wright}
Let $x = (x_1, \ldots, x_n) \in \mathbb{R}^n$ be a random vector with independent components $x_i$ which satisfy $\mathbb{E}[x_i] = 0$ and $\|x_i\|_{\psi_2} \le K$. Let $M$ be an $n \times n$ matrix. Then for every $t \ge 0$,
\begin{align}
    \Pr\left[ \left| x^\top M x - \mathbb{E}[x^\top M x] \right| > t\right] \le 2 \exp\left( -c \min\left\{ \frac{t^2}{K^4\|M\|^2_F}, \frac{t}{K^2\|M\|_2}\right\}\right).
\end{align}
\end{theorem}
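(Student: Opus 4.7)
The plan is to establish the classical Hanson-Wright inequality by splitting the quadratic form into diagonal and off-diagonal pieces, bounding each with separate concentration tools, and combining them via the union bound. A preliminary symmetrization lets me replace $M$ with $(M+M^\top)/2$ without changing $x^\top M x$ and without increasing $\|M\|_F$ or $\|M\|_2$, so I may assume $M$ is symmetric. The decomposition
\begin{align*}
x^\top M x - \mathbb{E}[x^\top M x] = S_d + S_o,
\quad
S_d := \sum_i M_{ii}(x_i^2 - \mathbb{E}[x_i^2]),
\quad
S_o := 2\sum_{i<j} M_{ij} x_i x_j,
\end{align*}
then reduces the claim to a sub-Gaussian/sub-exponential tail for each of $S_d$ and $S_o$, combined by $\Pr[|S_d+S_o|>t]\le \Pr[|S_d|>t/2]+\Pr[|S_o|>t/2]$.

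For $S_d$ I would apply Bernstein's inequality: the summands $x_i^2-\mathbb{E}[x_i^2]$ are independent, centered, and sub-exponential of $\psi_1$-norm $O(K^2)$, by the standard fact that the square of a $\psi_2$-norm-$K$ sub-Gaussian is $\psi_1$-norm-$O(K^2)$ sub-exponential. The resulting bound has the correct scaling since $\sum_i M_{ii}^2\le \|M\|_F^2$ and $\max_i|M_{ii}|\le \|M\|_2$, giving the two-regime tail for $S_d$ for free.

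The core difficulty is $S_o$, and I would attack it by decoupling followed by a moment-generating-function calculation. Let $x'$ be an independent copy of $x$; the classical decoupling inequality for off-diagonal U-statistics gives
\begin{align*}
\mathbb{E}\exp\!\left(\lambda \sum_{i\ne j} M_{ij} x_i x_j\right)\le \mathbb{E}\exp\!\left(4\lambda \sum_{i\ne j} M_{ij} x_i x'_j\right).
\end{align*}
Conditioning on $x$, the inner expectation over the now-independent $x'$ is a sub-Gaussian MGF with coefficients $y_j:=\sum_{i\ne j} M_{ij} x_i$, hence bounded by $\exp(C\lambda^2 K^2 \|y\|_2^2)$. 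The catch is that $\|y\|_2^2 = x^\top M^{*\top}M^* x$ is itself a quadratic form in the sub-Gaussian vector $x$ (where $M^*$ is the off-diagonal part of $M$), so the MGF bound is self-referential — this is precisely where the argument could stall if handled naively.

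The hard part, then, is closing the self-referential loop, and I would do so by a moment-method argument for order-two chaos: for small $\lambda$, expand the outer exponential of $\lambda^2 K^2\|y\|_2^2$ in its Taylor series and control each moment of $\|y\|_2^2$ using the operator/Frobenius norm inequalities $\|M^{*\top} M^*\|_2\le \|M\|_2^2$ and $\|M^{*\top} M^*\|_F\le \|M\|_F\|M\|_2$. This yields $\mathbb{E}\exp(\lambda S_o)\le \exp(C\lambda^2 K^4\|M\|_F^2)$ valid for $|\lambda|\le c/(K^2\|M\|_2)$. Markov's inequality optimized over this range produces the advertised $\min\{t^2/(K^4\|M\|_F^2),\, t/(K^2\|M\|_2)\}$ tail for $S_o$, which combined with the diagonal bound finishes the proof. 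The principal obstacle throughout is this MGF closure step, since naive iteration of Hanson-Wright on $\|y\|_2^2$ would lead to a circular dependency on the very constants one is trying to establish.
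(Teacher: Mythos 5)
First, a point of comparison: the paper does not prove this statement at all. Theorem~\ref{thm:hanson-wright} is quoted verbatim from \citet{rudelson2013hanson} as part of the compressive-sensing toolbox used to prove Theorem~\ref{thm:lwayproduct}, so there is no in-paper argument to measure you against; your attempt has to be judged against the standard Rudelson--Vershynin proof.

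Your skeleton (symmetrize $M$, split into diagonal and off-diagonal parts, Bernstein for the diagonal, decoupling plus an MGF bound for the off-diagonal chaos) is exactly that standard skeleton, and those steps are fine. The gap is in the closure step that you yourself flag. After conditioning on $x$ you need $\mathbb{E}_x\exp\big(C\lambda^2K^2\,x^\top Bx\big)\le\exp\big(C'\lambda^2K^4\|M\|_F^2\big)$ on the full range $|\lambda|\le c/(K^2\|M\|_2)$, where $B=M^{*\top}M^*$. But ``controlling each moment of $\|y\|_2^2$'' in terms of $\|B\|_F$ and $\|B\|_2$ is precisely the moment form of Hanson--Wright for the PSD matrix $B$ --- that is the circularity, and your sketch gives no independent route to it. The only elementary moment bound available without it (triangle inequality over the terms $\langle b_j,x\rangle^2$, giving $\mathbb{E}(x^\top Bx)^p\le(CpK^2\Tr B)^p$) makes the Taylor series converge only for $|\lambda|\lesssim 1/(K^2\|M\|_F)$, and optimizing the Chernoff bound over that smaller range yields the weaker tail $\exp\big(-c\min\{t^2/(K^4\|M\|_F^2),\,t/(K^2\|M\|_F)\}\big)$, i.e., $\|M\|_F$ rather than $\|M\|_2$ in the linear regime, which is not the stated inequality. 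The standard way to close the loop is not a moment expansion but a Gaussian comparison: dominate the sub-Gaussian MGFs by Gaussian ones so that $x$ and $x'$ can be replaced, one at a time, by standard Gaussian vectors, and then compute the Gaussian bilinear chaos MGF exactly via the singular value decomposition and rotation invariance; that exact computation is what produces $\exp(C\lambda^2K^4\|M\|_F^2)$ for all $|\lambda|\le c/(K^2\|M\|_2)$. (The original Hanson--Wright combinatorial moment argument is an alternative, but it is a substantial computation your proposal does not supply.) As written, the off-diagonal step is therefore not yet a proof.
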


\begin{corollary}[Subgaussian Concentration~\citep{rudelson2013hanson}]\label{cor:subg}
Let $M$ be a fixed $n\times d$ matrix. Let $x = (x_1, \ldots, x_n) \in \mathbb{R}^n$ be a random vector with independent components $x_i$ which satisfies $\mathbb{E}[x_i]=0, \mathbb{E}[x_i^2] = 1$ and $\|x_i\|_{\psi_2}\le K$. Then for every $t \ge 0$,
\begin{align}
    \Pr[ | \|Mx\|_2 - \|M\|_F| > t] \le 2 \exp\left(  \frac{-ct^2}{K^4 \|M\|_2^2}\right).
\end{align}
\end{corollary}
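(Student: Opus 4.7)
The plan is to deduce the corollary from the Hanson--Wright inequality (Theorem~\ref{thm:hanson-wright}) by applying it to the quadratic form $x^\top A x$ with $A = M^\top M$, for which $x^\top A x = \|Mx\|_2^2$. First I would compute $\mathbb{E}[x^\top M^\top M x] = \operatorname{tr}(M^\top M) = \|M\|_F^2$, using independence together with $\mathbb{E}[x_i]=0$ and $\mathbb{E}[x_i^2]=1$. Next I would record the matrix-norm estimates $\|M^\top M\|_2 = \|M\|_2^2$ and $\|M^\top M\|_F \le \|M\|_2 \|M\|_F$ (the second following from $\|AB\|_F \le \|A\|_{\mathrm{op}} \|B\|_F$ applied with $A = M^\top$, $B = M$). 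Substituting these into Theorem~\ref{thm:hanson-wright} yields, for every $s \ge 0$,
\begin{align*}
\Pr\!\bigl[\,\bigl|\|Mx\|_2^2 - \|M\|_F^2\bigr| > s\bigr] \;\le\; 2 \exp\!\left(-c\, \min\!\left\{\frac{s^2}{K^4 \|M\|_2^2 \|M\|_F^2},\ \frac{s}{K^2 \|M\|_2^2}\right\}\right).
\end{align*}

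The bridge from this squared-norm concentration to the norm concentration is the elementary identity $|a^2 - b^2| = |a - b|(a + b)$ for $a,b \ge 0$. Splitting on the sign of $\|Mx\|_2 - \|M\|_F$, one checks that the event $\{|\|Mx\|_2 - \|M\|_F| > t\}$ forces $|\|Mx\|_2^2 - \|M\|_F^2| > 2t\|M\|_F + t^2$ in the upper-tail sub-event and $> 2t\|M\|_F - t^2$ in the lower-tail sub-event (the latter being vacuous once $t \ge \|M\|_F$). I would therefore choose $s := t\|M\|_F + t^2$ as a clean lower bound (up to a harmless constant absorbed into $c$) and invoke the display above with this choice of $s$.

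The final step is the algebraic verification that plugging $s = t\|M\|_F + t^2$ into the Hanson--Wright bound leaves an exponent that is uniformly dominated by $-c\, t^2/(K^4 \|M\|_2^2)$. The main obstacle is the two-regime case analysis. When $t \lesssim \|M\|_F$ the sub-Gaussian branch $s^2/(K^4 \|M\|_2^2 \|M\|_F^2) \asymp t^2/(K^4 \|M\|_2^2)$ is binding and the claim is immediate. When $t \gtrsim \|M\|_F$ the quadratic-in-$t$ summand inside $s$ becomes dominant, and the sub-exponential branch produces $s/(K^2 \|M\|_2^2) \gtrsim t^2/(K^2 \|M\|_2^2)$, which still exceeds $t^2/(K^4 \|M\|_2^2)$ provided $K \gtrsim 1$; the latter is essentially automatic, since $\mathbb{E}[x_i^2]=1$ forces $\|x_i\|_{\psi_2}$ to be bounded below by a universal constant. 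Tracking both regimes and folding the resulting constants into $c$ yields the stated sub-Gaussian tail for $\|Mx\|_2$ around $\|M\|_F$.
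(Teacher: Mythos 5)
Your proposal is correct and is essentially the same argument as in the cited source: the paper states this corollary without proof (quoting \citet{rudelson2013hanson}), and the standard derivation there is exactly what you do — apply Theorem~\ref{thm:hanson-wright} to $M^\top M$, use $|a^2-b^2|=|a-b|(a+b)$ to pass from the squared norm to the norm, bound $\|M^\top M\|_2\le\|M\|_2^2$ and $\|M^\top M\|_F\le\|M\|_2\|M\|_F$, and absorb the two regimes using $K\gtrsim 1$ (forced by $\mathbb{E}[x_i^2]=1$). Your choice $s=t\|M\|_F+t^2$ only undershoots the lower-tail bound $2t\|M\|_F-t^2$ by a factor of $2$ in the range $t\in(\|M\|_F/2,\|M\|_F)$, which, as you note, is harmlessly absorbed into the constant $c$.
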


\begin{corollary}[Spectrum Norm of the Product~\citep{rudelson2013hanson}]\label{cor:spectrum}
Let $B$ be a fixed $n\times p$ matrix, and let $G=(G_{ij})$ be a $p\times d$ matrix with independent entries that satisfy: $\mathbb{E}[G_{ij}]=0, \mathbb{E}[G_{ij}^2] = 1$, and $\|G_{ij}\|_{\psi_2}\le K$. Then for any $a, b > 0$,
\begin{align}
    \Pr[  \|BG\|_2 > CK^2 (a \|B\|_F + b\sqrt{d} \|B\|_2)] \le 2 \exp\left(  -a^2 \text{sr}(B) - b^2 d\right).
\end{align}
\end{corollary}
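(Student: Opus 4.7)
The plan is to reduce Corollary~\ref{cor:spectrum} to a combination of the Subgaussian Concentration bound (Corollary~\ref{cor:subg}) and a standard $\epsilon$-net argument on the sphere. First, I would rewrite the operator norm as
\begin{align*}
  \|BG\|_2 = \sup_{x \in S^{d-1}} \|BGx\|_2,
\end{align*}
and observe that, for each fixed $x \in S^{d-1}$, the vector $Gx$ has $p$ independent coordinates $(Gx)_i = \sum_{j=1}^d G_{ij}\, x_j$ satisfying $\mathbb{E}[(Gx)_i] = 0$, $\mathbb{E}[(Gx)_i^2] = \|x\|_2^2 = 1$, and $\|(Gx)_i\|_{\psi_2} \le C_0 K$ by the standard bound on subgaussian norms of sums of independent subgaussians. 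Applying Corollary~\ref{cor:subg} with $M = B$ and the random vector $Gx$ then yields, for every $s \ge 0$,
\begin{align*}
  \Pr\!\left[\, \bigl|\,\|BGx\|_2 - \|B\|_F\,\bigr| > s \,\right] \le 2 \exp\!\left( \frac{-c\, s^2}{K^4 \|B\|_2^2} \right).
\end{align*}

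Next, I would discretize via a $\tfrac12$-net $\mathcal{N} \subseteq S^{d-1}$ of cardinality $|\mathcal{N}| \le 5^d$, so that $\|BG\|_2 \le 2\, \sup_{x \in \mathcal{N}} \|BGx\|_2$, and take a union bound over $\mathcal{N}$:
\begin{align*}
  \Pr\!\left[ \sup_{x \in \mathcal{N}} \|BGx\|_2 > \|B\|_F + s \right] \le 2 \cdot 5^d \exp\!\left( \frac{-c\, s^2}{K^4 \|B\|_2^2} \right).
\end{align*}

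The final step is parameter tuning: I would pick $s = C_1 K^2 \bigl( a\,\|B\|_F + b\sqrt{d}\,\|B\|_2 \bigr)$. Using the elementary inequality $(u+v)^2 \ge u^2 + v^2$ for $u,v \ge 0$, the exponent becomes at most
\begin{align*}
  -c\, C_1^2 \left( a^2 \frac{\|B\|_F^2}{\|B\|_2^2} + b^2 d \right) = -c\, C_1^2 \bigl( a^2\, \mathrm{sr}(B) + b^2 d \bigr),
\end{align*}
and for $C_1$ large enough the net entropy $d \log 5$ is absorbed into the $b^2 d$ term while the factor of $2$ from the net passes into the leading absolute constant $C$ in front of the bound. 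Rescaling constants then gives the stated inequality.

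The main obstacle will be the cleanness of the parameter balancing, since the corollary asks for an exponent that is \emph{separately} additive in $a^2 \mathrm{sr}(B)$ and $b^2 d$, rather than these two contributions being bundled into a single deviation parameter. Handling this cleanly requires writing $s$ as a sum of two scale-matched terms as above and exploiting $(u+v)^2 \ge u^2 + v^2$ to decouple the contributions of $\|B\|_F$ (which sets the scale at which the \emph{concentration} kicks in) and $\sqrt{d}\,\|B\|_2$ (which sets the scale at which the \emph{net entropy} is controlled). A minor bookkeeping point is absorbing the factor of $2$ from the $\tfrac12$-net step and the baseline $\|B\|_F$ term into the leading constant $C$; both can be handled by enlarging $C$ and noting that for $a,b$ below an absolute constant the claimed probability bound is vacuous.
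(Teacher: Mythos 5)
The paper itself supplies no proof of this corollary: it is quoted as a toolbox result directly from \citet{rudelson2013hanson}, so the relevant comparison is with the proof in that reference, and your route is exactly that standard argument --- fix $x\in S^{d-1}$, observe that $Gx$ has independent, mean-zero, unit-variance coordinates with $\psi_2$-norm $O(K)$ (independence holds because the rows of $G$ are independent), apply \Cref{cor:subg} with $M=B$ and the vector $Gx\in\mathbb{R}^p$, then pass to a $\tfrac12$-net of $S^{d-1}$, union bound, and tune $s=C_1K^2\bigl(a\|B\|_F+b\sqrt{d}\,\|B\|_2\bigr)$ so that $(u+v)^2\ge u^2+v^2$ decouples the exponent into $a^2\,\mathrm{sr}(B)+b^2 d$. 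All of these reduction steps are sound.

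The gap is in your final bookkeeping claim. Absorbing the net entropy $d\log 5$ into the $b^2d$ term, and the baseline $\|B\|_F$ (coming from $\|BG\|_2\le 2\sup_{x\in\mathcal{N}}\|BGx\|_2\le 2(\|B\|_F+s)$) into the $CK^2a\|B\|_F$ term, both require $a$ and $b$ to be bounded below by an absolute constant, and your fallback --- that for $a,b$ below an absolute constant the claimed bound is vacuous --- is false. Take $B$ the $n\times n$ identity, $G$ Gaussian (so $K$ is an absolute constant), and $a=b=c_0$ small enough that $CK^2c_0<1$: then $\|BG\|_2=\|G\|_2\approx\sqrt{n}+\sqrt{d}$ with probability close to $1$, so the event $\|BG\|_2>CK^2(a\|B\|_F+b\sqrt{d}\|B\|_2)$ is overwhelmingly likely, while the right-hand side $2\exp(-a^2\,\mathrm{sr}(B)-b^2d)=2\exp(-c_0^2(n+d))$ is exponentially small for large $n,d$. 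Hence no argument can close this case: the statement as transcribed in the paper (``for any $a,b>0$'') overstates the original, whose Theorem 3.2 assumes the two parameters are at least $1$. Your proof is correct once you add the hypothesis $a,b\ge 1$ (or $\ge$ a fixed constant); under it, choosing $C_1$ with $cC_1^2\ge \log 5+1$ absorbs the entropy term, and $K^2\gtrsim 1$ (forced by $\mathbb{E}[G_{ij}^2]=1$) lets the $2\|B\|_F$ baseline be swallowed by $CK^2a\|B\|_F$. State that restriction explicitly rather than appealing to vacuousness.
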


\section{Dataset Licenses.} The Delaney~\citep{delaney_esol:_2004}, CEP~\citep{hachmann_harvard_2011}, QM7~\citep{blum2009970}, QM9~\citep{ruddigkeit2012enumeration}, MUV~\citep{rohrer2009maximum}, and \textsc{Mutagenicity}~\citep{kazius2005derivation} datasets are all licensed under the Copyright © of the American Chemical Society (ACS) which allows free usage of the data and materials appearing in public domain articles without any permission. The QM8~\citep{ramakrishnan2015electronic} dataset is under Creative Commons Attribution (CC BY) license of the American Institute of Physics (AIP) Publishing LLC requiring no permission from the authors and publisher for using publicly released data from the paper. The ClinTox~\citep{gayvert2016data} dataset is under the Copyright © of Elsevier Ltd. which permits usage of public domain works and open access content without author permissions. The Malaria~\citep{gamo_thousands_2010} dataset is licensed under Copyright © of Macmillan Publishers Limited that allows usage for personal and noncommercial use. The Tox21~\citep{tox21} dataset was released by NIH National Center for Advancing Translational Sciences for free public usage as a part of a`crowdsourced' data analysis challenge. The HIV~\citep{hiv_2017} dataset was released by NIH National Cancer Institute (NCI) for public usage without any confidentiality agreement which allows access to chemical structural data on compounds. The \textsc{IMDB-BINARY}, \textsc{IMDB-MULTI}, \textsc{REDDIT-BINARY}, \textsc{COLLAB}~\citep{yanardag2015deep} datasets are licensed under ACM Copyright © 2015 under Creative Commons License that allows free usage for non-commercial academic purposes.


\end{document}